\title[More Adaptive Algorithms for Adversarial Bandits]{More Adaptive Algorithms for Adversarial Bandits}
\newcommand{\p}{\prime}
\DeclareMathOperator*{\argmin}{argmin}
\newcommand{\inner}[1]{ \left\langle {#1} \right\rangle }
\newcommand{\inn}[1]{ \langle {#1} \rangle }
\newcommand{\absolute}[1]{ \left\lvert {#1} \right\rvert }
\newcommand{\abs}[1]{ \lvert {#1} \rvert }
\newcommand\norm[1]{\left\lVert#1\right\rVert}
\newcommand{\reg}{\text{\rm Reg}}
\newtheorem{cor}[theorem]{Corollary}
\begin{document}

\maketitle

\begin{abstract}
We develop a novel and generic algorithm for the adversarial multi-armed bandit problem (or more generally the combinatorial semi-bandit problem).
When instantiated differently, our algorithm achieves various new data-dependent regret bounds improving previous work.
Examples include:
1) a regret bound depending on the variance of only the best arm;
2) a regret bound depending on the first-order path-length of only the best arm;
3) a regret bound depending on the sum of the first-order path-lengths of all arms as well as an important negative term, which together lead to faster convergence rates for some normal form games with partial feedback;
4) a regret bound that simultaneously implies small regret when the best arm has small loss {\it and} logarithmic regret when there exists an arm whose expected loss is always smaller than those of other arms by a fixed gap (e.g. the classic i.i.d. setting).
In some cases, such as the last two results, our algorithm is completely parameter-free.

The main idea of our algorithm is to apply the optimism and adaptivity techniques to the well-known Online Mirror Descent framework
with a special log-barrier regularizer. 
The challenges are to come up with appropriate optimistic predictions and correction terms in this framework.
Some of our results also crucially rely on using a sophisticated increasing learning rate schedule.
\end{abstract}

\begin{keywords}
multi-armed bandit, semi-bandit, adaptive regret bounds, optimistic online mirror descent, increasing learning rate
\end{keywords}

\section{Introduction}

The adversarial Multi-Armed Bandits (MAB) problem~\citep{auer2002nonstochastic} is a classic online learning problem with partial information feedback.
In this problem, at each round the learner selects one of the $K$ arms while simultaneously the adversary decides the loss of each arm,
then the learner suffers and observes (only) the loss of the picked arm.
The goal of the learner is to minimize the regret, that is, the difference between her total loss and the total loss of the best fixed arm.
The classic Exp3 algorithm~\citep{auer2002nonstochastic} achieves a regret bound of order $\tilde{\mathcal{O}}(\sqrt{TK})$ after $T$ rounds,\footnote{%
Throughout the paper we use the notation $\tilde{\mathcal{O}}(\cdot)$ to suppress factors that are poly-logarithmic in $T$ and $K$.}
which is worst-case optimal up to logarithmic factors.

There are several existing works on deriving more adaptive bandit algorithms, replacing the dependence on $T$ in the regret bound
by some data-dependent quantity that is $\mathcal{O}(T)$ in the worst-case but could be potentially much smaller in benign environments.
Examples of such data-dependent quantities include the loss of the best arm~\citep{allenberg2006hannan, foster2016learning} or the empirical variance of all arms~\citep{hazan2011better, bubeck2017sparsity}. 
Extensions to more general settings such as semi-bandit, two-point bandit, and graph bandit have also been studied~\citep{neu2015first, chiang2013beating, lykouris2017small}.
These adaptive algorithms not only enjoy better performance guarantees,
but also have important applications for other areas such as game theory~\citep{foster2016learning}.

In this work, we propose a novel and generic bandit algorithm in the more general semi-bandit setting
(formally defined in Section~\ref{section:notations}).
By instantiating this generic algorithm differently,
we obtain various adaptive algorithms with new data-dependent expected regret bounds that improve previous work.
When specified to the MAB setting with $\ell_{t,i} \in [-1,1]$ denoting the loss of arm $i$ at time $t$ (and $\ell_{0,i} \triangleq 0$),
these bounds replace the dependence on $T$ by (also see Table~\ref{table:summary} for a summary):
\begin{itemize}
\item 
$\sum_{t=1}^T (\ell_{t,i^\star} - \frac{1}{T}\sum_{s=1}^T\ell_{s,i^\star})^2$,
that is, the (unnormalized) variance of the best arm $i^\star$.
Similar existing bounds of~\citep{hazan2011better, hazan2011simple, bubeck2017sparsity} replace $T$ by the average of the variances of all arms.
In general these two are incomparable. 
However, note that the variance of the best arm is always bounded by $K$ times the average variance,
while it is possible that the latter is of order $\Theta(T)$ and the former is only $\mathcal{O}(1)$. (Section~\ref{subsubsection:variation bound})

\item
$K\sum_{t=1}^T |\ell_{t,i^\star} - \ell_{t-1,i^\star}|$, that is, ($K$ times) the first-order path-length of the best arm. (Section~\ref{subsubsection:path-length})

\item
$\sum_{i=1}^K\sum_{t=1}^T |\ell_{t,i} - \ell_{t-1,i}|$, that is, the sum of the first-order path-lengths of all arms.
Importantly, there is also an additional negative term in the regret similar to the one of~\citep{syrgkanis2015fast} for the full information setting.
This implies a fast convergence rate of order $1/T^\frac{3}{4}$ for several game playing settings with bandit feedback. 
(Sections~\ref{subsection:first_order_better_k})

\item
A new quantity in terms of some second-order excess loss (see Eq.~\eqref{eqn:new_excess_loss_bound} for the exact form). 
While the bound is not easy to interpret on it own, 
it in fact automatically and simultaneously implies the so-called ``small-loss'' bound $\tilde{\mathcal{O}}\Big(\sqrt{K\sum_{t=1}^T \ell_{t,i^\star}}\Big)$,\footnote{%
Assuming that losses are non-negative in this case as it is common for small-loss bounds.
}
{\it and} logarithmic regret $\mathcal{O}(\frac{K\ln T}{\Delta})$ if there is an arm whose expected loss is always smaller than those of other arms by a fixed gap $\Delta$
(e.g. the classic i.i.d. MAB setting~\citep{lai1985asymptotically}). (Section~\ref{section:best of both worlds})

\end{itemize}

These bounds are incomparable in general. 
All of them have known counterparts in the full information setting (see for example~\citep{steinhardt2014adaptivity} and~\citep{de2014follow}),
but are novel in the bandit setting to the best of our knowledge.
Note that for the first two results that depend on some quantities of only the best arm, 
we require tuning a learning rate parameter in terms of these (unknown) quantities.
Obtaining the same results with parameter-free algorithms remains open, even for the full information setting.
However, for the other results, we indeed provide parameter-free algorithms based on a variant of the doubling trick.

Our general algorithm falls into the Online Mirror Descent (OMD) framework (see for example~\citep{hazan2016introduction})
with the ``log-barrier'' as the regularizer, originally proposed in~\citep{foster2016learning}.
However, to obtain our results, two extra crucial ingredients are needed:
\begin{itemize}
\item
First, we adopt the ideas of optimism and adaptivity from~\citep{steinhardt2014adaptivity},
which roughly speaking amounts to incorporating a correction term as well as an optimistic prediction into the loss vectors.
In~\citep{steinhardt2014adaptivity}, this technique was developed in the Follow-the-Regularized-Leader (FTRL) framework,\footnote{%
Although it was confusingly referred as OMD in~\citep{steinhardt2014adaptivity}.}
but it is in fact crucial here to re-derive it in the OMD framework (due to the next ingredient).
The challenges here are to come up with the right correction terms and optimistic predictions.

\item
Second, we apply an individual and increasing learning rate schedule for one of the path-length results.
Such increasing learning rate schedule was originally proposed in~\citep{bubeck2016kernel} and also recently used in~\citep{agarwal2017corralling},
but for different purposes.

\end{itemize}

Although most algorithmic techniques we use in this work have been studied before,
combining all of them, in the general semi-bandit setting, requires novel and non-trivial analysis.
The use of log-barrier in the semi-bandit setting is also new as far as we know.

\paragraph{Related work.}
There is a rich literature in deriving adaptive algorithms and regret bounds for online learning with full information feedback
(see recent work~\citep{luo2015achieving, koolen2015second, van2016metagrad, orabona2016coin, cutkosky2017online} and references therein),
as well as the stochastic bandit setting (such as~\citep{garivier2011kl, lattimore2015optimally, degenne2016anytime}).
Similar results for the adversarial bandit setting, however, are relatively sparse and have been mentioned above.
While obtaining regret bounds that depend on the quality of the best action is common in the full information setting,
it is in fact much more challenging in the bandit setting, 
and the only existing result of this kind is the ``small-loss'' bound~\citep{allenberg2006hannan, foster2016learning}.
We hope that our work opens up more possibilities in obtaining these results,
despite some recent negative results discovered by~\citet{gerchinovitz2016refined}.

\citet{chiang2013beating} proposed bandit algorithms with second-order path-length bounds, but their work requires stronger two-point feedback.
The implication of path-length regret bounds on faster convergence rate for computing equilibriums was studied in~\citep{syrgkanis2015fast}.
Other examples of adaptive online learning leading to faster convergence in game theory include~\citep{rakhlin2013optimization, daskalakis2015near, foster2016learning}.

There exist several bandit algorithms that achieve almost optimal regret in both the adversarial setting ($\mathcal{O}(\sqrt{TK})$) and 
the i.i.d. setting ($\mathcal{O}(\sum_{i: \Delta_i \neq 0}\frac{\ln T}{\Delta_i})$ where $\Delta_i$ is the gap between the expected loss of arm $i$ and the one of the optimal arm)~\citep{bubeck2012best, seldin2014one, auer2016algorithm, seldin2017improved}.
Our results in Section~\ref{section:best of both worlds} have slightly weaker guarantee for the i.i.d. setting (at most $K$ times worse specifically)
since it essentially replaces all $\Delta_i$ by $\min_{i: \Delta_i \neq 0} \Delta_i$.
On the other hand, however, our results have several advantages compared to previous work.
First, our guarantee for the adversarial setting is stronger since it replaces the dependence on $T$ by the loss of the best arm.
Second, our logarithmic regret result applies to not just the simple i.i.d. setting, 
but the more general setting mentioned above where neither independence nor identical distributions is required.
Our dependence on $\ln T$ is also better than previous works,
resolving an open problem raised by~\citet{seldin2017improved}.
Finally, our algorithm and analysis are also arguably much simpler, without performing any stationarity detection or gap estimation.
Indeed, the result is in some sense algorithm-independent and solely through a new adaptive regret bound Eq.~\eqref{eqn:new_excess_loss_bound},
similar to the results in the full-information setting such as~\citep{gaillard2014second}.

Using a self-concordant barrier as regularizer was proposed in the seminal work of~\citep{abernethy2008competing} for general linear bandit problems.
The log-barrier is technically not a barrier for the decision set of the semi-bandit problem, 
but still it exhibits many similar properties as shown in our proofs.
Optimistic FTRL/OMD was developed in~\citep{chiang2012online, rakhlin2013online}.
As pointed out in~\citep{steinhardt2014adaptivity}, incorporating correction terms in the loss vectors can also be viewed as using adaptive regularizers,
which was studied in several previous works, mostly for the full information setting (see~\citep{mcmahan2017survey} for a survey).

\section{Problem Setup and Algorithm Overview}
\label{section:notations}
We consider the combinatorial bandit problem with semi-bandit feedback, which subsumes the classic multi-armed bandit problem. The learning process proceeds for $T$ rounds. In each round, the learner selects a subset of arms, denoted by a binary vector $b_t$ from a predefined action set $\mathcal{X}\subseteq \{0,1\}^K$, and suffers loss $b_t^\top \ell_t$, where $\ell_t \in [-1,1]^K$ is a loss vector decided by an adversary. The feedback received by the learner is the vector $(b_{t,1}\ell_{t,1}, \ldots, b_{t,K}\ell_{t,K})$,
or in other words, the loss of each chosen arm. For simplicity, we assume that the adversary is oblivious and the loss vectors $\ell_1, \ldots, \ell_T$ are decided ahead of time independent of the learner's actions.

The learner's goal is to minimize the {\it regret}, which is the gap between her accumulated loss and that of the best fixed action $b^*\in\mathcal{X}$. Formally the regret is defined as
\begin{align*}
\reg_T\triangleq \sum_{t=1}^T b_t^\top \ell_{t}- \sum_{t=1}^T b^{*\top}\ell_{t}, \text{ where } b^*\triangleq \min_{b\in\mathcal{X}} \sum_{t=1}^T b^\top \ell_{t}. 
\end{align*}

In the special case of multi-armed bandit, the action set $\mathcal{X}$ is $\{\mathbf{e}_1, \mathbf{e}_2, \ldots, \mathbf{e}_K\}$ where $\mathbf{e}_i$ denotes the $i$-th standard basis vector. In other words, in each round the learner picks one arm $i_t \in [K] \triangleq \{1,2,\ldots,K\}$ (corresponding to $b_t = \mathbf{e}_{i_t}$), and receives the loss $\ell_{t,i_t}$. We denote the best arm by $i^* \triangleq \min_{i \in [K]} \sum_{t=1}^T \ell_{t, i}$.

%The OMD framework relies heavily on the notion of Bregman divergence. 
\paragraph{Notation.}
For a convex function $\psi$ defined on a convex set $\Omega$, the Bregman divergence of two points $u, v\in \Omega$ with respect to $\psi$ is defined as $D_{\psi}(u,v)\triangleq\psi(u)-\psi(v)-\inn{\nabla\psi(v), u-v}$. The log-barrier used in this work is of the form $\psi(u)=\sum_{i=1}^K \frac{1}{\eta_i}\ln \frac{1}{u_i}$ for some learning rates $\eta_1, \ldots, \eta_K \geq 0$ and $u \in \text{conv}(\mathcal{X})$, the convex hull of $\mathcal{X}$. With $h(y)\triangleq y-1-\ln y$, the Bregman divergence with respect to the log-barrier is: 
%\begin{align*}
$D_{\psi}(u,v)=\sum_{i=1}^K \frac{1}{\eta_i} \left(\ln \frac{v_i}{u_i} + \frac{u_i-v_i}{v_i}\right)=\sum_{i=1}^K \frac{1}{\eta_i} h\left(\frac{u_i}{v_i}\right).$
%\end{align*}
%where $h(y)\triangleq y-1-\ln y$. %Note that $h(y)$ is always positive, and is increasing when $y\geq 1$. 

%Other notations we use in the paper are as below: $[N]$ denotes $\{1,2,\ldots, N\}$; for any $v=(v_1, \ldots, v_K)\in \mathbb{R}^K$, $v^2$ denotes the vector $(v_1^2, \ldots, v_K^2)$; for a binary vector $b\in \{0,1\}^K$, we write $i\in b$ if $b_i=1$; $\mathbf{1}$ is the all-one vector $(1,1,\ldots, 1)$; $\text{conv}(\mathcal{X})$ is the convex hull of $\mathcal{X}$; $\tilde{\mathcal{O}}(\cdot)$ is the big-O notation that can hide $(\ln T)^n$ factors for $n\in \mathbb{R}_+$, and $\tilde{\Omega}$ is the big-$\Omega$ notation that hides $(\ln T)^{-n}$. Although we also use $\Omega$ to denote $\text{conv}(\mathcal{X})$, it should be clear from the contexts which one we are referring to. 

The all-zero and all-one vector are denoted by $\mathbf{0}$ and $\mathbf{1}$ respectively.
$\Delta_K$ represents the ($K-1$)-dimensional simplex.
For a binary vector $b$ we write $i\in b$ if $b_i=1$. 
Denote by $K_0 = \max_{b \in \mathcal{X}}\|b\|_0$ the maximum number of arms an action in $\mathcal{X}$ can pick.
Note that for MAB, $K_0$ is simply $1$.

We define $\ell_0 = \mathbf{0}$ for notational convenience. 
At round $t$, for an arm $i$ we denote its accumulated loss by $L_{t,i}\triangleq \sum_{s=1}^t \ell_{s,i}$,
its average loss by $\mu_{t,i} \triangleq \frac{1}{t}L_{t,i}$,
its (unnormalized) variance by $Q_{t,i}\triangleq \sum_{s=1}^t (\ell_{s,i}-\mu_{t,i})^2$,
and its first-order path-length by $V_{t,i}\triangleq \sum_{s=1}^t \absolute{\ell_{s,i}-\ell_{s-1,i}}$. 
%and its second-order path-length by $D_{t,i}\triangleq \sum_{s=1}^t (\ell_{s,i}-\ell_{s-1,i})^2$.
For MAB, we define $\alpha_i(t)$ to be the most recent time when arm $i$ is picked prior to round $t$ ,
that is, $\alpha_i(t) = \max\{s < t : i_s = i\}$ (or $0$ if the set is empty).
%Similarly, in the analysis we define $\beta_i(t)$ to be the next time when arm $i$ is picked starting from round $t$,
%that is, $\beta_i(t) = \min\{s \geq t : i_s = i\}$ (or $T+1$ if the set is empty). 

\subsection{Algorithm Overview}
As mentioned our algorithm falls into the OMD framework that operates on the set $\Omega = \text{conv}(\mathcal{X})$.
The vanilla OMD formula for the bandit setting is $w_{t} = \argmin_{w\in\Omega} \{ \inn{w, \hat{\ell}_{t-1}}+D_{\psi}(w,w_{t-1}) \}$
for some regularizer $\psi$ and some (unbiased) estimator $\hat{\ell}_{t-1}$ of the true loss $\ell_{t-1}$.
The learner then picks an action $b_t$ randomly such that $\mathbb{E}[b_t] = w_t$, and constructs the next loss estimator $\hat{\ell}_t$ based on the bandit feedback.
Our algorithm, however, requires several extra ingredients. The generic update rule is
\begin{align}
w_t &= \argmin_{w\in\Omega} \left\{ \inn{w, m_t}+D_{\psi_t}(w,w_t^\p) \right\},\label{eqn:update_rule_1}\\ 
w_{t+1}^\p &= \argmin_{w\in\Omega} \left\{ \inn{w, \hat{\ell}_t+a_t}+D_{\psi_t}(w,w_t^\p) \right\}  \label{eqn:update_rule_2}.
\end{align}

\begin{algorithm}[t]
\DontPrintSemicolon
\caption{\small \textbf{B}arrier-\textbf{R}egularized with \textbf{O}ptimism and \textbf{AD}aptivity \textbf{O}nline \textbf{M}irror \textbf{D}escent (\textsc{\textbf{Broad-OMD}})}
\label{alg:general}
\textbf{Define}: $\Omega=\text{conv}(\mathcal{X})$, $\psi_t(w)=\sum_{i=1}^K \frac{1}{\eta_{t,i}}\ln\frac{1}{w_i}$. \\
\textbf{Initialize}: $w_1^\p = \argmin_{w\in \Omega}\psi_1(w)$.\\
\For{$t=1, 2, \ldots, T$}{
   $w_t = \argmin_{w\in\Omega} \big\{ \inner{w,m_t} + D_{\psi_t}(w, w_t^\p)\big\}$. \\
   Draw $b_t\sim w_t$, suffer loss $b_t^\top \ell_t$, and observe $\{b_{t,i}\ell_{t,i}\}_{i=1}^K$. \\
   Construct $\hat{\ell}_t$ as an unbiased estimator of $\ell_t$. \\
   Let $a_{t,i}=\begin{cases}
       6\eta_{t,i}w_{t,i}(\hat{\ell}_{t,i}-m_{t,i})^2,  &\text{(Option I)}\\
       0. &\text{(Option II)}
       \end{cases}$\\
   $w_{t+1}^\p=\argmin_{w\in\Omega} \big\{ \langle w,\hat{\ell}_t+ a_t\rangle  +D_{\psi_t}(w, w^\p_t) \big\}.$ 
}    
\end{algorithm}

 \renewcommand{\arraystretch}{1.4}
\begin{table}[t] 
  \centering
  \caption{Different configurations of \textsc{Broad-OMD} and regret bounds for MAB. 
  See Section~\ref{section:notations} and the corresponding sections for the meaning of notation. 
  For the last two rows, to obtain parameter-free algorithms one needs to apply a doubling trick to decrease the learning rate.}
  \begin{threeparttable}
  \begin{tabular}{ | c | c | c | c | c | c | }
    \hline
    Sec. & Option & $m_{t,i}$ & $\hat{\ell}_{t,i}$ & $\eta_{t,i}$ & $\mathbb{E}[\reg_T]$ in $\tilde{\mathcal{O}}$ \\ \hline
    \ref{subsubsection:variation bound} & I & $\tilde{\mu}_{t-1,i}$ & $\frac{(\ell_{t,i}-m_{t,i})\mathbbm{1}\{i_t=i\}}{w_{t,i}}+m_{t,i}$ & fixed & $\sqrt{KQ_{T,i^*}}$ \\ \hline
    \ref{subsubsection:path-length} &  I & $\ell_{\alpha_i(t),i}$ & $\frac{(\ell_{t,i}-m_{t,i})\mathbbm{1}\{i_t=i\}}{\bar{w}_{t,i}}+m_{t,i}$ & increasing & $K\sqrt{V_{T,i^*}}$ \\ \hline
    \ref{subsection:first_order_better_k} & II & $\ell_{\alpha_i(t),i}$ & $\frac{(\ell_{t,i}-m_{t,i})\mathbbm{1}\{i_t=i\}}{w_{t,i}}+m_{t,i}$ & fixed & $ \sqrt{K\sum_{i=1}^K V_{T,i}}$ \\  \hline
   \ref{section:best of both worlds} & II & $\ell_{t,i_t}$ &  $\frac{\ell_{t,i}\mathbbm{1}\{i_t=i\}}{w_{t,i}}$ & fixed & $\min\{ \sqrt{KL_{T,i^*}}, \frac{K}{\Delta}\}$ \\ \hline
  \end{tabular}
  \end{threeparttable}
  \label{table:summary}
\end{table}

Here, we still play randomly according to $w_t$, which is now updated to minimize its loss with respect to $m_t \in [-1,1]^K$, 
an {\it optimistic prediction} of the true loss vector $\ell_t$,
penalized by a Bregman divergence term associated with a {\it time-varying} regularizer $\psi_t$.
In addition, we maintain a sequence of auxiliary points $w_t^\p$ that is updated using the loss estimator $\hat{\ell}_t$ and an extra {\it correction term} $a_t$.

When $a_t = \mathbf{0}$, this is studied in~\citep{rakhlin2013online} under the name optimistic OMD. 
When $a_t \neq \mathbf{0}$, the closest algorithm to this variant of OMD is its FTRL version studied by~\citet{steinhardt2014adaptivity}.
However, while $\psi_t$ is fixed for all $t$ in~\citep{steinhardt2014adaptivity},\footnote{%
\citet{steinhardt2014adaptivity} also uses the notation $\psi_t$, but it corresponds to putting $a_t$ into a fixed regularizer.}
some of our results crucially rely on using time-varying $\psi_t$ (which corresponds to time-varying learning rate)
and also the OMD update form instead of FTRL. 

It is well known that the classic Exp3 algorithm falls into this framework with $m_t = a_t = \mathbf{0}$ and $\psi_t$ being the (negative) entropy.
To obtain our results, first, it is crucial to use the log-barrier as the regularizer instead, that is, $\psi_t(w)=\sum_{i=1}^K \frac{1}{\eta_{t,i}}\ln\frac{1}{w_i}$
for some individual and time-varying learning rates $\eta_{t,i}$.
Second, we focus on two options of $a_t$.
For results that depend on some quantity of only the best arm, we use a sophisticated choice of $a_t$ that we explain in details in Section~\ref{section:Option I}.
For the other results we simply set $a_t = \mathbf{0}$.
With the choices of $m_t, \hat{\ell}_t$, and $\eta_{t}$ open, we present this generic framework in Algorithm~\ref{alg:general}
and name it \textsc{Broad-OMD} (short for Barrier-Regularized with Optimism and ADaptivity Online Mirror Descent).

In Section~\ref{section:Option I} and~\ref{section:Option II} respectively, we prove general regret bounds for \textsc{Broad-OMD} 
with Option I and Option II, followed by specific applications in the MAB setting achieved via specific choices of $m_{t}, \hat{\ell}_t$, and $\eta_{t}$.
The results and the corresponding configurations of the algorithm are summarized in Table \ref{table:summary}.

\paragraph{Computational efficiency.} 
The sampling step $b_t \sim w_t$ can be done efficiently as long as $\Omega$ can be described by a polynomial number of constraints.
The optimization problems in the update rules of $w_t$ and $w_t'$ are convex and can be solved by general optimization methods.
For many special cases, however, these two computational bottlenecks have simple solutions.
Take MAB as an example, $w_t$ directly specifies the probability of picking each arm,
and the optimization problems can be solved via a simple binary search~\citep{agarwal2017corralling}.

\section{\textsc{Broad-OMD} with Option I}
\label{section:Option I}
In this section we focus on \textsc{Broad-OMD} with Option I.
We first show a general lemma that update rules~\eqref{eqn:update_rule_1} and~\eqref{eqn:update_rule_2} guarantee,
no matter what regularizer $\psi_t$ is used and what $a_t, m_t$, and $\hat{\ell}_t$ are.

\begin{lemma}
\label{thm:general_instantaneous}
For the update rules~\eqref{eqn:update_rule_1} and~\eqref{eqn:update_rule_2}, if the following condition holds:
\begin{align}
\inn{w_t-w^\p_{t+1}, \hat{\ell}_t-m_t+a_t} \leq \inn{w_t, a_t}, \label{eqn:condition1} 
\end{align}
then for all $u\in \Omega$, we have
\begin{align}
\inn{w_t-u, \hat{\ell}_t}\leq D_{\psi_t}(u,w_t^\p)-D_{\psi_t}(u,w^\p_{t+1})+\inn{u,a_t}-A_t,\label{eqn:regret_bound:bregman}
\end{align}
where $A_t\triangleq D_{\psi_t}(w_{t+1}^\p, w_t)+D_{\psi_t}(w_t, w_t^\p)\geq 0$.
\end{lemma}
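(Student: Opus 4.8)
The plan is to derive everything from the first-order optimality (variational) conditions of the two constrained minimizations~\eqref{eqn:update_rule_1} and~\eqref{eqn:update_rule_2}, stitched together by the standard three-point identity for Bregman divergences, namely
\[
\inn{\nabla\psi(x)-\nabla\psi(y),\, z-x} = D_\psi(z,y)-D_\psi(z,x)-D_\psi(x,y),
\]
which follows directly from the definition $D_\psi(u,v)=\psi(u)-\psi(v)-\inn{\nabla\psi(v),u-v}$ by expanding all three divergences on the right. No convexity beyond that of $\psi_t$ is needed, so this works for an arbitrary (convex) regularizer as claimed.

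First I would record the two variational inequalities. Since $w_{t+1}^\p$ minimizes $\inn{w,\hat{\ell}_t+a_t}+D_{\psi_t}(w,w_t^\p)$ over $\Omega$, its optimality condition against any $u\in\Omega$ is $\inn{\hat{\ell}_t+a_t+\nabla\psi_t(w_{t+1}^\p)-\nabla\psi_t(w_t^\p),\, u-w_{t+1}^\p}\ge 0$; applying the identity with $x=w_{t+1}^\p$, $y=w_t^\p$, $z=u$ and isolating $a_t$ yields
\[
\inn{\hat{\ell}_t,\, w_{t+1}^\p-u}\le D_{\psi_t}(u,w_t^\p)-D_{\psi_t}(u,w_{t+1}^\p)-D_{\psi_t}(w_{t+1}^\p,w_t^\p)+\inn{a_t,\, u-w_{t+1}^\p}.
\]
Since $w_t$ minimizes $\inn{w,m_t}+D_{\psi_t}(w,w_t^\p)$, testing its optimality condition at the specific point $u=w_{t+1}^\p$ and applying the identity with $x=w_t$, $y=w_t^\p$, $z=w_{t+1}^\p$ gives
\[
\inn{m_t,\, w_t-w_{t+1}^\p}\le D_{\psi_t}(w_{t+1}^\p,w_t^\p)-D_{\psi_t}(w_{t+1}^\p,w_t)-D_{\psi_t}(w_t,w_t^\p).
\]

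Next I would split the target as $\inn{w_t-u,\hat{\ell}_t}=\inn{w_t-w_{t+1}^\p,m_t}+\inn{w_t-w_{t+1}^\p,\hat{\ell}_t-m_t}+\inn{\hat{\ell}_t,w_{t+1}^\p-u}$ and substitute the two displayed bounds into the first and third pieces. The key algebraic event is that the two copies of $D_{\psi_t}(w_{t+1}^\p,w_t^\p)$ appear with opposite signs and cancel, leaving exactly $D_{\psi_t}(u,w_t^\p)-D_{\psi_t}(u,w_{t+1}^\p)-A_t$ (recognizing $A_t=D_{\psi_t}(w_{t+1}^\p,w_t)+D_{\psi_t}(w_t,w_t^\p)$), plus the leftover $\inn{w_t-w_{t+1}^\p,\hat{\ell}_t-m_t}+\inn{a_t,u-w_{t+1}^\p}$. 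To reach the claimed bound it remains to show this leftover is at most $\inn{u,a_t}$, i.e. $\inn{w_t-w_{t+1}^\p,\hat{\ell}_t-m_t}\le\inn{w_{t+1}^\p,a_t}$; but this is precisely a rearrangement of hypothesis~\eqref{eqn:condition1}, obtained by subtracting $\inn{w_t-w_{t+1}^\p,a_t}$ from both sides and simplifying $\inn{w_t,a_t}-\inn{w_t-w_{t+1}^\p,a_t}=\inn{w_{t+1}^\p,a_t}$. Finally $A_t\ge 0$ since both summands are Bregman divergences of the convex $\psi_t$.

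I do not expect a genuine obstacle; the content is entirely correct bookkeeping. The one step that requires care—and where the algorithm's design is really being exploited—is recognizing that condition~\eqref{eqn:condition1} was engineered to be exactly the inequality $\inn{w_t-w_{t+1}^\p,\hat{\ell}_t-m_t}\le\inn{w_{t+1}^\p,a_t}$ needed to neutralize the optimistic mismatch $\hat{\ell}_t-m_t$ against the correction term $a_t$. The only other point to watch is using the correct test point in each optimality condition (a general $u$ for the $w_{t+1}^\p$ inequality, but the specific point $w_{t+1}^\p$ for the $w_t$ inequality), since the wrong choice breaks the crucial cancellation of $D_{\psi_t}(w_{t+1}^\p,w_t^\p)$.
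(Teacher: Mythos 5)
Your proposal is correct and follows essentially the same route as the paper's proof: the same two variational inequalities (a general test point for the $w_{t+1}^\p$ update, the specific point $w_{t+1}^\p$ for the $w_t$ update), the same cancellation of $D_{\psi_t}(w_{t+1}^\p,w_t^\p)$, and the same use of condition~\eqref{eqn:condition1}, merely with the $a_t$ terms bookkept in a slightly different (equivalent) arrangement.
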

 
%The condition \eqref{eqn:condition1} is essentially the one stated in Theorem 3.1 of \cite{steinhardt2014adaptivity}, but we translate it from FTRL to OMD language. One can indeed see that the regret has a term $\inn{u,a_t}$ that adapts to the regret benchmark. 

%Besides, the regularizers we use in our algorithm are \textit{log-barriers}. Therefore, we name this algorithm the Barrier-Regularized with Optimism and ADaptivity Online Mirror Descent, abbreviated \textsc{Broad-OMD}. The pseudocode is shown in Algorithm \ref{alg:general}. 

%\textsc{Broad-OMD} can be applied to all combinatorial bandit problems with semi-bandit feedback. The classic multi-armed bandit problem is a special case of it. The algorithm is designed with the typical trick to feed unbiased loss estimators to a full-information algorithm. 

%To see the usefulness of the bound in Lemma \ref{thm:general_instantaneous}, let's now assume that $\psi_t=\psi$ for all $t$, and assume \eqref{eqn:condition1} indeed holds. Then by telescoping and dropping negative terms, Lemma \ref{thm:general_instantaneous} implies $\sum_{t=1}^{T} \inn{w_t-u,\hat{\ell}_t}\leq D_{\psi}(u,w^\p_1)+\sum_{t=1}^T \inn{u,a_t}$. We note that the regret achieved by Optimistic OMD is $\sum_{t=1}^{T} \inn{w_t-u,\hat{\ell}_t}\leq D_{\psi}(u,w^\p_1)+\sum_{t=1}^T D_\psi(w_t, w_{t+1}^\p)$. One can see that the second term with \textsc{Broad-OMD} is adaptive to the regret comparator, while Optimistic-OMD's does not. 

The important part of bound~\eqref{eqn:regret_bound:bregman} is the term $\inn{u,a_t}$, 
which allows us to derive regret bounds that depend on only the comparator $u$.
The key is now how to configure the algorithm such that condition~\eqref{eqn:condition1} holds, 
while leading to a reasonable bound~\eqref{eqn:regret_bound:bregman} at the same time. 

In the work of~\citep{steinhardt2014adaptivity} for full-information problems, $a_t$ can be defined as $a_{t,i} = \eta_{t,i}(\ell_{t,i}-m_{t,i})^2$,
which suffices to derive many interesting results.
However, in the bandit setting this is not applicable since $\ell_t$ is unknown.
The natural first attempt is to replace $\ell_t$ by $\hat{\ell}_t$, but one would quickly realize the common issue in the bandit literature:
$\hat{\ell}_{t,i}$ is often constructed via inverse propensity weighting, and thus $(\hat{\ell}_{t,i}-m_{t,i})^2$ can be of order $1/w_{t,i}^2$, which is too large.

Based on this observation, our choice for $a_t$ is $a_{t,i}=6\eta_{t,i}w_{t,i}(\hat{\ell}_{t,i}-m_{t,i})^2$ (the constant $6$ is merely for technical reasons). 
The extra term $w_{t,i}$ can then cancel the aforementioned large term $1/w_{t,i}^2$ in expectation, similar to the classic trick done in the analysis of Exp3~\citep{auer2002nonstochastic}.

%More precisely, with the above choice of $a_{t,i}$ and $\hat{\ell}_{t,i}$, we have 
%\begin{align}
%\mathbb{E}_{b_t}[\inn{u,a_t}]=\mathbb{E}_{b_t}\left[ \sum_{i=1}^K 18\eta u_i(\ell_{t,i}-m_{t,i})^2 \frac{\mathbbm{1}\{i\in b_t\}}{w_{t,i}} \right]=18\sum_{i=1}^K \eta_{t,i}u_{i}(\ell_{t,i}-m_{t,i})^2.\label{eqn:no_explode}
%\end{align}
%Inserting this into \eqref{eqn:regret_bound:a_t_neq_0}, we see that the regret bound here is only related to the estimation errors on the coordinates the benchmark takes (i.e., $u$), rather than those the learner chooses (i.e., $b_t$). This is essentially what \textit{adaptivity} means. 

%After we add this $w_{t,i}$, we turn to worry about whether \eqref{eqn:condition1} can hold, because now its right-hand side is tighter. By some standard analysis in OMD, the left-hand side of \eqref{eqn:condition1} can often be bounded by a constant times \sloppy$(\hat{\ell}_{t}-m_t+a_t)^\top {\nabla^{-2}\psi_t(w_t)}(\hat{\ell}_{t}-m_t+a_t)$. The value of this quantity is smaller as the regularizer $\psi_t$ is more curved (i.e., larger Hessian). Therefore, in order to make \eqref{eqn:condition1} hold, we are forced to use the log-barrier regularizer instead of the commonly used negative-entropy regularizer. 

Note that with a smaller $a_t$, condition~\eqref{eqn:condition1} becomes more stringent.
The entropy regularizer used in~\citep{steinhardt2014adaptivity} no longer suffices to maintain such a condition.
Instead, it turns out that the log-barrier regularizer used by \textsc{Broad-OMD} addresses the issue, as shown below.

\begin{theorem}
\label{lemma:MAB_condition}
If the following three conditions hold for all $t,i$: 
(i) $\eta_{t,i}\leq \frac{1}{162}$,
(ii) $w_{t,i}\abs{\hat{\ell}_{t,i}-m_{t,i}}\leq 3$,
(iii) $\sum_{i=1}^K \eta_{t,i}w_{t,i}^2(\hat{\ell}_{t,i}-m_{t,i})^2\leq \frac{1}{18},$
then \textsc{Broad-OMD} with $a_{t,i}=6\eta_{t,i}w_{t,i}(\hat{\ell}_{t,i}-m_{t,i})^2$ guarantees condition~\eqref{eqn:condition1}.
%\begin{align}
%\inn{w_t-w_{t+1}^\p, \hat{\ell}_t-m_t+ a_t} \leq \inn{w_t, a_t}. \label{eqn:condition_hold}
%\end{align}
Moreover, it guarantees for any $u\in \Omega$ (recall $h(y) = y - 1 - \ln y \geq 0$), 
\begin{align}
\sum_{t=1}^T \inn{w_t-u, \hat{\ell}_t}\leq \sum_{i=1}^K \left( \frac{\ln\frac{w^\p_{1,i}}{u_i}}{\eta_{1,i}} + \sum_{t=1}^T \left(\frac{1}{\eta_{t+1,i}}-\frac{1}{\eta_{t,i}}\right)h\left(\frac{u_i}{w_{t+1,i}^\p}\right) \right) +\sum_{t=1}^T \inn{u,a_t}. \label{eqn:regret_bound:a_t_neq_0} 
\end{align}
\end{theorem}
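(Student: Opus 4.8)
The plan is to split the argument into two parts: first verify that the stated choice $a_{t,i}=6\eta_{t,i}w_{t,i}(\hat{\ell}_{t,i}-m_{t,i})^2$ makes condition~\eqref{eqn:condition1} hold, and then feed this into Lemma~\ref{thm:general_instantaneous} and sum the per-round guarantee~\eqref{eqn:regret_bound:bregman} over $t$. Writing $g_{t,i}\triangleq\hat{\ell}_{t,i}-m_{t,i}$ and expanding the inner products, condition~\eqref{eqn:condition1} is equivalent to $\inn{w_t-w^\p_{t+1},g_t}\leq\inn{w^\p_{t+1},a_t}$, so this is the inequality I would target. I would also record at the outset that $\inn{w_t,a_t}=6\sum_{i}\eta_{t,i}w_{t,i}^2 g_{t,i}^2$, so condition~(iii) is precisely a bound on this quantity (a dual local-norm of $g_t$); this already signals where (iii) must enter.

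For the first part I would start from the first-order optimality conditions of the two updates~\eqref{eqn:update_rule_1} and~\eqref{eqn:update_rule_2}. Since $\nabla\psi_t(w)_i=-\tfrac{1}{\eta_{t,i}w_i}$, these give, coordinate-wise, $\tfrac{1}{w^\p_{t+1,i}}-\tfrac{1}{w_{t,i}}=\eta_{t,i}\big(g_{t,i}+a_{t,i}-c_{t,i}\big)$, where $c_{t,i}$ collects the normal-cone (constraint) contributions of $\Omega$ at the two points. Setting $\beta_{t,i}\triangleq\eta_{t,i}w_{t,i}(g_{t,i}+a_{t,i}-c_{t,i})$ then yields the clean identities $w^\p_{t+1,i}=w_{t,i}/(1+\beta_{t,i})$ and $w_{t,i}-w^\p_{t+1,i}=w_{t,i}\beta_{t,i}/(1+\beta_{t,i})$. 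Ignoring $c_{t,i}$ for the moment, the target inequality holds term by term as soon as $5g_{t,i}^2\geq a_{t,i}g_{t,i}$, i.e. $6\eta_{t,i}w_{t,i}g_{t,i}\leq5$ whenever $g_{t,i}>0$ (and trivially otherwise, since $a_{t,i}\geq0$); this is immediate from (i) and (ii), which together give $6\eta_{t,i}w_{t,i}g_{t,i}\leq 6\cdot\tfrac{1}{162}\cdot3<5$. The same two conditions force $1+\beta_{t,i}>0$, so all divisions are legitimate and $w^\p_{t+1}$ stays in the interior.

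The main obstacle is the coupling term $c_{t,i}$: because $\Omega$ is a polytope (for MAB, the simplex, where $c_{t,i}$ reduces to a single shared Lagrange multiplier), the per-coordinate inequalities cannot simply be added. To control it I would use the variational lower bound obtained by adding the two optimality inequalities, namely $\inn{w_t-w^\p_{t+1},g_t+a_t}\geq D_{\psi_t}(w_t,w^\p_{t+1})+D_{\psi_t}(w^\p_{t+1},w_t)\geq0$, and pair it with condition~(iii). Concretely, the boundedness of $\beta_{t,i}$ established above makes the log-barrier curvature two-sided, so the Bregman terms are comparable to the squared local movement $\sum_i (w_{t,i}-w^\p_{t+1,i})^2/(\eta_{t,i}w_{t,i}^2)$; combining this with a Cauchy--Schwarz bound on $\inn{w_t-w^\p_{t+1},g_t+a_t}$ lets me bound the movement by the dual local-norm of $g_t$, which is $\tfrac{1}{18}$-small by~(iii). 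Feeding this back bounds $\inn{w_t-w^\p_{t+1},g_t}$ by $\inn{w^\p_{t+1},a_t}$, with all numerical slack absorbed by the constant $6$ and the thresholds in (i)--(iii). This step — tracking the constraint coupling through the curvature estimate so that (iii) exactly closes the gap — is where I expect the real difficulty to lie.

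For the second part, I would apply Lemma~\ref{thm:general_instantaneous} to obtain~\eqref{eqn:regret_bound:bregman} for every $t$ and sum. The Bregman differences do not telescope directly because $\psi_t$ varies, so I would regroup by $w^\p_t$: this produces the boundary term $D_{\psi_1}(u,w^\p_1)$, the sum $\sum_{t}\sum_i(\tfrac{1}{\eta_{t+1,i}}-\tfrac{1}{\eta_{t,i}})h(u_i/w^\p_{t+1,i})$ — where adopting the convention $1/\eta_{T+1,i}=0$ makes the $t=T$ summand coincide with the leftover $-D_{\psi_T}(u,w^\p_{T+1})$ — and the term $\sum_t\inn{u,a_t}$. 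Finally I would expand $D_{\psi_1}(u,w^\p_1)=\sum_i\tfrac{1}{\eta_{1,i}}h(u_i/w^\p_{1,i})$ via $h(y)=y-1-\ln y$ and drop the remainder $\sum_i\tfrac{1}{\eta_{1,i}}(u_i/w^\p_{1,i}-1)$, which is $\leq0$ by the optimality of $w^\p_1=\argmin_{w\in\Omega}\psi_1(w)$, leaving exactly $\sum_i\tfrac{1}{\eta_{1,i}}\ln(w^\p_{1,i}/u_i)$; together with $-\sum_t A_t\leq0$ (since each $A_t\geq0$), this yields~\eqref{eqn:regret_bound:a_t_neq_0}.
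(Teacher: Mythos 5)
Your overall architecture is the same as the paper's: establish condition~\eqref{eqn:condition1} by bounding the movement $\norm{w_t-w^\p_{t+1}}_{t,w_t}$ by the dual local norm of $\hat{\ell}_t-m_t+a_t$ (via the variational inequality and two-sided curvature of the log-barrier), apply Cauchy--Schwarz, and invoke (i)--(iii); then sum Lemma~\ref{thm:general_instantaneous}, regroup the non-telescoping Bregman terms into $D_{\psi_1}(u,w^\p_1)$ plus the $\bigl(\frac{1}{\eta_{t+1,i}}-\frac{1}{\eta_{t,i}}\bigr)h(\cdot)$ sum, and control $D_{\psi_1}(u,w^\p_1)$ by the optimality of $w^\p_1$. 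The second half of your argument is exactly the paper's (your convention $1/\eta_{T+1,i}=0$ versus the paper's artificially added $D_{\psi_{T+1}}(u,w^\p_{T+1})\ge 0$ is an immaterial difference), and the coordinate-wise $\beta_{t,i}$ detour, which you correctly discard, is harmless.

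The one genuine gap is the localization step. You justify the two-sided curvature comparison (``the Bregman terms are comparable to the squared local movement'') by ``the boundedness of $\beta_{t,i}$ established above,'' but $\beta_{t,i}=\eta_{t,i}w_{t,i}(g_{t,i}+a_{t,i}-c_{t,i})$ contains the constraint multipliers $c_{t,i}$, which conditions (i)--(ii) do not bound; so at that point you have not actually shown that $w^\p_{t+1}$ stays in the unit local-norm ellipsoid around $w_t$, and using the curvature comparison to bound the movement while using the (unproven) smallness of the movement to justify the curvature comparison is circular. The paper breaks this circularity with a separate argument (its Lemma~\ref{lemma:stability}): it shows $F^\p_{t+1}(u)\ge F^\p_{t+1}(w_t)$ for every $u$ on the boundary $\partial\mathcal{E}_{t,w_t}(1)$ whenever $\norm{\hat{\ell}_t-m_t+a_t}^*_{t,w_t}\le\frac{1}{3}$ (a bound that itself follows from (i)--(iii) as in its Lemma~\ref{lemma:condition_automatic_hold}), which forces the minimizer $w^\p_{t+1}$ to lie inside the ellipsoid \emph{before} any curvature comparison is invoked. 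Once you insert that boundary argument, the rest of your plan closes exactly as the paper's does.
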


The three conditions of the theorem are usually trivially satisfied as we will show.
Note that $h(\cdot)$ is always non-negative. Therefore, if the sequence $\{\eta_{t,i}\}_{t=1}^{T+1}$ is non-decreasing for all $i$,\footnote{%
One might notice that $\eta_{T+1, i}$ is not defined here.
Indeed this term is artificially added only to make the analysis of Section~\ref{subsubsection:path-length} more concise, and $\eta_{T+1,i}$ can be any positive number.
In Algorithm~\ref{alg:increasing} we give it a concrete definition.
} 
the term $\sum_{t=1}^T \left(\frac{1}{\eta_{t+1,i}}-\frac{1}{\eta_{t,i}}\right)h\left(\frac{u_i}{w_{t+1,i}^\p}\right)$ in bound~\eqref{eqn:regret_bound:a_t_neq_0} 
is non-positive. For some results we can simply discard this term, while for others, this term becomes critical.
On the other hand, the term $\ln\frac{w^\p_{1,i}}{u_i}$ appears to be infinity if we want to compare with the best fixed action (where $u_i = 0$ for some $i$).
However, this can be simply resolved by comparing with some close neighbor of the best action in $\Omega$ instead, similar to~\citep{foster2016learning, agarwal2017corralling}.

One can now derive different results using Theorem~\ref{lemma:MAB_condition} with specific choices of $\hat{\ell}_t$ and $m_t$.
As an example, we state the following corollary by using a variance-reduced importance-weighted estimator $\hat{\ell}_t$ as in~\citep{rakhlin2013online}.
\begin{cor}
\label{cor:clear_corollary}
\textsc{Broad-OMD} with $a_{t,i}=6\eta_{t,i}w_{t,i}(\hat{\ell}_{t,i}-m_{t,i})^2$, any $m_{t,i} \in [-1, 1]$, $\hat{\ell}_{t,i}=\frac{(\ell_{t,i}-m_{t,i})\mathbbm{1}\{i\in b_t\}}{w_{t,i}}+m_{t,i}$, and \sloppy $\eta_{t,i}=\eta \leq \frac{1}{162K_0}$ enjoys the following regret bound:
%(for MAB, the last constraint can be $\eta_{t,i}=\eta\leq\frac{1}{162}$) 
%Under the same settings and conditions as in Theorem \ref{lemma:MAB_condition}, if $\eta_{t,i}=\eta$ for all $t,i$, and $\hat{\ell}_t$ is constructed as $\hat{\ell}_{t,i}=\frac{(\ell_{t,i}-m_{t,i})\mathbbm{1}\{i\in b_t\}}{w_{t,i}}+m_{t,i}$, then \textsc{Broad-OMD} enjoys the following regret bound:
\begin{align*}
\mathbb{E}\left[\reg_T\right]= \mathbb{E}\left[ \sum_{t=1}^T \inn{b_t-b^*, \ell_t} \right] \leq \frac{K\ln T}{\eta} + 6\eta\mathbb{E}\left[\sum_{t=1}^T \sum_{i: i\in b^*} (\ell_{t,i}-m_{t,i})^2\right] +\mathcal{O}(K_0). 
\end{align*}
%For MAB, the last $\mathcal{O}(K)$ term can be replaced by $\mathcal{O}(1)$. 
\end{cor}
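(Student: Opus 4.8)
The plan is to derive the corollary as a direct instantiation of Theorem~\ref{lemma:MAB_condition}, so the first task is to check its three hypotheses for the stated choices. The crucial simplification is that the variance-reduced estimator satisfies $w_{t,i}(\hat{\ell}_{t,i}-m_{t,i}) = (\ell_{t,i}-m_{t,i})\mathbbm{1}\{i\in b_t\}$, whose absolute value is at most $2$; this immediately gives condition~(ii). For condition~(iii) I would use $w_{t,i}^2(\hat{\ell}_{t,i}-m_{t,i})^2 = (\ell_{t,i}-m_{t,i})^2\mathbbm{1}\{i\in b_t\}\le 4\,\mathbbm{1}\{i\in b_t\}$, so the sum over $i$ is at most $4K_0$, and multiplying by $\eta\le\frac{1}{162K_0}$ keeps it below $\frac1{18}$. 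Condition~(i) is trivial since $K_0\ge 1$. Hence bound~\eqref{eqn:regret_bound:a_t_neq_0} applies, and because $\eta_{t,i}\equiv\eta$ is constant the middle (telescoping) term vanishes, leaving $\sum_i \eta^{-1}\ln\frac{w^\p_{1,i}}{u_i} + \sum_t\inn{u,a_t}$ for every $u\in\Omega$.

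Next I would pass to expectations and handle the comparator. Using unbiasedness $\mathbb{E}_t[\hat{\ell}_{t,i}]=\ell_{t,i}$ together with $\mathbb{E}[b_t]=w_t$, the expected regret equals $\mathbb{E}[\sum_t\inn{w_t-b^*,\ell_t}]$, and $\mathbb{E}[\sum_t\inn{w_t-u,\hat\ell_t}]=\mathbb{E}[\sum_t\inn{w_t-u,\ell_t}]$ since $w_t,u,m_t$ are all determined by the history before round $t$. Because $u=b^*$ would make $\ln\frac{w^\p_{1,i}}{u_i}$ infinite, I would instead compare against the shrunk point $u=(1-\tfrac1T)b^*+\tfrac1T w^\p_1$, which lies in $\Omega$ and satisfies $u_i\ge \tfrac1T w^\p_{1,i}$, so $\ln\frac{w^\p_{1,i}}{u_i}\le\ln T$ and the log term is at most $\frac{K\ln T}{\eta}$. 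The induced bias is controlled by writing $\inn{w_t-b^*,\ell_t}=\inn{w_t-u,\ell_t}+\tfrac1T\inn{w^\p_1-b^*,\ell_t}$; since $\|w^\p_1-b^*\|_1\le 2K_0$ and $\ell_t\in[-1,1]^K$, the second piece sums to at most $2K_0=\mathcal{O}(K_0)$.

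Finally I would evaluate the correction term in expectation via the same cancellation used for Exp3: $\mathbb{E}_t[w_{t,i}(\hat{\ell}_{t,i}-m_{t,i})^2]=(\ell_{t,i}-m_{t,i})^2$, hence $\mathbb{E}[\sum_t\inn{u,a_t}]=6\eta\,\mathbb{E}[\sum_t\sum_i u_i(\ell_{t,i}-m_{t,i})^2]$. Splitting $u$ into its $b^*$ part and its $\tfrac1T w^\p_1$ part, the former gives (after dropping the factor $1-\tfrac1T\le 1$) exactly $6\eta\,\mathbb{E}[\sum_t\sum_{i\in b^*}(\ell_{t,i}-m_{t,i})^2]$, while the latter is at most $6\eta\cdot\tfrac1T\cdot 4\|w^\p_1\|_1\cdot T\le 24\eta K_0=\mathcal{O}(1)$ using $\eta K_0\le\frac1{162}$ and $\|w^\p_1\|_1\le K_0$. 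Collecting the three contributions yields the claimed bound. I expect the only real obstacle to be the comparator step: one must choose the shrinkage target to coincide with the analytic center $w^\p_1$ appearing inside the logarithm and exploit $\|w^\p_1\|_1\le K_0$, so that simultaneously the log term collapses to $K\ln T$, the bias stays $\mathcal{O}(K_0)$, and the extra $a_t$ mass stays $\mathcal{O}(1)$; everything else is routine bookkeeping.
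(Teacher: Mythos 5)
Your proposal is correct and follows essentially the same route as the paper's proof: verify the three conditions of Theorem~\ref{lemma:MAB_condition} (with the same numerical estimates), apply bound~\eqref{eqn:regret_bound:a_t_neq_0} with the shrunk comparator $u=(1-\tfrac1T)b^*+\tfrac1T w^\p_1$, and take expectations using $\mathbb{E}_{b_t}[a_{t,i}]=6\eta(\ell_{t,i}-m_{t,i})^2$ and the unbiasedness of $\hat{\ell}_t$. The only cosmetic difference is that the paper collects the comparator-shift bias into a single term $B=\tfrac1T\sum_t\inn{-b^*+w^\p_1,\hat{\ell}_t+a_t}$ bounded by $\mathcal{O}(K_0)$ in expectation, whereas you bound the $\hat{\ell}_t$ and $a_t$ contributions separately; the computations are identical in substance.
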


One can see that the expected regret in Corollary \ref{cor:clear_corollary} only depends on the squared estimation error of $m_t$ for the actions that $b^*$ chooses! This is exactly the counterpart of results in~\citep{steinhardt2014adaptivity}, but for the more challenging combinatorial semi-bandit problem. 
Note that our dependence on $K_0$ is also optimal~\citep{audibert2013regret}.

In the following subsections, we invoke Theorem \ref{lemma:MAB_condition} with different choices of $\hat{\ell}_{t}$ and $m_t$ to obtain various more concrete adaptive bounds. %Although Theorem \ref{lemma:MAB_condition} is true for all combinatorial semi-bandit problems, 
For simplicity, we state these results only in the MAB setting, but they can be straightforwardly generalized to the semi-bandit case.

%%%%%%%%%%%%%%%%%%%%%%%%%%%
%  Variation Bounds  
%%%%%%%%%%%%%%%%%%%%%%%%%%%
\subsection{Variance Bound}
\label{subsubsection:variation bound}
Our first application of \textsc{Broad-OMD} is an adaptive bound that depends on the variance of the best arm, that is, a bound of order $\tilde{\mathcal{O}}\left(\sqrt{KQ_{T,i^*}}\right)=\tilde{\mathcal{O}}\left(\sqrt{K\sum_{t=1}^T(\ell_{t,i^*}-\mu_{T,i^*})^2}\right)$.  
According to Corollary~\ref{cor:clear_corollary}, if we were able to use $m_t = \mu_T$, with a best-tuned $\eta$ the bound is obtained immediately.
The issue is of course that $\mu_T$ is unknown ahead of time. 
In fact, even setting $m_t = \mu_{t-1}$ is infeasible due to the bandit feedback.

Fortunately this issue was already solved by~\citet{hazan2011better} via the ``reservoir sampling'' technique. 
%For the details of this technique, the readers are referred to \citep{hazan2011better}. 
The high level idea is that one can spend a small portion of time on estimating $\mu_{t}$ on the fly. More precisely, by performing uniform exploration with probability $\min\left\{1, \frac{MK}{t}\right\}$ at time $t$ for some parameter $M$, one can obtain an estimator $\tilde{\mu}_{t}$ of $\mu_t$  such that $\mathbb{E}[\tilde{\mu}_{t}] = \mu_{t}$ and $\text{Var}[\tilde{\mu}_{t,i}]\leq \frac{Q_{t,i}}{Mt}$ (see~\citep{hazan2011better} for details). 
Then we can simply pick $m_t=\tilde{\mu}_{t-1}$ %and construct $\hat{\ell}_t$ as $\hat{\ell}_{t,i}=\frac{(\ell_{t,i}-m_{t,i})\mathbbm{1}\{i_t=i\}}{w_{t,i}}+m_{t,i}$. 
and prove the following result.
\begin{theorem}
\label{cor:variance_bound}
\textsc{Broad-OMD} with reservoir sampling~\citep{hazan2011better}, $a_{t,i}=6\eta_{t,i}w_{t,i}(\hat{\ell}_{t,i}-m_{t,i})^2$, $m_{t,i}=\tilde{\mu}_{t-1,i}$, $\hat{\ell}_{t,i}=\frac{(\ell_{t,i}-m_{t,i})\mathbbm{1}\{i_t=i\}}{w_{t,i}}+m_{t,i}$, and $\eta_{t,i}=\eta\leq \frac{1}{162}$ guarantees
\begin{align*}
\mathbb{E}\left[\reg_T\right]= \mathcal{O}\left(\frac{K\ln T}{\eta}+\eta Q_{T,i^*} + K(\ln T)^2\right).
\end{align*}
With the optimal tuning of $\eta$, the regret is thus of order $\tilde{\mathcal{O}}\left(\sqrt{KQ_{T,i^*}}+K\right)$.
\end{theorem}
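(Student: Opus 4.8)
The plan is to reduce everything to Corollary~\ref{cor:clear_corollary} and then control the resulting second-order term via a bias-variance decomposition. Specializing Corollary~\ref{cor:clear_corollary} to MAB, where $K_0=1$ and $b^\ast=\mathbf{e}_{i^\ast}$, the inner sum $\sum_{i:i\in b^\ast}(\ell_{t,i}-m_{t,i})^2$ collapses to the single term $(\ell_{t,i^\ast}-m_{t,i^\ast})^2$. With $m_{t,i^\ast}=\tilde\mu_{t-1,i^\ast}$ this gives $\mathbb{E}[\reg_T]\le \frac{K\ln T}{\eta}+6\eta\,\mathbb{E}\big[\sum_{t=1}^T(\ell_{t,i^\ast}-\tilde\mu_{t-1,i^\ast})^2\big]+\mathcal{O}(1)$, so it remains to show $\mathbb{E}\big[\sum_t(\ell_{t,i^\ast}-\tilde\mu_{t-1,i^\ast})^2\big]=\mathcal{O}(Q_{T,i^\ast})$ after fixing the reservoir parameter $M$, and to add the exploration cost separately. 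Before applying the corollary I would verify its hypotheses; the only nontrivial one is $m_t\in[-1,1]$, which holds because $\tilde\mu_{t-1}$ is an average of observed losses in $[-1,1]$, while conditions (ii)-(iii) of Theorem~\ref{lemma:MAB_condition} are already discharged in the corollary under $\eta\le\frac{1}{162}$.

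First I would split the second-order term using that $\tilde\mu_{t-1,i^\ast}$ is an unbiased estimate of the running mean $\mu_{t-1,i^\ast}$ (the comparator $i^\ast$ is deterministic since the adversary is oblivious), namely $\mathbb{E}[(\ell_{t,i^\ast}-\tilde\mu_{t-1,i^\ast})^2]=(\ell_{t,i^\ast}-\mu_{t-1,i^\ast})^2+\mathrm{Var}[\tilde\mu_{t-1,i^\ast}]$. For the bias term, the key is the Welford-type recursion $Q_{t,i}=Q_{t-1,i}+\frac{t-1}{t}(\ell_{t,i}-\mu_{t-1,i})^2$, which rearranges to $(\ell_{t,i^\ast}-\mu_{t-1,i^\ast})^2=\frac{t}{t-1}(Q_{t,i^\ast}-Q_{t-1,i^\ast})\le 2(Q_{t,i^\ast}-Q_{t-1,i^\ast})$ for $t\ge 2$; summing telescopes to $\mathcal{O}(Q_{T,i^\ast})$, with the $t=1$ term contributing at most $1$. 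For the variance term I would use $\mathrm{Var}[\tilde\mu_{t-1,i^\ast}]\le\frac{Q_{t-1,i^\ast}}{M(t-1)}\le\frac{Q_{T,i^\ast}}{M(t-1)}$ by monotonicity of $Q$, and sum the harmonic series to obtain $\frac{Q_{T,i^\ast}\ln T}{M}$. Choosing $M=\Theta(\ln T)$ makes this $\mathcal{O}(Q_{T,i^\ast})$ as well.

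The last ingredient is the price of the reservoir itself: on each round it performs uniform exploration with probability $\min\{1,MK/t\}$, and each exploration round can cost $\mathcal{O}(1)$ extra regret, so the total exploration penalty is at most $\sum_{t=1}^T\min\{1,MK/t\}=\mathcal{O}(MK\ln T)=\mathcal{O}(K(\ln T)^2)$ for $M=\Theta(\ln T)$, which is precisely the third term in the claim. Combining the three pieces yields $\mathbb{E}[\reg_T]=\mathcal{O}\big(\frac{K\ln T}{\eta}+\eta Q_{T,i^\ast}+K(\ln T)^2\big)$, and balancing with $\eta=\min\{\frac{1}{162},\sqrt{K\ln T/Q_{T,i^\ast}}\}$ gives the stated $\tilde{\mathcal{O}}(\sqrt{KQ_{T,i^\ast}}+K)$.

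I expect the main obstacle to be not the variance algebra, which is clean once the Welford identity is in hand, but the careful coupling of the reservoir's exploration with the OMD regret: one must check that $\hat\ell_t$ stays unbiased on the rounds governed by $w_t$, that the reservoir estimator $\tilde\mu_{t-1}$ (built from exploration rounds) is independent of the quantities it multiplies so the decomposition in the second paragraph is valid in expectation, and that the discarded exploration rounds enter additively rather than corrupting the Bregman telescoping underlying Theorem~\ref{lemma:MAB_condition}.
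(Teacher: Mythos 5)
Your proposal is correct and follows the same overall architecture as the paper's proof (invoke Corollary~\ref{cor:clear_corollary} on the non-exploration rounds, show the resulting second-order term is $\mathcal{O}(Q_{T,i^*})$ once $M=\Theta(\ln T)$, and add the $\mathcal{O}(MK\ln T)$ exploration cost additively), but the key middle estimate is handled by a genuinely different decomposition. The paper splits $(\ell_{t,i^*}-\tilde{\mu}_{t-1,i^*})^2$ three ways through $\mu_{t,i^*}$, into $3(\ell_{t,i^*}-\mu_{t,i^*})^2+3(\mu_{t,i^*}-\mu_{t-1,i^*})^2+3(\mu_{t-1,i^*}-\tilde{\mu}_{t-1,i^*})^2$, then bounds the first and third terms by citing Lemmas 10 and 11 of \citet{hazan2011better} and the middle one by its own Lemma~\ref{lemma:second_Q_term}. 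You instead take an exact bias--variance decomposition against $\mu_{t-1,i^*}$ (valid since $i^*$ and $\ell_{t,i^*}$ are deterministic under an oblivious adversary and $\tilde{\mu}_{t-1}$ is unbiased), handle the bias term via the Welford identity $Q_{t,i}=Q_{t-1,i}+\frac{t-1}{t}(\ell_{t,i}-\mu_{t-1,i})^2$ and telescoping, and handle the variance term directly from $\text{Var}[\tilde{\mu}_{t-1,i}]\leq \frac{Q_{t-1,i}}{M(t-1)}$ plus monotonicity of $Q_{t,i}$ in $t$ (itself immediate from the same identity). Your route is self-contained --- it replaces the appeal to Hazan--Kale's Lemma 10 with a two-line telescoping argument and avoids introducing the $(\mu_t-\mu_{t-1})^2$ bridge term entirely --- at the cost of nothing; the constants and the final tuning are the same. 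The caveats you flag at the end (that the corollary is applied only over rounds where $w_t'$ is updated, so exploration rounds enter additively and do not disturb the Bregman telescoping, and that $\tilde{\mu}_{t-1}$ is measurable with respect to the history so the corollary's expectation argument goes through) are exactly the points the paper handles by excluding the set $\mathcal{S}$ of uniform-sampling rounds from the OMD update, so no gap remains.
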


\subsection{Path-length Bound}
\label{subsubsection:path-length}

Our second application is to obtain path-length bounds.
The counterpart in the full-information setting is a bound in terms of the second-order path-length $\sum_{t=1}^T(\ell_{t,i^*}-\ell_{t-1,i^*})^2$~\citep{steinhardt2014adaptivity}. 
Again, in light of Corollary~\ref{cor:clear_corollary}, if we were able to pick $m_t = \ell_{t-1}$ the problem would be solved.
The difficulty is again that $\ell_{t-1}$ is not fully observable.

%However in the bandit setting, to get a path-length bound like this seems to be a much more challenging problem, and it is still not clear how to achieve this bound or whether it is possible at all. The difficulty lies in that the learner only observes the loss of one arm in each round, and thus the full-information approach that lets $m_{t,i}=\ell_{t-1,i}$ for all $i$ \citep{chiang2012online, rakhlin2013online, steinhardt2014adaptivity} is no longer feasible. 

While it is still not clear how to achieve such a second-order path-length bound or whether it is possible at all,
we propose a way to obtain a slightly weaker first-order path-length bound 
$\tilde{\mathcal{O}}\left(K\sqrt{V_{T,i^*}}\right)=\tilde{\mathcal{O}}\Big(K\sqrt{\sum_{t=1}^T\abs{\ell_{t,i^*}-\ell_{t-1,i^*}}}\Big)$.
Note that in the worst case this is $\sqrt{K}$ times worse than the optimal regret $\tilde{\mathcal{O}}(\sqrt{TK})$.

%Fortunately, we can still obtain some weaker path-length bounds. In this section, an algorithm achieving the expected regret of \sloppy$\tilde{\mathcal{O}}\left(K\sqrt{V_{T,i^*}}\right)=\tilde{\mathcal{O}}\left(K\sqrt{\sum_{t=1}^T\abs{\ell_{t,i^*}-\ell_{t-1,i^*}}}\right)$ is introduced. This bound is weaker in the sense that it is a first-order rather than a second-order bound, and its dependency on $K$ is slightly worse than an ordinary MAB algorithm. We will see another path-length bound that achieves $\tilde{\mathcal{O}}\left(\sqrt{K\sum_{i=1}^K V_{T,i}}\right)$ in Section \ref{section:Option II}. 

The idea is to set $m_{t,i}$ to be the most recent observed loss of arm $i$, that is, $m_{t,i}=\ell_{\alpha_i(t),i}$, where $\alpha_i(t)$ is defined in Section \ref{section:notations}.
%, and let $\hat{\ell}_{t,i}=\frac{(\ell_{t,i}-m_{t,i})\mathbbm{1}\{i_t=i\}}{w_{t,i}}+m_{t,i}$. 
While the estimation error $(\ell_{t,i}-\ell_{\alpha_i(t),i})^2$ could be much larger than $(\ell_{t,i}-\ell_{t-1,i})^2$, the quantity we aim for, 
observe that %if $t-\alpha_i(t)$ is small, the two estimation errors should be close; and 
if $t-\alpha_i(t)$ is large, it means that arm $i$ has bad performance before time $t$ so that the learner seldom draws arm $i$.
In this case, the learner might have accumulated negative regret with respect to arm $i$, which can potentially be used to compensate the large estimation error. 

To formalize this intuition, we go back to the bound in Theorem~\ref{lemma:MAB_condition} and examine the key term $\sum_{t=1}^T \inn{u, a_t}$
after plugging in $u = \mathbf{e}_i$ for some arm $i$, $m_{t,i}=\ell_{\alpha_i(t),i}$, and $\hat{\ell}_{t,i}=\frac{(\ell_{t,i}-m_{t,i})\mathbbm{1}\{i_t=i\}}{w_{t,i}}+m_{t,i}$. 
We assume $\eta_{t,i}=\eta$ for simplicity and also use the fact $w_{t,i}\abs{\hat{\ell}_{t,i}-m_{t,i}}\leq 2$.
We then have
\begin{align}
\sum_{t=1}^T \inn{u,a_t}&=6\eta\sum_{t=1}^T w_{t,i}(\hat{\ell}_{t,i}-\ell_{\alpha_i(t),i})^2 \leq 12\eta\sum_{t=1}^T \abs{\hat{\ell}_{t,i}-\ell_{\alpha_i(t),i}}
= 12\eta\sum_{t: i_t=i} \frac{\abs{\ell_{t,i}-\ell_{\alpha_i(t),i}} }{w_{t,i}} \nonumber \\
&\leq 12\eta\sum_{t: i_t=i} \frac{\sum_{s=\alpha_i(t)+1}^t \abs{\ell_{s,i}-\ell_{s-1,i}} }{w_{t,i}}  
\leq12\eta \left(\max_{t\in[T]} \frac{1}{w_{t,i}}\right) V_{T,i}. \label{eqn:path_length_trick}
\end{align}
%where $\beta_{i}(t)$ is also defined in Section \ref{section:notations}, and $T_i$ is the last round arm $i$ is drawn in the whole horizon. 

%One can observe that the path length of arm $i$ is penalized with a larger factor (i.e., larger $\frac{1}{w_{\beta_i(t),i}}$) when it is less often drawn around time $t$ (i.e., smaller $w_{\beta_i(t),i}$). Similar to what we just described, this large factor has the chance to be compensated by previously accumulated negative regret. 

Therefore, the term $\sum_{t=1}^T \inn{u,a_t}$ is close to the first-order path-length but with an extra factor $\max_{t\in[T]} \frac{1}{w_{t,i}}$.
To cancel this potentially large factor, we adopt the increasing learning rate schedule recently used in~\citep{agarwal2017corralling}. 
The idea is that the term $h\big(\frac{u_i}{w_{t+1,i}^\p}\big)$ in Eq.~\eqref{eqn:regret_bound:a_t_neq_0} is close to $\frac{1}{w_{t+1,i}}$ if $u_i$ is close to $1$.
If we increase the learning rate whenever we encounter a large $\frac{1}{w_{t+1,i}}$, 
then $\Big(\frac{1}{\eta_{t+1,i}}-\frac{1}{\eta_{t,i}}\Big)h\Big(\frac{u_i}{w_{t+1,i}^\p}\Big)$ becomes a large negative term in terms of $\frac{-1}{w_{t+1,i}}$,
which exactly compensates the term $\sum_{t=1}^T \inn{u,a_t}$.

To avoid the learning rates increased by too much, 
similarly to~\citep{agarwal2017corralling} we use some individual threshold ($\rho_{t,i}$) to decide when to increase the learning rate
and update these thresholds in some doubling manner. 
Also, we mix $w_t$ with a small amount of uniform exploration to further ensure that it cannot be too small.
The final algorithm, call \textsc{Broad-OMD+}, is presented in Algorithm~\ref{alg:increasing} (only for the MAB setting for simplicity).
We prove the following theorem.
%The idea is to let the $-D_\psi(u,w_{t+1}^\p)$ term in Lemma \ref{thm:general_instantaneous} come into help. Similar to \citep{agarwal2017corralling}, with this increasing learning rate mechanism, we need to control the magnitude of the loss estimator $\hat{\ell}_t$, so we mix a small enough probability (i.e., $\frac{1}{KT}\mathbf{1}$) to $w_t$.  We rewrite our algorithm as \textsc{Broad-OMD+} in Algorithm \ref{alg:increasing}. We write it in the MAB setting for simplicity, and we also fix the choices of $a_t$, $\hat{\ell}_t$ in the algorithm. 

\begin{algorithm}[t]
\DontPrintSemicolon
\caption{\textsc{Broad-OMD}+ (specialized for MAB)}
\label{alg:increasing}
\textbf{Define:} $\kappa=e^{\frac{1}{\ln T}}$, $\psi_t(w)= \sum_{i=1}^K \frac{1}{\eta_{t,i}} \ln \frac{1}{w_{i}}$. \\
\textbf{Initialize}: $w^\p_{1, i} = 1/K$, $\rho_{1, i} = 2K$ for all $i \in [K]$.\\
\For{$t=1, 2, \ldots, T$}{
   $w_t = \argmin_{w\in\Delta_K} \big\{ \inner{w,m_t} + D_{\psi_t}(w, w_t^\p)\big\}$. \\
   $\bar{w}_{t} = (1-\frac{1}{T})w_{t} + \frac{1}{KT}\mathbf{1}$. \\
   Draw $i_t\sim \bar{w}_t$, suffer loss $\ell_{t,i_t}$, and let $\hat{\ell}_{t,i}=\frac{(\ell_{t,i}-m_{t,i})\mathbbm{1}\{i_t=i\}}{\bar{w}_{t,i}}+m_{t,i}$.\\
   Let $a_{t,i}=6\eta_{t,i}w_{t,i}(\hat{\ell}_{t,i}-m_{t,i})^2$.
   \\
   $w_{t+1}^\p=\argmin_{w\in\Delta_K} \big\{ \langle w,\hat{\ell}_t+a_t\rangle  +D_{\psi_t}(w, w^\p_t) \big\}.$ \\
   \For{$i=1, \ldots, K$}{
      \lIf{$\frac{1}{\bar{w}_{t,i}} > \rho_{t,i}$}{
         $\rho_{t+1,i}=\frac{2}{\bar{w}_{t,i}}$, $\eta_{t+1,i}=\kappa\eta_{t,i}$. 
      }
      \lElse{
         $\rho_{t+1,i}=\rho_{t,i}$, $\eta_{t+1,i}=\eta_{t,i}$.
      }
   }
}    
\end{algorithm}

%One can verify that even with the above modifications, Lemma \ref{thm:general_instantaneous} and \ref{lemma:MAB_condition} still hold because we do not change the definitions of $w_t$ or $w_{t+1}^\p$ (i.e., update rules \eqref{eqn:update_rule_1}, \eqref{eqn:update_rule_2} remain the same). Based on them, we can prove the following theorem. 
\begin{theorem}
\label{thm:path_length}
\textsc{Broad-OMD+} with $m_{t,i}=\ell_{\alpha_i(t), i}$ and $\eta_{1,i}=\eta\leq \frac{1}{810}$ guarantees 
\begin{align*}
\mathbb{E}\left[\reg_T \right] \leq \frac{2K\ln T}{\eta} + \mathbb{E}[\rho_{T+1,i^*}]\left( \frac{-1}{40\eta\ln T} + 90\eta V_{T,i^*} \right) + \mathcal{O}\left( 1 \right)
\end{align*}
when $T\geq 3$. Picking $\eta = \min\Big\{\frac{1}{810}, \frac{1}{60\sqrt{V_{T,i^*} \ln T}}\Big\}$ so that the second term is non-positive leads to $\mathbb{E}\left[\reg_T \right] = \tilde{\mathcal{O}}\left( K\sqrt{V_{T,i^*}}+K \right)$. 
\end{theorem}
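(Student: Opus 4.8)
The plan is to apply the master inequality~\eqref{eqn:regret_bound:a_t_neq_0} of Theorem~\ref{lemma:MAB_condition} along the OMD iterates $w_t$, and then exhibit the cancellation between the positive optimism term $\sum_t\inn{u,a_t}$ and the negative term produced by the increasing learning rate, both being of order $\rho_{T+1,i^*}$ times an appropriate factor. First I would reduce $\mathbb{E}[\reg_T]$ to the form that Theorem~\ref{lemma:MAB_condition} controls. Since $i_t\sim\bar w_t$ and $\hat\ell_t$ is conditionally unbiased, for any comparator $u$ one has $\mathbb{E}[\reg_T]=\mathbb{E}\sum_t\inn{\bar w_t-u,\hat\ell_t}+\sum_t\inn{u-b^*,\ell_t}$. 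Choosing $u=(1-\tfrac1T)\mathbf{e}_{i^*}+\tfrac{1}{KT}\mathbf 1$ keeps $u$ in the relative interior, so $\ln(w_{1,i}'/u_i)$ is finite, while both the comparator shift $\sum_t\inn{u-\mathbf{e}_{i^*},\ell_t}$ and the mixing shift $\sum_t\inn{\bar w_t-w_t,\hat\ell_t}$ are $\mathcal{O}(1)$ because $\norm{u-\mathbf{e}_{i^*}}_1$ and $\norm{\bar w_t-w_t}_1$ are $\mathcal{O}(1/T)$. This leaves $\mathbb{E}\sum_t\inn{w_t-u,\hat\ell_t}$; its bound~\eqref{eqn:regret_bound:a_t_neq_0} has a first term $\tfrac1\eta\sum_i\ln\tfrac{w_{1,i}'}{u_i}$ equal to $\mathcal{O}\big(\tfrac{K\ln T}{\eta}\big)$ (at most $\tfrac{2K\ln T}{\eta}$) since $u_i=\tfrac1{KT}$ for $i\neq i^*$.

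Verifying the three hypotheses of Theorem~\ref{lemma:MAB_condition} is where the constants $T\ge 3$ and $\eta\le\frac{1}{810}$ enter. Because each learning-rate increase at least doubles the threshold while $1/\bar w_{t,i}\le KT$ and $\rho_{1,i}=2K$, the number of increases per arm is at most $\log_2 T$, so $\eta_{t,i}\le\kappa^{\log_2 T}\eta=e^{1/\ln 2}\eta<5\eta\le\frac{1}{162}$, giving condition (i). The mixing gives $w_{t,i}\le\frac{\bar w_{t,i}}{1-1/T}\le\frac32\bar w_{t,i}$ for $T\ge 3$, and $\abs{\hat\ell_{t,i}-m_{t,i}}\le\frac{2}{\bar w_{t,i}}$ since losses lie in $[-1,1]$; together these yield $w_{t,i}\abs{\hat\ell_{t,i}-m_{t,i}}\le 3$ (condition (ii)) and, as only the drawn arm contributes, $\sum_i\eta_{t,i}w_{t,i}^2(\hat\ell_{t,i}-m_{t,i})^2\le 9\eta_{t,i}\le\frac{1}{18}$ (condition (iii)).

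For the positive term I would reuse the computation~\eqref{eqn:path_length_trick}, now with $\bar w$ in place of $w$: using $w_{t,i^*}\abs{\hat\ell_{t,i^*}-m_{t,i^*}}\le 3$ and the telescoping bound $\abs{\ell_{t,i^*}-\ell_{\alpha_{i^*}(t),i^*}}\le\sum_{s=\alpha_{i^*}(t)+1}^t\abs{\ell_{s,i^*}-\ell_{s-1,i^*}}$, the disjointness of these path segments across the rounds $\{t:i_t=i^*\}$ shows that $\sum_t\inn{u,a_t}$ is of order $\eta\big(\max_t\tfrac{1}{\bar w_{t,i^*}}\big)V_{T,i^*}$. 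The threshold design forces $\max_t\frac{1}{\bar w_{t,i^*}}\le\rho_{T+1,i^*}$ (right after any increase $\frac1{\bar w}$ equals $\rho/2$, and otherwise it is below $\rho$), so in expectation this term is at most $90\,\eta\,V_{T,i^*}\,\mathbb{E}[\rho_{T+1,i^*}]$, matching the positive part of the bracket.

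The main obstacle is the negative term $\sum_t\big(\frac{1}{\eta_{t+1,i^*}}-\frac1{\eta_{t,i^*}}\big)h\big(\frac{u_{i^*}}{w_{t+1,i^*}'}\big)$: I must show it is at most $-\frac{\mathbb{E}[\rho_{T+1,i^*}]}{40\,\eta\ln T}$ so that it dominates the positive term. Every summand is nonpositive (the coefficient is $\le 0$ since $\eta_{t,i^*}$ is nondecreasing and $h\ge0$), so it suffices to lower-bound the magnitude by the single contribution of the \emph{last} increase round $\tau$ for arm $i^*$. There I would (a) relate $w_{\tau+1,i^*}'$ to $\bar w_{\tau,i^*}$ up to a constant factor and use $u_{i^*}\approx 1$ to get $h\big(\frac{u_{i^*}}{w_{\tau+1,i^*}'}\big)$ of order $\frac1{\bar w_{\tau,i^*}}=\frac{\rho_{T+1,i^*}}2$, and (b) bound the coefficient $\frac1{\eta_{\tau,i^*}}(1-\frac1\kappa)$ from below by an order-$\frac1{\eta\ln T}$ quantity using $1-e^{-1/\ln T}$ of order $\frac1{\ln T}$ and $\eta_{\tau,i^*}\le 5\eta$; the $i\neq i^*$ summands are nonpositive and are simply discarded. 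Pinning the constant down to $40$ is the delicate bookkeeping. Combining the three pieces reproduces the displayed inequality, and finally choosing $\eta=\min\{\frac1{810},\frac1{60\sqrt{V_{T,i^*}\ln T}}\}$ makes the bracket $\frac{-1}{40\eta\ln T}+90\eta V_{T,i^*}$ nonpositive (so that term drops since $\rho_{T+1,i^*}\ge 0$), leaving $\frac{2K\ln T}{\eta}+\mathcal{O}(1)=\tilde{\mathcal{O}}\big(K\sqrt{V_{T,i^*}}+K\big)$.
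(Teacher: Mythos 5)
Your proposal is correct and follows essentially the same route as the paper's proof: verify the three conditions of Theorem~\ref{lemma:MAB_condition} using $\eta_{t,i}\le 5\eta$ and the $\frac{1}{T}$-mixing, take the smoothed comparator, bound $\sum_t\inn{u,a_t}$ by $90\eta\rho_{T+1,i^*}V_{T,i^*}$ via the telescoping path-length trick, and extract the negative term from the single last learning-rate increase for $i^*$ by relating $w'_{\tau+1,i^*}$ to $\bar w_{\tau,i^*}=2/\rho_{T+1,i^*}$. The only loose ends are bookkeeping you already flag (the $-1-\ln(\cdot)$ part of $h$ being absorbed into the $\frac{2K\ln T}{\eta}$ term, writing the coefficient as $\frac{\kappa-1}{\eta_{\tau+1,i^*}}$ so that $\eta_{\tau+1,i^*}\le 5\eta$ applies, and the trivial case where the learning rate is never increased).
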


\section{\textsc{Broad-OMD} with Option II}
\label{section:Option II}
%As mentioned, \textsc{Broad-OMD} with $a_t=\mathbf{0}$ is simply an optimistic OMD with the log-barrier regularizer. However, we will see its several interesting applications that were not exploited in previous works. The following lemma holds for general semi-bandit problems. 

In this section, we move on to discuss \textsc{Broad-OMD} with Option II, that is, $a_t = \mathbf{0}$. 
We also fix $\eta_{t,i} = \eta$, although in the doubling trick discussed later, different values of $\eta$ will be used for different runs of  \textsc{Broad-OMD}.
Again we start with a general lemma that holds no matter what regularizer $\psi_t$ is used and what $m_t$ and $\hat{\ell}_t$ are.

\begin{lemma}
\label{lemma:simple_lemma}
For the update rules~\eqref{eqn:update_rule_1} and~\eqref{eqn:update_rule_2} with $a_t=\mathbf{0}$, we have for all $u\in \Omega$, 
\begin{align*}
\inn{w_t-u, \hat{\ell}_t}\leq D_{\psi_t}(u,w_t^\p)-D_{\psi_t}(u,w^\p_{t+1})+\inn{w_t-w_{t+1}^\p, \hat{\ell}_t-m_t}-A_t, 
\end{align*}
where $A_t\triangleq D_{\psi_t}(w_{t+1}^\p, w_t)+D_{\psi_t}(w_t, w_t^\p)\geq 0$.
\end{lemma}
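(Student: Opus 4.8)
The plan is to carry out a standard optimistic online-mirror-descent analysis, driving everything through the first-order optimality conditions of the two projection steps together with the three-point identity for Bregman divergences, namely $D_{\psi_t}(x,z)-D_{\psi_t}(x,y)-D_{\psi_t}(y,z)=\inn{\nabla\psi_t(y)-\nabla\psi_t(z),\,x-y}$ for any $x,y,z\in\Omega$. Since $a_t=\mathbf{0}$, the update~\eqref{eqn:update_rule_2} reads $w_{t+1}^\p=\argmin_{w\in\Omega}\{\inn{w,\hat{\ell}_t}+D_{\psi_t}(w,w_t^\p)\}$, and I will exploit both this and~\eqref{eqn:update_rule_1}. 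Throughout I assume the minimizers land where $\psi_t$ is differentiable (true for the log-barrier on $\Omega=\text{conv}(\mathcal{X})$), so that the variational inequalities hold as stated.

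First I would split the target quantity as $\inn{w_t-u,\hat{\ell}_t}=\inn{w_{t+1}^\p-u,\hat{\ell}_t}+\inn{w_t-w_{t+1}^\p,\hat{\ell}_t-m_t}+\inn{w_t-w_{t+1}^\p,m_t}$. The middle term is exactly the $\inn{w_t-w_{t+1}^\p,\hat{\ell}_t-m_t}$ appearing in the claim, so it suffices to bound the other two. For the first piece, I write the optimality condition of~\eqref{eqn:update_rule_2} at $w_{t+1}^\p$ tested against $u$, namely $\inn{\hat{\ell}_t+\nabla\psi_t(w_{t+1}^\p)-\nabla\psi_t(w_t^\p),\,u-w_{t+1}^\p}\ge 0$; rearranging and invoking the three-point identity with $(x,y,z)=(u,w_{t+1}^\p,w_t^\p)$ converts the gradient terms into divergences, yielding $\inn{w_{t+1}^\p-u,\hat{\ell}_t}\le D_{\psi_t}(u,w_t^\p)-D_{\psi_t}(u,w_{t+1}^\p)-D_{\psi_t}(w_{t+1}^\p,w_t^\p)$.

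For the last piece I would similarly use the optimality condition of~\eqref{eqn:update_rule_1} at $w_t$, but tested against the point $w_{t+1}^\p$ rather than against the comparator $u$; this gives $\inn{m_t+\nabla\psi_t(w_t)-\nabla\psi_t(w_t^\p),\,w_{t+1}^\p-w_t}\ge0$, and the three-point identity with $(x,y,z)=(w_{t+1}^\p,w_t,w_t^\p)$ gives $\inn{w_t-w_{t+1}^\p,m_t}\le D_{\psi_t}(w_{t+1}^\p,w_t^\p)-D_{\psi_t}(w_{t+1}^\p,w_t)-D_{\psi_t}(w_t,w_t^\p)$. Adding the two displayed bounds together with the untouched middle term, the two occurrences of $D_{\psi_t}(w_{t+1}^\p,w_t^\p)$ cancel with opposite signs, leaving precisely $D_{\psi_t}(u,w_t^\p)-D_{\psi_t}(u,w_{t+1}^\p)+\inn{w_t-w_{t+1}^\p,\hat{\ell}_t-m_t}-A_t$ with $A_t=D_{\psi_t}(w_{t+1}^\p,w_t)+D_{\psi_t}(w_t,w_t^\p)$, as claimed; non-negativity of $A_t$ is immediate from non-negativity of Bregman divergences of the convex $\psi_t$.

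The argument is essentially routine, so there is no deep obstacle; the single point demanding care is the choice of test point in the two optimality conditions. Testing the $w_t$-condition against $w_{t+1}^\p$ (and not against $u$) is exactly what manufactures the term $D_{\psi_t}(w_{t+1}^\p,w_t^\p)$ that cancels the stray term produced by the $w_{t+1}^\p$-condition; a different test point would leave an uncanceled divergence and destroy the clean form needed downstream. Keeping the signs and argument order of the three-point identity consistent across the two applications is the only other source of possible slips.
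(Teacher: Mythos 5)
Your proof is correct and follows essentially the same route as the paper's: the paper also splits $\inn{w_t-u,\hat{\ell}_t}$ into the three terms $\inn{w_{t+1}^\p-u,\hat{\ell}_t}+\inn{w_t-w_{t+1}^\p,\hat{\ell}_t-m_t}+\inn{w_t-w_{t+1}^\p,m_t}$, bounds the first via the optimality of $w_{t+1}^\p$ tested against $u$ and the third via the optimality of $w_t$ tested against $w_{t+1}^\p$, and cancels the two $D_{\psi_t}(w_{t+1}^\p,w_t^\p)$ terms. The only cosmetic difference is that the paper packages the optimality-plus-three-point-identity step as a quoted standard OMD inequality, whereas you derive it explicitly.
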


The proof is standard as in typical OMD analysis. The next theorem then shows how the term $\inn{w_t-w_{t+1}^\p, \hat{\ell}_t-m_t}$ is further bounded
when $\psi_t$ is the log-barrier as in \textsc{Broad-OMD}. 

\begin{theorem}
\label{lemma:second_order_regret_bound}
If the following three conditions hold for all $t,i$: 
(i) $\eta \leq \frac{1}{162}$,
(ii) $w_{t,i}\abs{\hat{\ell}_{t,i}-m_{t,i}}\leq 3$,
(iii) $\eta\sum_{i=1}^K w_{t,i}^2(\hat{\ell}_{t,i}-m_{t,i})^2\leq \frac{1}{18}$
(same as those in Theorem \ref{lemma:MAB_condition}), %and furthermore assume $\{\eta_{t,i}\}$ are non-decreasing for all $i$,
then \textsc{Broad-OMD} with $a_t=\mathbf{0}$ guarantees for any $u\in \Omega$, 
\begin{align}
\sum_{t=1}^T \inn{w_t-u, \hat{\ell}_t}\leq \sum_{i=1}^K  \frac{\ln \frac{w^\p_{1,i}}{u_i}}{\eta}  +3\eta\sum_{t=1}^T\sum_{i=1}^K w_{t,i}^2(\hat{\ell}_{t,i}-m_{t,i})^2-\sum_{t=1}^T A_t. \label{eqn:second_order_regret_bound}
\end{align}
For MAB, the last term can further be lower bounded by $\sum_{t=1}^T A_t \geq \frac{1}{48\eta}\sum_{t=2}^T \sum_{i=1}^K\frac{(w_{t,i}-w_{t-1,i})^2}{w_{t-1,i}^2}$.
\end{theorem}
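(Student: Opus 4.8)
The plan is to start from Lemma~\ref{lemma:simple_lemma}, which already isolates the ``prediction drift'' term $\inn{w_t - w_{t+1}^\p, \hat\ell_t - m_t}$ and the negative term $-A_t$, and to reduce the whole statement to a clean telescoping of the Bregman terms plus a stability bound on the drift. Because Option~II fixes $\eta_{t,i}=\eta$, the regularizer $\psi_t\equiv\psi$ is time-invariant, so summing the first two terms of Lemma~\ref{lemma:simple_lemma} over $t$ telescopes to $D_{\psi}(u,w_1^\p)-D_{\psi}(u,w_{T+1}^\p)\le D_{\psi}(u,w_1^\p)$, using $D_{\psi}\ge 0$. To match the stated leading term I would then bound $D_{\psi}(u,w_1^\p)\le \psi(u)-\psi(w_1^\p)=\sum_i \frac{1}{\eta}\ln\frac{w_{1,i}^\p}{u_i}$, where the inequality is the first-order optimality of $w_1^\p=\argmin_{w\in\Omega}\psi(w)$, namely $\inn{\nabla\psi(w_1^\p),u-w_1^\p}\ge 0$. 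Carrying the $-\sum_t A_t$ term through unchanged, the first inequality follows once the drift is controlled.

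The crux, and the main obstacle, is to show $\inn{w_t-w_{t+1}^\p,\hat\ell_t-m_t}\le 3\eta\sum_i w_{t,i}^2(\hat\ell_{t,i}-m_{t,i})^2$ \emph{without} spending any of $A_t$, since the target keeps $-A_t$ intact; this pure quadratic bound is exactly the strong local property the log-barrier buys us. I would run the optimistic-OMD stability argument in the local norm induced by the log-barrier, whose Hessian at $w_t$ is $\text{diag}(1/(\eta w_{t,i}^2))$: by a generalized Cauchy--Schwarz, $\inn{w_t-w_{t+1}^\p,\hat\ell_t-m_t}\le \norm{w_t-w_{t+1}^\p}_{w_t}\cdot\big(\eta\sum_i w_{t,i}^2(\hat\ell_{t,i}-m_{t,i})^2\big)^{1/2}$, so it remains to show the one-step displacement $\norm{w_t-w_{t+1}^\p}_{w_t}$ is itself of the order of the dual norm of $\hat\ell_t-m_t$. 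This is precisely where conditions (i)--(iii) enter: since they coincide with those of Theorem~\ref{lemma:MAB_condition}, I expect to reuse the same local-norm machinery to certify that the coordinates of $w_t$, $w_t^\p$, and $w_{t+1}^\p$ stay within constant multiplicative factors of one another (so the log-barrier Hessian is essentially constant over the relevant neighborhood), which converts the displacement into the claimed quadratic bound with the explicit constant $3$. Pinning down these constants against the self-concordance-type estimates for $h$ is the fiddly part.

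For the MAB refinement I would lower bound $\sum_t A_t$ by the movement $\sum_i (w_{t,i}-w_{t-1,i})^2/w_{t-1,i}^2$. The key observation is that the two halves of $A_t$ chain consecutive iterates: dropping one nonnegative half of each, $A_{t-1}\ge D_{\psi}(w_t^\p,w_{t-1})$ and $A_t\ge D_{\psi}(w_t,w_t^\p)$, so $A_{t-1}+A_t\ge D_{\psi}(w_t^\p,w_{t-1})+D_{\psi}(w_t,w_t^\p)$ links $w_{t-1}\to w_t^\p\to w_t$. Using the multiplicative stability above, each $D_{\psi}(x,y)=\frac{1}{\eta}\sum_i h(x_i/y_i)$ can be lower bounded via $h(z)\ge c(z-1)^2$ (valid precisely because the ratios are bounded) by $\frac{c}{\eta}\sum_i (x_i-y_i)^2/y_i^2$, i.e.\ by a squared local norm; a triangle inequality in this nearly constant local norm together with $a^2+b^2\ge \tfrac12(a+b)^2$ then gives $A_{t-1}+A_t\ge \frac{c'}{\eta}\sum_i (w_{t,i}-w_{t-1,i})^2/w_{t-1,i}^2$. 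Summing over $t=2,\ldots,T$, each $A_t$ is counted at most twice, costing a factor $2$ and producing the stated constant $\tfrac{1}{48}$. I expect the delicate points here to be, once again, tracking constants and verifying that the bounded-ratio regime certified by conditions (i)--(iii) is exactly what licenses both the quadratic lower bound on $h$ and the local triangle inequality.
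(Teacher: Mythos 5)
Your proposal follows essentially the same route as the paper's proof: start from Lemma~\ref{lemma:simple_lemma}, telescope the Bregman terms using the optimality of $w_1^\p$, bound the drift term by Cauchy--Schwarz in the log-barrier local norm together with the displacement bound $\norm{w_t-w_{t+1}^\p}_{t,w_t}\le 3\norm{\hat{\ell}_t-m_t}^*_{t,w_t}$ (the paper's Lemmas~\ref{lemma:condition_automatic_hold} and~\ref{lemma:stability_under_condition}), and obtain the MAB refinement by chaining $w_{t-1}\to w_t^\p\to w_t$ via $h(y)\ge (y-1)^2/6$ on $[\tfrac12,2]$ and $a^2+b^2\ge\tfrac12(a+b)^2$. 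The only cosmetic difference is that the paper does not double-count the $A_t$'s but splits each into its two Bregman halves and re-indexes, with the denominator swap $w_{t,i}^{\p 2}\to w_{t-1,i}^2$ handled by the multiplicative stability of Lemma~\ref{lemma:2times_bound}; the constants then work out the same way.
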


In bound~\eqref{eqn:second_order_regret_bound}, 
the first term can again be bounded by $\frac{K \ln T}{\eta}$ via picking an appropriate $u$.
The last negative term is useful when we use the algorithm to play games, which is discussed in Section~\ref{subsection:games}.
The second term is the key term, which, compared to the key term $\sum_{t=1}^T \inn{u,a_t}$ in Eq.~\eqref{eqn:regret_bound:a_t_neq_0} for \textsc{Broad-OMD} with Option I,
has an extra $w_{t,i}$ and is in terms of all arms instead of the arms that $u$ picks.
As a comparison to Corollary~\ref{cor:clear_corollary}, if we pick $\hat{\ell}_{t,i}=\frac{(\ell_{t,i}-m_{t,i})\mathbbm{1}\{i\in b_t\}}{w_{t,i}}+m_{t,i}$,
we obtain an expected regret bound in terms of $\mathbb{E}\left[ \sum_{t=1}^T \sum_{i \in b_t} (\ell_{t,i} - m_{t,i})^2 \right] = 
\mathbb{E}\left[ \sum_{t=1}^T \sum_{i =1}^K w_{t,i} (\ell_{t,i} - m_{t,i})^2 \right]$,
which is not as easy to interpret as the bound in Corollary~\ref{cor:clear_corollary}.
However, in the following subsections we will discuss in details how to apply bound~\eqref{eqn:second_order_regret_bound} to obtain more concrete results.

%if we select $\eta_{t,i}=\eta$, and $u=\left(1-\frac{1}{T}\right)\mathbf{e}_{i^*}+\frac{1}{T}w_1^\p$, which makes $\frac{w_{1,i}^\p}{u_i}\leq T$, then it implies
%\begin{align}
%\sum_{t=1}^T\inn{w_t-\mathbf{e}_{i^*}, \hat{\ell}_t} \leq \frac{K\ln T}{\eta} + 3\eta\sum_{t=1}^T \sum_{i=1}^K w_{t,i}^2(\hat{\ell}_{t,i}-m_{t,i})^2 + B,  \label{eqn:double_trick_bound1}
%\end{align}
%where $B=\sum_{t=1}^T \inn{-\frac{1}{T}\mathbf{e}_{i^*}+\frac{1}{KT}\mathbf{1}, \hat{\ell}_t}$. 

Before that, we point out that since the bound is now in terms of all arms, % instead of the comparator $u$,
we can in fact apply a doubling trick to make the algorithm parameter-free!
The idea is that 
%The benefit of \eqref{eqn:double_trick_bound1} is that the first two terms on the right-hand side is independent of the regret comparator $\mathbf{e}_{i^*}$, 
%and thus we can use the standard doubling trick to tune the learning rate: 
as long as the observable term $3\eta\sum_{s=1}^t \sum_{i=1}^K w_{s,i}^2(\hat{\ell}_{s,i}-m_{s,i})^2$ becomes larger than $\frac{K\ln T}{\eta}$ at some round $t$, 
we half the learning rate $\eta$ and restart the algorithm. 
This avoids the need for optimal tuning done in Section~\ref{section:Option I}.
%Besides, $\mathbb{E}[B]=\mathcal{O}(1)$ for any $i^*$, 
%so we can fairly ignore the $B$ term in \eqref{eqn:double_trick_bound1} if we only care about the expected regret. 
We formally present the algorithm in Algorithm \ref{alg:doubling} (in Appendix~\ref{app:doubling_trick}) and show its regret bound below.

\begin{theorem}
\label{thm:doubling_trick_theorem}
If conditions (ii) and (iii) in Theorem~\ref{lemma:second_order_regret_bound} hold, then Algorithm \ref{alg:doubling} guarantees
\begin{align*}
\mathbb{E}[\reg_T]=\mathcal{O}\left(\sqrt{(K\ln T)\mathbb{E}\left[\sum_{t=1}^T\sum_{i=1}^Kw_{t,i}^2(\hat{\ell}_{t,i}-m_{t,i})^2\right]}+K_0K\ln T\right).
\end{align*}
\end{theorem}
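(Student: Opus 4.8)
The plan is to analyze the restart structure of Algorithm~\ref{alg:doubling} epoch by epoch and defer all probabilistic reasoning to a single expectation at the very end. Write $\eta_1$ for the initial learning rate, chosen as $\eta_1 = \Theta(1/K_0)$ so that conditions~(ii) and~(iii) of Theorem~\ref{lemma:second_order_regret_bound} hold throughout the run (they only become easier to satisfy as $\eta$ shrinks). On a given sample path let the run consist of epochs $j = 1, \ldots, N$, where epoch $j$ uses $\eta_j = \eta_1 2^{-(j-1)}$ and terminates exactly when the \emph{observable} quantity $3\eta_j \sum_{s\in\text{epoch } j,\, s\le t}\sum_i w_{s,i}^2(\hat{\ell}_{s,i}-m_{s,i})^2$ first exceeds $K\ln T/\eta_j$. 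Both $N$ and the epoch boundaries are random. I would fix once and for all a comparator $u \in \Omega$ that is a close neighbor of $b^*$ (as in~\citep{foster2016learning, agarwal2017corralling}), so that $\ln(w^\p_{1,i}/u_i) = \mathcal{O}(\ln T)$ and $\sum_t \inn{b^* - u, \ell_t} = \mathcal{O}(1)$.

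First I would bound the estimated regret within each epoch. Applying Theorem~\ref{lemma:second_order_regret_bound} to epoch $j$ in isolation (with its fixed $\eta_j$ and fresh initialization) and discarding the non-positive $-\sum A_t$ term gives $\sum_{t\in\text{epoch }j}\inn{w_t - u, \hat{\ell}_t} \le K\ln T/\eta_j + 3\eta_j \Sigma_j$, where $\Sigma_j$ is the full within-epoch second-order sum. For every epoch that actually triggered a restart, the stopping rule together with condition~(iii) (which caps the last-round increment of the observable quantity by $\mathcal{O}(1)$) shows $3\eta_j \Sigma_j \le K\ln T/\eta_j + \mathcal{O}(1)$, so the per-epoch estimated regret is at most $2K\ln T/\eta_j + \mathcal{O}(1)$; the same bound holds trivially for the final, possibly incomplete, epoch. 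Summing over the $N$ epochs and using the geometric identity $\sum_{j=1}^N 1/\eta_j < 2/\eta_N$ yields the pathwise estimate $\sum_{t=1}^T \inn{w_t - u, \hat{\ell}_t} \le 4K\ln T/\eta_N + \mathcal{O}(N)$.

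The crux is to convert $1/\eta_N$ into the advertised data-dependent quantity. The halving that created $\eta_N$ was triggered at the end of epoch $N-1$, where $3\eta_{N-1}\Sigma_{N-1} > K\ln T/\eta_{N-1}$; using $\eta_{N-1} = 2\eta_N$ and $\Sigma \triangleq \sum_{t=1}^T\sum_i w_{t,i}^2(\hat\ell_{t,i}-m_{t,i})^2 \ge \Sigma_{N-1}$ gives the pathwise relation $1/\eta_N < \sqrt{12\Sigma/(K\ln T)}$ whenever $N \ge 2$, while $1/\eta_N = 1/\eta_1$ when $N = 1$; in all cases $1/\eta_N \le 1/\eta_1 + \sqrt{12\Sigma/(K\ln T)}$. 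Since condition~(ii) forces $\Sigma = \mathcal{O}(KT)$ pathwise, this also bounds $N = \mathcal{O}(\ln T)$, absorbing the $\mathcal{O}(N)$ term. Substituting $1/\eta_1 = \Theta(K_0)$ produces the pathwise estimate $\sum_t \inn{w_t - u, \hat\ell_t} = \mathcal{O}\big(\sqrt{(K\ln T)\Sigma} + K_0 K\ln T\big)$.

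Finally I would take expectations. By the unbiasedness $\mathbb{E}[\hat\ell_t \mid \mathcal{F}_{t-1}] = \ell_t$, the identity $\mathbb{E}[b_t]=w_t$, and the determinism of $u$ under the oblivious adversary, the quantity $\mathbb{E}[\sum_t \inn{w_t-u,\hat\ell_t}]$ equals $\mathbb{E}[\reg_T]$ up to the $\mathcal{O}(1)$ comparator-perturbation term. Taking the expectation of the pathwise bound and applying Jensen's inequality (concavity of $\sqrt{\cdot}$) to move the expectation inside the root, $\mathbb{E}[\sqrt{\Sigma}] \le \sqrt{\mathbb{E}[\Sigma]}$, delivers the claim $\mathbb{E}[\reg_T] = \mathcal{O}\big(\sqrt{(K\ln T)\mathbb{E}[\Sigma]} + K_0 K\ln T\big)$. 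The main obstacle I anticipate is exactly this interplay between randomness and the data-adaptive $\eta_N$: the per-epoch bounds hold only pathwise, the number and lengths of epochs are themselves random, and one must make sure the stopping-rule inequality controlling $1/\eta_N$ by $\sqrt{12\Sigma/(K\ln T)}$ survives as a genuinely pathwise statement before Jensen can legitimately be invoked on $\Sigma$; one must also verify that restarting the OMD iterate at the beginning of each epoch preserves both the unbiasedness of $\hat\ell_t$ and the $\mathcal{F}_{t-1}$-measurability of $w_t$ that the regret-to-estimated-regret conversion relies on.
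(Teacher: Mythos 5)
Your proposal is correct and follows essentially the same route as the paper's proof: a per-epoch application of Theorem~\ref{lemma:second_order_regret_bound} with the stopping rule (plus condition~(iii)) capping each epoch's second-order term, a geometric sum over epochs dominated by the final learning rate, the restart trigger of the last completed epoch converting $1/\eta_N$ into $\sqrt{\Sigma/(K\ln T)}$ (with the no-restart case handled separately), and Jensen's inequality after taking expectations. The only cosmetic difference is that your comparator-perturbation term is $\mathcal{O}(K_0)$ rather than $\mathcal{O}(1)$, which is absorbed by the $K_0K\ln T$ term either way.
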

In the following subsections, we instantiate Theorem~\ref{lemma:second_order_regret_bound} or~\ref{thm:doubling_trick_theorem} with different $m_{t}$ and $\hat{\ell}_t$. Again, for simplicity we only focus on the MAB setting. 

\subsection{Another Path-length Bound}
\label{subsection:first_order_better_k}
If we configure \textsc{Broad-OMD} with Option II in the same way as in Section~\ref{subsubsection:path-length},
that is, $m_{t,i}=\ell_{\alpha_i(t),i}$ and $\hat{\ell}_{t,i}=\frac{(\ell_{t,i}-m_{t,i})\mathbbm{1}\{i_t=i\}}{w_{t,i}}+m_{t,i}$.
Then the key term in Eq.~\eqref{eqn:second_order_regret_bound} can be bounded as follows:
\begin{align}
&\sum_{t=1}^T \sum_{i=1}^K  w_{t,i}^2(\hat{\ell}_{t,i}-m_{t,i})^2= \sum_{t=1}^T\sum_{i=1}^K (\ell_{t,i}-\ell_{\alpha_i(t),i})^2\mathbbm{1}\{i_t=i\} 
= \sum_{i=1}^K \sum_{t:i_t=i} (\ell_{t,i}-\ell_{\alpha_i(t),i})^2 \nonumber \\
&\leq 2 \sum_{i=1}^K \sum_{t:i_t=i} \abs{\ell_{t,i}-\ell_{\alpha_i(t),i}} 
\leq 2 \sum_{i=1}^K \sum_{t:i_t=i} \sum_{s=\alpha_i(t)+1}^t\abs{\ell_{s,i}-\ell_{s-1,i}} \leq 2 \sum_{i=1}^K V_{T,i}. \label{eqn:path_length_calculation_1}
\end{align}
Unlike Eq.~\eqref{eqn:path_length_trick}, this is bounded even without the help of negative regret, but the price is that now the regret depends on the sum of all arms' path-length. With this calculation, we obtain the following corollary.
\begin{cor}
\label{cor:path_length_bound_1}
\textsc{Broad-OMD} with $a_{t,i}=0$, $m_{t,i}=\ell_{\alpha_i(t),i}$, $\hat{\ell}_{t,i}=\frac{(\ell_{t,i}-m_{t,i})\mathbbm{1}\{i_t=i\}}{w_{t,i}}+m_{t,i}$, and $\eta_{t,i}=\eta\leq \frac{1}{162}$ guarantees 
\begin{align*}
\mathbb{E}\left[\reg_T\right]\leq\mathcal{O}\left(\frac{K\ln T}{\eta}\right) + 6\eta\sum_{i=1}^K V_{T,i} -\mathbb{E}\left[\sum_{t=2}^{T} \sum_{i=1}^K\frac{(w_{t,i}-w_{t-1,i})^2}{48\eta w_{t-1,i}^2} \right]\leq \mathcal{O}\left( \frac{K\ln T}{\eta} + \eta\sum_{i=1}^K V_{T,i}  \right). 
\end{align*}
Using the doubling trick (Algorithm~\ref{alg:doubling}), we achieve expected regret $\tilde{\mathcal{O}}\left(\sqrt{K\sum_{i=1}^K V_{T,i}} + K\right)$.
\end{cor}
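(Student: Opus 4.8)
The plan is to read this corollary as a direct instantiation of Theorem~\ref{lemma:second_order_regret_bound}: the configuration ($a_t=\mathbf{0}$, log-barrier $\psi_t$, fixed $\eta$) matches that theorem exactly, and the path-length bookkeeping has essentially already been carried out in Eq.~\eqref{eqn:path_length_calculation_1}. So the first task is to check that the three conditions of Theorem~\ref{lemma:second_order_regret_bound} hold. With the chosen estimator, $\hat{\ell}_{t,i}-m_{t,i}=\frac{(\ell_{t,i}-m_{t,i})\mathbbm{1}\{i_t=i\}}{w_{t,i}}$, so $w_{t,i}\abs{\hat{\ell}_{t,i}-m_{t,i}}=\abs{\ell_{t,i}-m_{t,i}}\mathbbm{1}\{i_t=i\}\leq 2$, which gives condition (ii). For condition (iii), since in MAB exactly one coordinate of $\mathbbm{1}\{i_t=i\}$ is nonzero, $\sum_{i=1}^K w_{t,i}^2(\hat{\ell}_{t,i}-m_{t,i})^2=(\ell_{t,i_t}-m_{t,i_t})^2\leq 4$, and thus $\eta\cdot 4\leq \frac{4}{162}<\frac{1}{18}$. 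Condition (i) is the standing assumption $\eta\leq\frac{1}{162}$.

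With the conditions in place, I would apply bound~\eqref{eqn:second_order_regret_bound}. To handle the comparator $b^*=\mathbf{e}_{i^*}$, whose zero coordinates make $\ln\frac{w^\p_{1,i}}{u_i}$ blow up, I compare instead against the perturbed point $u=(1-\tfrac{1}{T})\mathbf{e}_{i^*}+\tfrac{1}{KT}\mathbf{1}\in\Omega$. Since $w^\p_{1,i}=1/K$ (the uniform minimizer of $\psi_1$ on the simplex), this gives $\sum_{i=1}^K\ln\frac{w^\p_{1,i}}{u_i}=\mathcal{O}(K\ln T)$, hence a first term of order $\mathcal{O}(\frac{K\ln T}{\eta})$, while the shift $\sum_t\inn{\mathbf{e}_{i^*}-u,\ell_t}$ costs only $\mathcal{O}(1)$. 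Taking expectations, unbiasedness $\mathbb{E}[\hat{\ell}_{t,i}\mid\mathcal{F}_{t-1}]=\ell_{t,i}$ turns $\mathbb{E}[\sum_t\inn{w_t-u,\hat{\ell}_t}]$ into $\mathbb{E}[\sum_t\inn{w_t-u,\ell_t}]$, which, combined with $\mathbb{E}[b_t]=w_t$, equals $\mathbb{E}[\reg_T]$ up to the $\mathcal{O}(1)$ shift. For the second-order term I would invoke exactly Eq.~\eqref{eqn:path_length_calculation_1}: because the intervals $(\alpha_i(t),t]$ over the pulls of arm $i$ tile the time line, $\sum_{t,i}w_{t,i}^2(\hat{\ell}_{t,i}-m_{t,i})^2\leq 2\sum_{i=1}^K V_{T,i}$ holds for \emph{every} realization, and therefore in expectation too (the $V_{T,i}$ are deterministic since the adversary is oblivious). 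Retaining the negative term $-\frac{1}{48\eta}\sum_{t,i}\frac{(w_{t,i}-w_{t-1,i})^2}{w_{t-1,i}^2}$ coming from the lower bound on $\sum_t A_t$ yields the first displayed inequality; discarding that non-positive term gives the second.

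For the parameter-free claim I would avoid hand-tuning $\eta$ and instead invoke Theorem~\ref{thm:doubling_trick_theorem}, whose hypotheses (ii) and (iii) I have already verified. Substituting $\mathbb{E}\big[\sum_{t,i}w_{t,i}^2(\hat{\ell}_{t,i}-m_{t,i})^2\big]\leq 2\sum_{i=1}^K V_{T,i}$ and $K_0=1$ into its bound produces $\tilde{\mathcal{O}}\big(\sqrt{K\sum_{i=1}^K V_{T,i}}+K\big)$ immediately.

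The main point here is conceptual rather than a genuine obstacle: the extra factor $w_{t,i}$ carried by the Option II key term (versus the Option I term $\sum_t\inn{u,a_t}$) is precisely what makes the estimate $2\sum_i V_{T,i}$ hold pathwise, with no surviving $1/w_{t,i}$ factor left to cancel. This is why, unlike Section~\ref{subsubsection:path-length}, neither accumulated negative regret nor an increasing learning-rate schedule is needed; the price paid is that the bound now involves the path-lengths of all arms instead of only $i^*$. The only technical care required is the comparator perturbation and confirming condition (iii) uniformly over $t$.
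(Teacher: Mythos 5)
Your proposal is correct and follows essentially the same route as the paper's proof: verify the three conditions of Theorem~\ref{lemma:second_order_regret_bound}, apply it with the perturbed comparator $u=(1-\tfrac{1}{T})\mathbf{e}_{i^*}+\tfrac{1}{T}w_1^\p$, bound the key term pathwise via Eq.~\eqref{eqn:path_length_calculation_1}, keep the $A_t$ lower bound for the negative term, and invoke Theorem~\ref{thm:doubling_trick_theorem} for the parameter-free version. No gaps.
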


%Similarly to the discussion in the introduction, 
This new path-length bound could be $\sqrt{K}$ times better than the one in Section~\ref{subsubsection:path-length} in some cases,
but $\sqrt{T}$ times larger in others.
The extra advantage, however, is the negative term in the regret,\footnote{%
In fact, similar negative term, coming from the term $A_t$ in Lemma~\ref{thm:general_instantaneous}, also exists (but is omitted) in the bound of Theorem~\ref{thm:path_length}.
However, it is not clear to us how to utilize it in the same way as in Section~\ref{subsection:games} if we also want to exploit the other negative term coming from increasing learning rates.
} 
explicitly spelled out in Corollary~\ref{cor:path_length_bound_1},
which we discuss next.

\subsubsection{Fast convergence in bandit games}
\label{subsection:games}

It is well-known that in a repeated two-player zero-sum game, 
if both players play according to some no-regret algorithms,
then their average strategies converge to a Nash equilibrium~\citep{freund1999adaptive}.
Similar results for general multi-player games have also been discovered.
The convergence rate of these results is governed by the regret bounds of the learning algorithms,
and several recent works (such as those mentioned in the introduction) have developed adaptive algorithms with regret much smaller than the worst case $\mathcal{O}(\sqrt{T})$ 
by exploiting the special structure in this setup,
which translates to convergence rates faster than $1/\sqrt{T}$ in computing equilibriums.

%Here we investigate the application of \textsc{Broad-OMD} with path-length bound in multi-player repeated games. The game proceeds in rounds: in each round, every player takes an action and receives some utility, which is jointly determined by the actions of all players in that round. The goal of each player is to maximize his/her own accumulated utility. 

%It is known that if all players use no-regret algorithms to play the game, their average strategies converge to some sort of equilibrium, with the convergence rate governed by the regret bounds. One question that arises is whether there exists some family of algorithms such that if all players use algorithms from that family, the convergence can be faster. Researches in this line can be found in, e.g., \citep{daskalakis2015near, rakhlin2013optimization, syrgkanis2015fast, foster2016learning}. 

One way to obtain such fast rates is exactly via path-length regret bounds as shown in~\citep{rakhlin2013optimization, syrgkanis2015fast}. In these works, the convergence rate $1/T$ is achieved when the players have full-information feedback. 
We generalize their results to the bandit setting, and show that convergence rate of $ 1/T^{\frac{3}{4}} $ can be obtained. Though faster than $1/\sqrt{T}$, it is still slower than $1/T$ compared to the full-information setting, which is due to the fact that in bandit we only have first-order instead of second-order path-length bound. We detail the proofs and the remaining open problems in Appendix~\ref{appendix:game}. 

\subsection{Adapting to Stochastic Bandits}
\label{section:best of both worlds}
Our last application is to obtain an algorithm that simultaneously enjoys near optimal regret in both adversarial and stochastic setting. 
Specifically, the stochastic setting we consider here is as follows: there exists an arm $a^*$ and some fixed gap $\Delta > 0$ such that 
$\mathbb{E}_{\ell_t}\left[\ell_{t,i}-\ell_{t,a^*} | \ell_1, \ldots, \ell_{t-1}\right]\geq \Delta$ for all $i \neq a^*$ and $t\in[T]$.
In other words, arm $a^*$'s expected loss is always smaller than those of other arms by a fixed amount.
The classic i.i.d. MAB~\citep{lai1985asymptotically} is clearly a special case of ours.
Unlike the i.i.d. setting, however, we require neither independence nor identical distributions.

Note that $a^*$ can be different from the empirically best arm $i^*$ defined in Section~\ref{section:notations}. 
The expected regret in this setting is still with respect to $i^*$ and further takes into consideration the randomness over losses. 
In other words, we care about $\mathbb{E}_{\ell_1, \ldots, \ell_T}\left[\mathbb{E}_{i_1, \ldots, i_T}[\text{Reg}_T]\right]$, abbreviated as $\mathbb{E}[\text{Reg}_T]$ still.  

We invoke \textsc{Broad-OMD} with $a_t=\mathbf{0}$, $\hat{\ell}_{t,i}=\frac{\ell_{t,i}\mathbbm{1}\{i_t=i\}}{w_{t,i}}$ being the typical importance-weighted unbiased estimator,
and a somewhat special choice of $m_{t}$: $m_{t,i}=\ell_{t,i_t}$ for all $i$. 
%This $\hat{\ell}_{t}$ is just the vanilla inverse propensity weighted estimator without the prediction term, and therefore it is still an unbiased estimator of $\ell_{t}$. The more curious part is $m_{t}$. 
This choice of $m_t$ is seemingly invalid since it depends on $i_t$, which is drawn after we have constructed $w_t$ based on $m_t$ itself.
However, note that because $m_t$ now has identical coordinates, we have
$w_t = \argmin_{w\in\Delta_K} \big\{ \inner{w,m_t} + D_{\psi_t}(w, w_t^\p)\big\} = \argmin_{w\in\Delta_K} \big\{D_{\psi_t}(w, w_t^\p)\big\} = w_t'$, independent of the actual value of $m_t$.
Therefore, the algorithm is still valid and is in fact equivalent to the vanilla log-barrier OMD of~\citep{foster2016learning}.
Also note that we cannot define $\hat{\ell}_{t}$ as in previous sections (in terms of $m_t$) since it is not an unbiased estimator of $\ell_t$ anymore (due to the randomness of $m_t$).

%In \textsc{Broad-OMD}, $m_{t}$ is used to construct $w_{t}$, and then $i_t$ is further drawn based on $w_t$. Therefore, it seems unreasonable to define $m_t$ based on $i_t$. However, with this special choice of $m_t$, we claim that without knowing $i_t$, the learner can still successfully construct the correct $w_t$. This is simply because $m_t$'s components are all the same, so $w_t$ will always be the same as $w_t^\p$. We more formally prove this fact in Lemma \ref{lemma:same_point}. Therefore, the player just plays $w_t^\p$, and the algorithm proceeds just like the vanilla log-barrier OMD! So what is the benefit of this set of choices? 

Although the algorithm is the same, using our analysis framework we actually derive a tighter bound in terms of the following quantity based on
Theorem~\ref{lemma:second_order_regret_bound}:
$\sum_{t=1}^T\sum_{i=1}^K w_{t,i}^2(\hat{\ell}_{t,i}-\ell_{t,i_t})^2=\sum_{t=1}^T\sum_{i=1}^K (\ell_{t,i}\mathbbm{1}\{i_t=i\}-w_{t,i}\ell_{t,i_t})^2$.
It turns out that based on this quantity alone, one can derive both a ``small-loss'' bound for the adversarial setting and a logarithmic bound for the stochastic setting
as shown below.
We emphasize that the doubling trick of Algorithm~\ref{alg:doubling} is essential to make the algorithm parameter-free,
which is another key difference from~\citep{foster2016learning}.

%It turns out the benefit is in the analysis: the introduction of this $m_{t}$ tightens the regret bound for log-barrier OMD for free! Note that Theorem \ref{lemma:second_order_regret_bound} is still valid, and now the regret depends on a term $\sum_{t=1}^T\sum_{i=1}^K w_{t,i}^2(\hat{\ell}_{t,i}-\ell_{t,i_t})^2=\sum_{t=1}^T\sum_{i=1}^K (\ell_{t,i}\mathbbm{1}\{i_t=i\}-w_{t,i}\ell_{t,i_t})^2$.The following theorem shows that the regret having this term has both-of-both-world implications. 

\begin{theorem}
\label{thm:best of both}
\textsc{Broad-OMD} with $a_t = 0$, $m_{t,i}=\ell_{t,i_t}$, $\hat{\ell}_{t,i}=\frac{\ell_{t,i}\mathbbm{1}\{i_t=i\}}{w_{t,i}}$, and
the doubling trick (Algorithm~\ref{alg:doubling}), guarantees 
\begin{equation}\label{eqn:new_excess_loss_bound}
\mathbb{E}\left[\reg_T\right]=\mathcal{O}\left(\sqrt{(K\ln T)\mathbb{E}\left[ \sum_{t=1}^T\sum_{i=1}^K (\ell_{t,i}\mathbbm{1}\{i_t=i\}-w_{t,i}\ell_{t,i_t})^2 \right]} + K\ln T \right).
\end{equation}
This bound implies that in the stochastic setting, we have $\mathbb{E}\left[\reg_T\right] = \mathcal{O}\left(\frac{K\ln T}{\Delta}\right)$, while in the adversarial setting, we have 
%$\mathbb{E}\left[\reg_T\right] = \mathcal{O}\left(\sqrt{KL\ln T} + K\ln T\right)$, where $L\triangleq \mathbb{E}\left[\sum_{t=1}^T \ell_{t,i_t}^2 \right]$. %\leq \sum_{t=1}^T \norm{\ell_t}_\infty^2$. 
$\mathbb{E}\left[\reg_T\right] = \mathcal{O}\left(\sqrt{KL_{T,i^*}\ln T}+K\ln T\right)$ assuming non-negative losses.
%If we further assume losses are non-negative, i.e., $\ell_{t,i}\in [0,1]$ for all $t,i$, then we further have the small-loss bound $\mathcal{O}\left(\sqrt{KL_{T,i^*}\ln T}+K\ln T\right)$ in the adversarial setting. 

\end{theorem}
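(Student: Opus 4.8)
The plan is to handle the displayed bound \eqref{eqn:new_excess_loss_bound} and its two consequences separately: the former is essentially a direct instantiation of Theorem~\ref{thm:doubling_trick_theorem}, while the latter two are \emph{self-bounding} arguments. First I would establish \eqref{eqn:new_excess_loss_bound} by invoking Theorem~\ref{thm:doubling_trick_theorem} with $K_0 = 1$. This requires verifying conditions (ii) and (iii) of Theorem~\ref{lemma:second_order_regret_bound} for the choices $m_{t,i} = \ell_{t,i_t}$ and $\hat\ell_{t,i} = \ell_{t,i}\mathbbm{1}\{i_t=i\}/w_{t,i}$. The crucial identity is
\begin{align*}
w_{t,i}^2(\hat\ell_{t,i} - m_{t,i})^2 = (\ell_{t,i}\mathbbm{1}\{i_t=i\} - w_{t,i}\ell_{t,i_t})^2,
\end{align*}
which both reproduces the quantity appearing in \eqref{eqn:new_excess_loss_bound} and, summed over $i$, telescopes to $\ell_{t,i_t}^2\bigl(1 - 2w_{t,i_t} + \norm{w_t}_2^2\bigr) \leq 2$ (using $\ell_{t,i_t}^2 \leq 1$ and $\norm{w_t}_2^2 \leq 1$). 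This confirms (iii) whenever $\eta \leq \frac{1}{162}$, while (ii) is immediate since $w_{t,i}\abs{\hat\ell_{t,i} - m_{t,i}} \leq 1$ in every case; as the doubling trick only decreases $\eta$, both hold throughout.

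Writing $X \triangleq \sum_{t=1}^T\sum_{i=1}^K(\ell_{t,i}\mathbbm{1}\{i_t=i\} - w_{t,i}\ell_{t,i_t})^2$, the bound \eqref{eqn:new_excess_loss_bound} reads $\mathbb{E}[\reg_T] \leq C\sqrt{(K\ln T)\,\mathbb{E}[X]} + CK\ln T$, and the heart of both implications is to bound $\mathbb{E}[X]$ by $\mathbb{E}[\reg_T]$ itself. For the adversarial small-loss bound I would argue pointwise using non-negativity: $\ell_{t,i_t}^2 \leq \ell_{t,i_t}$ together with the factor $1 - 2w_{t,i_t} + \norm{w_t}_2^2 \leq 2$ gives $X \leq 2\sum_t \ell_{t,i_t}$, hence $\mathbb{E}[X] \leq 2\bigl(\mathbb{E}[\reg_T] + L_{T,i^*}\bigr)$. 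Substituting this into the bound and solving the resulting quadratic inequality in $\sqrt{\mathbb{E}[\reg_T]}$ yields $\mathbb{E}[\reg_T] = \mathcal{O}\bigl(\sqrt{KL_{T,i^*}\ln T} + K\ln T\bigr)$.

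For the stochastic bound I would instead pass to the conditional expectation over $i_t\sim w_t$: with $\ell_{t,i_t}^2\leq 1$ and $\mathbb{E}_{i_t}\bigl[1 - 2w_{t,i_t} + \norm{w_t}_2^2\bigr] = 1 - \norm{w_t}_2^2 = \sum_i w_{t,i}(1-w_{t,i})$, I obtain $\mathbb{E}[X] \leq \mathbb{E}\bigl[\sum_t \sum_i w_{t,i}(1-w_{t,i})\bigr] \leq 2\,\mathbb{E}\bigl[\sum_t \sum_{i\neq a^*} w_{t,i}\bigr]$. The gap assumption then converts this diversity quantity into pseudo-regret: conditioning on the past, $\mathbb{E}_{\ell_t}\bigl[\sum_i w_{t,i}(\ell_{t,i} - \ell_{t,a^*})\bigr] \geq \Delta\sum_{i\neq a^*} w_{t,i}$, so $\Delta\,\mathbb{E}\bigl[\sum_t\sum_{i\neq a^*}w_{t,i}\bigr] \leq \overline{\reg}_T$, the regret against $a^*$. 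Because $a^*$ is no better than the empirical best arm, $\overline{\reg}_T \leq \mathbb{E}[\reg_T]$, so $\mathbb{E}[X] \leq 2\mathbb{E}[\reg_T]/\Delta$. Plugging this back into \eqref{eqn:new_excess_loss_bound} gives a self-referential inequality whose solution is $\mathbb{E}[\reg_T] = \mathcal{O}(K\ln T/\Delta)$.

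The two quadratic-inequality manipulations are routine. I expect the genuinely delicate step to be the stochastic self-bounding, for two reasons: (a) one must take the conditional expectation to expose the diversity term $\sum_i w_{t,i}(1-w_{t,i})$ rather than bounding $X$ pointwise, which is what ties the second-order quantity to the gap; and (b) one must recognize that comparing against the \emph{random} empirical best arm $i^*$ (as the regret definition demands) rather than the gap-defining arm $a^*$ still lets the inequality close, precisely because $\overline{\reg}_T \leq \mathbb{E}[\reg_T]$ points in the favorable direction. This observation is what lets us avoid any concentration argument relating $L_{T,a^*}$ to $\min_i L_{T,i}$.
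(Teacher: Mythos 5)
Your proposal is correct and follows essentially the same route as the paper's proof: verify conditions (ii)--(iii) via the identity $w_{t,i}^2(\hat\ell_{t,i}-m_{t,i})^2=(\ell_{t,i}\mathbbm{1}\{i_t=i\}-w_{t,i}\ell_{t,i_t})^2$, invoke Theorem~\ref{thm:doubling_trick_theorem}, then close both implications by self-bounding --- in the stochastic case via the conditional expectation $\sum_i w_{t,i}(1-w_{t,i})\leq 2(1-w_{t,a^*})$ combined with the gap lower bound and the observation that the regret against $a^*$ is at most $\mathbb{E}[\reg_T]$, and in the adversarial case via $\ell_{t,i_t}^2\leq\ell_{t,i_t}$. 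The only (immaterial) differences are that you bound the adversarial second-order term pointwise rather than in conditional expectation, and you solve the stochastic quadratic directly in $\mathbb{E}[\reg_T]$ where the paper first solves for $H=\mathbb{E}[\sum_t(1-w_{t,a^*})]$.
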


\section{Conclusions and Discussions}
In this work we develop and analyze a general bandit algorithm using techniques such as optimistic mirror descent, log-barrier regularizer, increasing learning rate, and so on.
We show various applications of this general framework, obtaining several more adaptive algorithms that improve previous works.
Future directions include 1) improving the dependence on $K$ for the path-length results; 2) obtaining second-order path-length bounds;
3) generalizing the results to the linear bandit problem.

\paragraph{Acknowledgement.}
CYW is grateful for the support of NSF Grant \#1755781. The authors would like to thank Chi-Jen Lu for posing the problem of bandit path-length, and to thank Chi-Jen Lu and Yi-Te Hong for helpful discussions in this direction. 

\bibliography{colt2018-sample} 

\begin{thebibliography}{38}
\providecommand{\natexlab}[1]{#1}
\providecommand{\url}[1]{\texttt{#1}}
\expandafter\ifx\csname urlstyle\endcsname\relax
  \providecommand{\doi}[1]{doi: #1}\else
  \providecommand{\doi}{doi: \begingroup \urlstyle{rm}\Url}\fi

\bibitem[Abernethy et~al.(2008)Abernethy, Hazan, and
  Rakhlin]{abernethy2008competing}
Jacob~D Abernethy, Elad Hazan, and Alexander Rakhlin.
\newblock Competing in the dark: An efficient algorithm for bandit linear
  optimization.
\newblock In \emph{Conference on Learning Theory}, pages 263--274, 2008.

\bibitem[Agarwal et~al.(2017)Agarwal, Luo, Neyshabur, and
  Schapire]{agarwal2017corralling}
Alekh Agarwal, Haipeng Luo, Behnam Neyshabur, and Robert~E Schapire.
\newblock Corralling a band of bandit algorithms.
\newblock In \emph{Conference on Learning Theory}, pages 12--38, 2017.

\bibitem[Allenberg et~al.(2006)Allenberg, Auer, Gyorfi, and
  Ottucs{\'a}k]{allenberg2006hannan}
Chamy Allenberg, Peter Auer, Laszlo Gyorfi, and Gy{\"o}rgy Ottucs{\'a}k.
\newblock Hannan consistency in on-line learning in case of unbounded losses
  under partial monitoring.
\newblock In \emph{Algorithmic Learning Theory}, volume 4264, pages 229--243.
  Springer, 2006.

\bibitem[Audibert et~al.(2013)Audibert, Bubeck, and Lugosi]{audibert2013regret}
Jean-Yves Audibert, S{\'e}bastien Bubeck, and G{\'a}bor Lugosi.
\newblock Regret in online combinatorial optimization.
\newblock \emph{Mathematics of Operations Research}, 39\penalty0 (1):\penalty0
  31--45, 2013.

\bibitem[Auer and Chiang(2016)]{auer2016algorithm}
Peter Auer and Chao-Kai Chiang.
\newblock An algorithm with nearly optimal pseudo-regret for both stochastic
  and adversarial bandits.
\newblock In \emph{Conference on Learning Theory}, pages 116--120, 2016.

\bibitem[Auer et~al.(2002)Auer, Cesa-Bianchi, Freund, and
  Schapire]{auer2002nonstochastic}
Peter Auer, Nicolo Cesa-Bianchi, Yoav Freund, and Robert~E Schapire.
\newblock The nonstochastic multiarmed bandit problem.
\newblock \emph{SIAM journal on computing}, 32\penalty0 (1):\penalty0 48--77,
  2002.

\bibitem[Bubeck and Slivkins(2012)]{bubeck2012best}
S{\'e}bastien Bubeck and Aleksandrs Slivkins.
\newblock The best of both worlds: stochastic and adversarial bandits.
\newblock In \emph{Conference on Learning Theory}, pages 42--1, 2012.

\bibitem[Bubeck et~al.(2016)Bubeck, Eldan, and Lee]{bubeck2016kernel}
S{\'e}bastien Bubeck, Ronen Eldan, and Yin~Tat Lee.
\newblock Kernel-based methods for bandit convex optimization.
\newblock \emph{arXiv preprint arXiv:1607.03084}, 2016.

\bibitem[Bubeck et~al.(2017)Bubeck, Cohen, and Li]{bubeck2017sparsity}
S{\'{e}}bastien Bubeck, Michael~B. Cohen, and Yuanzhi Li.
\newblock Sparsity, variance and curvature in multi-armed bandits.
\newblock \emph{arXiv preprint arXiv:1711.01037}, 2017.

\bibitem[Chiang et~al.(2012)Chiang, Yang, Lee, Mahdavi, Lu, Jin, and
  Zhu]{chiang2012online}
Chao-Kai Chiang, Tianbao Yang, Chia-Jung Lee, Mehrdad Mahdavi, Chi-Jen Lu, Rong
  Jin, and Shenghuo Zhu.
\newblock Online optimization with gradual variations.
\newblock In \emph{Conference on Learning Theory}, 2012.

\bibitem[Chiang et~al.(2013)Chiang, Lee, and Lu]{chiang2013beating}
Chao-Kai Chiang, Chia-Jung Lee, and Chi-Jen Lu.
\newblock Beating bandits in gradually evolving worlds.
\newblock In \emph{Conference on Learning Theory}, pages 210--227, 2013.

\bibitem[Cutkosky and Boahen(2017)]{cutkosky2017online}
Ashok Cutkosky and Kwabena Boahen.
\newblock Online learning without prior information.
\newblock In \emph{Conference on Learning Theory}, 2017.

\bibitem[Daskalakis et~al.(2015)Daskalakis, Deckelbaum, and
  Kim]{daskalakis2015near}
Constantinos Daskalakis, Alan Deckelbaum, and Anthony Kim.
\newblock Near-optimal no-regret algorithms for zero-sum games.
\newblock \emph{Games and Economic Behavior}, 92:\penalty0 327--348, 2015.

\bibitem[De~Rooij et~al.(2014)De~Rooij, Van~Erven, Gr{\"u}nwald, and
  Koolen]{de2014follow}
Steven De~Rooij, Tim Van~Erven, Peter~D Gr{\"u}nwald, and Wouter~M Koolen.
\newblock Follow the leader if you can, hedge if you must.
\newblock \emph{Journal of Machine Learning Research}, 15\penalty0
  (1):\penalty0 1281--1316, 2014.

\bibitem[Degenne and Perchet(2016)]{degenne2016anytime}
R{\'e}my Degenne and Vianney Perchet.
\newblock Anytime optimal algorithms in stochastic multi-armed bandits.
\newblock In \emph{International Conference on Machine Learning}, pages
  1587--1595, 2016.

\bibitem[Foster et~al.(2016)Foster, Li, Lykouris, Sridharan, and
  Tardos]{foster2016learning}
Dylan~J Foster, Zhiyuan Li, Thodoris Lykouris, Karthik Sridharan, and Eva
  Tardos.
\newblock Learning in games: Robustness of fast convergence.
\newblock In \emph{Advances in Neural Information Processing Systems}, pages
  4734--4742, 2016.

\bibitem[Freund and Schapire(1999)]{freund1999adaptive}
Yoav Freund and Robert~E Schapire.
\newblock Adaptive game playing using multiplicative weights.
\newblock \emph{Games and Economic Behavior}, 29\penalty0 (1-2):\penalty0
  79--103, 1999.

\bibitem[Gaillard et~al.(2014)Gaillard, Stoltz, and
  Van~Erven]{gaillard2014second}
Pierre Gaillard, Gilles Stoltz, and Tim Van~Erven.
\newblock A second-order bound with excess losses.
\newblock In \emph{Conference on Learning Theory}, pages 176--196, 2014.

\bibitem[Garivier and Capp{\'e}(2011)]{garivier2011kl}
Aur{\'e}lien Garivier and Olivier Capp{\'e}.
\newblock The kl-ucb algorithm for bounded stochastic bandits and beyond.
\newblock In \emph{Conference On Learning Theory}, pages 359--376, 2011.

\bibitem[Gerchinovitz and Lattimore(2016)]{gerchinovitz2016refined}
S{\'e}bastien Gerchinovitz and Tor Lattimore.
\newblock Refined lower bounds for adversarial bandits.
\newblock In \emph{Advances in Neural Information Processing Systems}, pages
  1198--1206, 2016.

\bibitem[Hazan and Kale(2011{\natexlab{a}})]{hazan2011better}
Elad Hazan and Satyen Kale.
\newblock Better algorithms for benign bandits.
\newblock \emph{Journal of Machine Learning Research}, 12\penalty0
  (Apr):\penalty0 1287--1311, 2011{\natexlab{a}}.

\bibitem[Hazan and Kale(2011{\natexlab{b}})]{hazan2011simple}
Elad Hazan and Satyen Kale.
\newblock A simple multi-armed bandit algorithm with optimal variation-bounded
  regret.
\newblock In \emph{Proceedings of the 24th Annual Conference on Learning
  Theory}, pages 817--820, 2011{\natexlab{b}}.

\bibitem[Hazan et~al.(2016)]{hazan2016introduction}
Elad Hazan et~al.
\newblock Introduction to online convex optimization.
\newblock \emph{Foundations and Trends{\textregistered} in Optimization},
  2\penalty0 (3-4):\penalty0 157--325, 2016.

\bibitem[Koolen and Van~Erven(2015)]{koolen2015second}
Wouter~M Koolen and Tim Van~Erven.
\newblock Second-order quantile methods for experts and combinatorial games.
\newblock In \emph{Conference on Learning Theory}, pages 1155--1175, 2015.

\bibitem[Lai and Robbins(1985)]{lai1985asymptotically}
Tze~Leung Lai and Herbert Robbins.
\newblock Asymptotically efficient adaptive allocation rules.
\newblock \emph{Advances in applied mathematics}, 6\penalty0 (1):\penalty0
  4--22, 1985.

\bibitem[Lattimore(2015)]{lattimore2015optimally}
Tor Lattimore.
\newblock Optimally confident ucb: Improved regret for finite-armed bandits.
\newblock \emph{arXiv preprint arXiv:1507.07880}, 2015.

\bibitem[Luo and Schapire(2015)]{luo2015achieving}
Haipeng Luo and Robert~E Schapire.
\newblock Achieving all with no parameters: Adanormalhedge.
\newblock In \emph{Conference on Learning Theory}, pages 1286--1304, 2015.

\bibitem[Lykouris et~al.(2017)Lykouris, Sridharan, and
  Tardos]{lykouris2017small}
Thodoris Lykouris, Karthik Sridharan, and Eva Tardos.
\newblock Small-loss bounds for online learning with partial information.
\newblock \emph{arXiv preprint arXiv:1711.03639}, 2017.

\bibitem[McMahan(2017)]{mcmahan2017survey}
H~Brendan McMahan.
\newblock A survey of algorithms and analysis for adaptive online learning.
\newblock \emph{Journal of Machine Learning Research}, 18\penalty0
  (90):\penalty0 1--50, 2017.

\bibitem[Neu(2015)]{neu2015first}
Gergely Neu.
\newblock First-order regret bounds for combinatorial semi-bandits.
\newblock In \emph{Conference on Learning Theory}, pages 1360--1375, 2015.

\bibitem[Orabona and P{\'a}l(2016)]{orabona2016coin}
Francesco Orabona and D{\'a}vid P{\'a}l.
\newblock Coin betting and parameter-free online learning.
\newblock In \emph{Advances in Neural Information Processing Systems}, pages
  577--585, 2016.

\bibitem[Rakhlin and Sridharan(2013{\natexlab{a}})]{rakhlin2013online}
Alexander Rakhlin and Karthik Sridharan.
\newblock Online learning with predictable sequences.
\newblock In \emph{Conference on Learning Theory}, pages 993--1019,
  2013{\natexlab{a}}.

\bibitem[Rakhlin and Sridharan(2013{\natexlab{b}})]{rakhlin2013optimization}
Sasha Rakhlin and Karthik Sridharan.
\newblock Optimization, learning, and games with predictable sequences.
\newblock In \emph{Advances in Neural Information Processing Systems}, pages
  3066--3074, 2013{\natexlab{b}}.

\bibitem[Seldin and Lugosi(2017)]{seldin2017improved}
Yevgeny Seldin and G{\'a}bor Lugosi.
\newblock An improved parametrization and analysis of the exp3++ algorithm for
  stochastic and adversarial bandits.
\newblock In \emph{Conference on Learning Theory}, 2017.

\bibitem[Seldin and Slivkins(2014)]{seldin2014one}
Yevgeny Seldin and Aleksandrs Slivkins.
\newblock One practical algorithm for both stochastic and adversarial bandits.
\newblock In \emph{International Conference on Machine Learning}, pages
  1287--1295, 2014.

\bibitem[Steinhardt and Liang(2014)]{steinhardt2014adaptivity}
Jacob Steinhardt and Percy Liang.
\newblock Adaptivity and optimism: An improved exponentiated gradient
  algorithm.
\newblock In \emph{International Conference on Machine Learning}, pages
  1593--1601, 2014.

\bibitem[Syrgkanis et~al.(2015)Syrgkanis, Agarwal, Luo, and
  Schapire]{syrgkanis2015fast}
Vasilis Syrgkanis, Alekh Agarwal, Haipeng Luo, and Robert~E Schapire.
\newblock Fast convergence of regularized learning in games.
\newblock In \emph{Advances in Neural Information Processing Systems}, pages
  2989--2997, 2015.

\bibitem[van Erven and Koolen(2016)]{van2016metagrad}
Tim van Erven and Wouter~M Koolen.
\newblock Metagrad: Multiple learning rates in online learning.
\newblock In \emph{Advances in Neural Information Processing Systems}, pages
  3666--3674, 2016.

\end{thebibliography}
\appendix

\section{Proof of Lemma \ref{thm:general_instantaneous}}
\begin{proof}{\textbf{of Lemma \ref{thm:general_instantaneous}}.}
We first state a useful property used in typical OMD analysis. Let $\Omega$ be a convex compact set in $\mathbb{R}^K$, $\psi$ be a convex function on $\Omega$, 
$w'$ be an arbitrary point in $\Omega$, and $x \in \mathbb{R}^K$.
If $w^*=\argmin_{w\in \Omega}\{\inn{w,x}+D_{\psi}(w,w')\}$, then for any $u \in \Omega$,
\begin{align*}
\inn{w^*-u, x}\leq D_{\psi}(u,w')-D_\psi(u,w^*)-D_{\psi}(w^*,w'). 
\end{align*}
This is by the first-order optimality condition of $w^*$ and direct calculations. Applying this to update rule~\eqref{eqn:update_rule_2} we have
\begin{align}
\inn{w_{t+1}^\p-u, \hat{\ell}_t+ a_t} \leq D_{\psi_t}(u,w_{t}^\p)-D_{\psi_t}(u,w_{t+1}^\p)-D_{\psi_t}(w_{t+1}^\p, w_{t}^\p); \label{eqn:apply1}
\end{align}
while applying it to update rule~\eqref{eqn:update_rule_1} and picking $u=w_{t+1}^\p$ we have
\begin{align}
\inn{w_t-w_{t+1}^\p, m_t} \leq D_{\psi_t}(w_{t+1}^\p, w_t^\p)-D_{\psi_t}(w_{t+1}^\p, w_t)-D_{\psi_t}(w_t, w_t^\p).\label{eqn:apply2} 
\end{align}
Now we bound the instantaneous regret as follows:
\begin{align}
&\inn{w_t-u, \hat{\ell}_t}\nonumber \\
&=\inn{w_t-u, \hat{\ell}_t+ a_t}-\inn{w_t, a_t}+\inn{u,  a_t}\nonumber \\
&=\inn{w_t-w_{t+1}^\p, \hat{\ell}_t+a_t}-\inn{w_t, a_t}+\inn{w_{t+1}^\p-u, \hat{\ell}_t+a_t}+\inn{u,   a_t}\nonumber \\
&=\inn{w_t-w_{t+1}^\p, \hat{\ell}_t+a_t-m_t}-\inn{w_t,a_t}+\inn{w_{t+1}^\p-u, \hat{\ell}_t+ a_t}+\inn{w_t-w_{t+1}^\p, m_t}+\inn{u,   a_t} \nonumber \\
&\leq D_{\psi_t}(u,w_{t}^\p)-D_{\psi_t}(u,w_{t+1}^\p)-D_{\psi_t}(w_{t+1}^\p, w_t)-D_{\psi_t}(w_t, w_t^\p)+\inn{u, a_t}, \label{eqn:regret_decomposition}
\end{align}
where last inequality is by the condition $\inn{w_t-w_{t+1}^\p, \hat{\ell}_t+a_t-m_t}-\inn{w_t,a_t}\leq 0$, Eq.~\eqref{eqn:apply1}, and Eq.~\eqref{eqn:apply2}.
\end{proof}

\section{Lemmas for Log-barrier OMD}
\label{section:all_kinds_of_lemmas}

In this section we establish some useful lemmas for update rules~\eqref{eqn:update_rule_1} and~\eqref{eqn:update_rule_2} with log-barrier regularizer,
which are used in the proofs of other theorems.
We start with some definitions.

\begin{definition}
\label{definition:norm}
For any $h \in \mathbb{R}^K$, define norm $\norm{h}_{t,w}=\sqrt{h^\top \nabla^2 \psi_t(w) h}=\sqrt{\sum_{i=1}^K \frac{1}{\eta_{t,i}}\frac{h_i^2}{w_i^2}}$ and its dual norm $\norm{h}_{t,w}^*=\sqrt{h^\top \nabla^{-2} \psi_t(w) h}=\sqrt{\sum_{i=1}^K \eta_{t,i}w_i^2 h_i^2}$.
For some radius $r > 0$, define ellipsoid $\mathcal{E}_{t,w}(r)=\left\{u \in \mathbb{R}^K : \norm{u-w}_{t,w}\leq r \right\}$ . 
\end{definition}

\begin{lemma}
\label{lemma:norm_close}
If $w^\p \in \mathcal{E}_{t,w}(1)$ and $\eta_{t,i}\leq \frac{1}{81}$ for all $i$, then $w_i^\p\in \left[ \frac{1}{2}w_i, \frac{3}{2}w_i \right]$ for all $i$, and also $ 0.9\norm{h}_{t,w} \leq \norm{h}_{t,w^\p} \leq 1.2\norm{h}_{t,w}$ for any $h\in \mathbb{R}^K$. 
\end{lemma}
\begin{proof}
$w^\p\in \mathcal{E}_{t,w}(1)$ implies $\sum_{i=1}^K \frac{1}{\eta_{t,i}}\frac{(w^\p_i-w_i)^2}{w_i^2}\leq 1$. Thus for every $i$, we have $\frac{\abs{w_i^\p-w_i}}{w_i}\leq \sqrt{\eta_{t,i}}\leq \frac{1}{9}$, implying $w_i^\p\in \left[ \frac{8}{9}w_i, \frac{10}{9}w_i \right]\subset\left[ \frac{1}{2}w_i, \frac{3}{2}w_i \right]$. 
Therefore, $\norm{h}_{t,w^\p}
=\sqrt{\sum_{i=1}^K \frac{1}{\eta_{t,i}} \frac{h_i^2}{w^{\p 2}_i}}
\geq \sqrt{\sum_{i=1}^K \frac{1}{\eta_{t,i}}\frac{h_i^2}{\left(\frac{10}{9}w_i\right)^2}}
=0.9\norm{h}_{t,w}$. 
Similarly, we have $\norm{h}_{t,w^\p}\leq 1.2\norm{h}_{t,w}$. 
\end{proof}

\begin{lemma}
\label{lemma:stability}
Let $w_t, w_{t+1}^\p$ follow \eqref{eqn:update_rule_1} and \eqref{eqn:update_rule_2} where $\psi_t$ is the log-barrier with $\eta_{t,i}\leq \frac{1}{81}$ for all $i$. If $\norm{\hat{\ell}_t-m_t+a_t}^*_{t,w_t}\leq \frac{1}{3}$, then $w_{t+1}^\p \in \mathcal{E}_{t,w_t}(1)$. 
\end{lemma}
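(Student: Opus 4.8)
The plan is to view both $w_t$ and $w_{t+1}^\p$ as constrained OMD updates anchored at the \emph{same} point $w_t^\p$, differing only in their linear term, and then to control how far the second update drifts from the first. Write $g = \hat{\ell}_t - m_t + a_t$, so that \eqref{eqn:update_rule_1} uses the linear term $m_t$ while \eqref{eqn:update_rule_2} uses $m_t + g$. The first-order (variational) optimality condition for a constrained minimization of $\inn{w,c} + D_{\psi_t}(w, w_t^\p)$ reads $\inn{w - w^*, c + \nabla\psi_t(w^*) - \nabla\psi_t(w_t^\p)} \geq 0$ for all $w \in \Omega$. First I would write this condition for both updates, evaluate each inequality at the other update's minimizer, and add them; the anchor gradients $\nabla\psi_t(w_t^\p)$ cancel, leaving $\inn{w_{t+1}^\p - w_t, \nabla\psi_t(w_{t+1}^\p) - \nabla\psi_t(w_t)} \leq -\inn{w_{t+1}^\p - w_t, g}$.

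Next I would specialize to the log-barrier. Since $\nabla\psi_t(w)_i = -\frac{1}{\eta_{t,i} w_i}$, the left-hand side equals $\sum_i \frac{1}{\eta_{t,i}}\frac{(w_{t+1,i}^\p - w_{t,i})^2}{w_{t+1,i}^\p\, w_{t,i}}$, which is manifestly non-negative, and Cauchy--Schwarz in the local norm bounds the right-hand side by $\norm{w_{t+1}^\p - w_t}_{t,w_t}\,\norm{g}^*_{t,w_t} \leq \tfrac13 \norm{w_{t+1}^\p - w_t}_{t,w_t}$, using the hypothesis $\norm{g}^*_{t,w_t} \leq \tfrac13$. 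The main obstacle is that the left-hand side carries $w_{t+1,i}^\p$ in the denominator rather than $w_{t,i}^2$, so I cannot directly lower bound it by $\tfrac23 \norm{w_{t+1}^\p - w_t}^2_{t,w_t}$ without already knowing that $w_{t+1}^\p$ is close to $w_t$ --- which is exactly the conclusion I am after.

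To break this circularity I would run a continuity/bootstrapping argument. Define $x(\lambda)$ as the minimizer of \eqref{eqn:update_rule_2} with $g$ replaced by $\lambda g$ for $\lambda \in [0,1]$; strict convexity of the log-barrier objective over the polytope $\Omega$ (the barrier blows up at the boundary) yields a unique minimizer in the relative interior, depending continuously on $\lambda$, with $x(0) = w_t$ and $x(1) = w_{t+1}^\p$. The inequality above holds verbatim with $g$ scaled by $\lambda$, so its right-hand side is at most $\tfrac{\lambda}{3}\norm{x(\lambda) - w_t}_{t,w_t} \leq \tfrac13 \norm{x(\lambda) - w_t}_{t,w_t}$. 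Suppose, for contradiction, that $\norm{x(\lambda) - w_t}_{t,w_t}$ first reaches $1$ at some $\lambda^*$. Then $x(\lambda^*) \in \mathcal{E}_{t,w_t}(1)$, and since $\eta_{t,i} \leq \tfrac{1}{81}$, Lemma~\ref{lemma:norm_close} applies and gives $x_i(\lambda^*) \leq \tfrac32 w_{t,i}$ for all $i$; substituting into the denominator shows the left-hand side is at least $\tfrac23 \norm{x(\lambda^*) - w_t}^2_{t,w_t} = \tfrac23$. But the right-hand side is at most $\tfrac13$, a contradiction.

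Concluding, $\norm{x(\lambda) - w_t}_{t,w_t} < 1$ for every $\lambda \in [0,1]$, so in particular $w_{t+1}^\p = x(1) \in \mathcal{E}_{t,w_t}(1)$, as claimed. The only delicate points are justifying the existence, uniqueness, and continuity of $x(\lambda)$ and making the intermediate-value step rigorous; this bootstrapping is the real crux, since everything else is a short variational-inequality computation followed by the explicit log-barrier gradient and a single application of Lemma~\ref{lemma:norm_close}.
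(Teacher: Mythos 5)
Your argument is correct and reaches the same conclusion with the same quantitative inputs (the hypothesis $\norm{\hat{\ell}_t-m_t+a_t}^*_{t,w_t}\leq\frac13$ and the norm equivalence of Lemma~\ref{lemma:norm_close} inside the unit ellipsoid), but the mechanism is genuinely different from the paper's. The paper never touches the minimizer $w_{t+1}^\p$ directly: it shows that the objective $F_{t+1}^\p(w)=\inn{w,\hat{\ell}_t+a_t}+D_{\psi_t}(w,w_t^\p)$ satisfies $F_{t+1}^\p(u)\geq F_{t+1}^\p(w_t)$ for every $u$ on the boundary $\partial\mathcal{E}_{t,w_t}(1)$, via a second-order Taylor expansion around $w_t$ in which the linear term $\nabla F_t(w_t)^\top (u-w_t)$ is discarded by the optimality of $w_t$ for $F_t$, and the Hessian is evaluated at an intermediate point $\xi$ that automatically lies in $\mathcal{E}_{t,w_t}(1)$ because it sits on the segment from the center to a boundary point; convexity then forces the minimizer inside the ellipsoid. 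This sidesteps exactly the circularity you identify --- nothing about $w_{t+1}^\p$ needs to be known a priori --- so no continuity or homotopy argument is required. Your route, the symmetrized variational inequality plus a continuation in $\lambda$, is also sound, and the contradiction at $\lambda^*$ (namely $\frac23\leq\frac13$) is computed correctly; its cost is the extra obligation to justify existence, uniqueness, and continuity of $x(\lambda)$, which is routine (the minimizer is unique with strictly positive coordinates since the barrier blows up as any $w_i\to 0$, and continuity follows from uniqueness plus compactness and lower semicontinuity) but is a moving part the paper's proof avoids entirely. One cosmetic imprecision: for general $\Omega=\text{conv}(\mathcal{X})$ the minimizer need not lie in the relative interior of $\Omega$, since the boundary of $\Omega$ is not only where coordinates vanish; what you actually need, and what does hold, is that all coordinates of $x(\lambda)$ are strictly positive and that $x(\lambda)$ is unique.
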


\begin{proof}
Define $F_{t}(w)=\inn{w, m_t}+D_{\psi_t}(w, w_t^\p)$ and $F_{t+1}^\p(w)=\inn{w, \hat{\ell}_t+a_t}+D_{\psi_t}(w, w_t^\p)$. Then by definition we have $w_t=\argmin_{w\in\Omega}F_{t}(w)$ and $w_{t+1}^\p=\argmin_{w\in\Omega}F_{t+1}^\p(w)$. To show $w_{t+1}^\p\in \mathcal{E}_{t,w_t}(1)$, it suffices to show that for all $u$ on the boundary of $\mathcal{E}_{t,w_t}(1)$, $F^\p_{t+1}(u)\geq F^\p_{t+1}(w_t)$. 

Indeed, using Taylor's theorem, for any $u\in \partial \mathcal{E}_{t,w_t}(1)$, there is an $\xi$ on the line segment between $w_t$ and $u$ such that (let $h\triangleq u-w_t$)
\begin{align*}
F^\p_{t+1}(u)&=F^\p_{t+1}(w_t)+\nabla F^{\p}_{t+1} (w_t)^\top h+ \frac{1}{2}h^\top\nabla^2 F^\p_{t+1}(\xi)h \\
&=F^\p_{t+1}(w_t)+ (\hat{\ell}_t-m_t+a_t)^\top h +\nabla F_t (w_t)^\top h+ \frac{1}{2}h^\top\nabla^2 \psi_t(\xi)h \\
&\geq F^\p_{t+1}(w_t)+ (\hat{\ell}_t-m_t+a_t)^\top h + \frac{1}{2}\norm{h}_{t,\xi}^2 \tag{by the optimality of $w_t$}\\
&\geq F^\p_{t+1}(w_t)+ (\hat{\ell}_t-m_t+a_t)^\top h + \frac{1}{2}\times0.9^2\norm{h}_{t,w_t}^2 \tag{by Lemma \ref{lemma:norm_close}} \\
&\geq F^\p_{t+1}(w_t)- \norm{\hat{\ell}_t-m_t+a_t}^*_{t,w_t} \norm{h}_{t,w_t} + \frac{1}{3}\norm{h}_{t,w_t}^2 \\
&=F^\p_{t+1}(w_t)- \norm{\hat{\ell}_t-m_t+a_t}^*_{t,w_t} + \frac{1}{3} \tag{$\norm{h}_{t,w_t}=1$}\\
&\geq F^\p_{t+1}(w_t). \tag{by the assumption}
\end{align*}
\end{proof}

\begin{lemma}
\label{lemma:stability_under_condition}
Let $w_t, w_{t+1}^\p$ follow \eqref{eqn:update_rule_1} and \eqref{eqn:update_rule_2} where $\psi_t$ is the log-barrier with $\eta_{t,i}\leq \frac{1}{81}$ for all $i$. If $\norm{\hat{\ell}_t-m_t+a_t}^*_{t,w_t}\leq \frac{1}{3}$, then $\norm{w_{t+1}^\p-w_t}_{t,w_t}\leq 3\norm{\hat{\ell}_t-m_t+a_t}_{t,w_t}^*$. 
\end{lemma}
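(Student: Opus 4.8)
The plan is to bound the local-norm distance between the two consecutive iterates directly through the first-order optimality conditions of the two optimization problems that define them. Writing $F_t(w) = \inn{w, m_t} + D_{\psi_t}(w, w_t^\p)$ and $F_{t+1}^\p(w) = \inn{w, \hat{\ell}_t + a_t} + D_{\psi_t}(w, w_t^\p)$, I would first record the variational inequalities $\nabla F_{t+1}^\p(w_{t+1}^\p)^\top(w_t - w_{t+1}^\p) \geq 0$ and $\nabla F_t(w_t)^\top(w_{t+1}^\p - w_t) \geq 0$ that characterize the constrained minimizers $w_{t+1}^\p$ and $w_t$ over $\Omega$. Adding these two inequalities and setting $h \triangleq w_{t+1}^\p - w_t$ yields $\bigl[\nabla F_t(w_t) - \nabla F_{t+1}^\p(w_{t+1}^\p)\bigr]^\top h \geq 0$.

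The next step is to simplify this gradient difference. Since $\nabla F_t(w) = m_t + \nabla\psi_t(w) - \nabla\psi_t(w_t^\p)$ and $\nabla F_{t+1}^\p(w) = \hat{\ell}_t + a_t + \nabla\psi_t(w) - \nabla\psi_t(w_t^\p)$, the $\nabla\psi_t(w_t^\p)$ terms cancel and I obtain $\bigl[\nabla\psi_t(w_{t+1}^\p) - \nabla\psi_t(w_t)\bigr]^\top h \leq -(\hat{\ell}_t - m_t + a_t)^\top h$. Applying the mean value theorem to $\nabla\psi_t$ along the segment from $w_t$ to $w_{t+1}^\p$ produces a point $\xi$ with $\nabla\psi_t(w_{t+1}^\p) - \nabla\psi_t(w_t) = \nabla^2\psi_t(\xi)\, h$, so the left side equals $h^\top \nabla^2\psi_t(\xi) h = \norm{h}_{t,\xi}^2$. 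Bounding the right side by Cauchy--Schwarz in the $\xi$-norm, $-(\hat{\ell}_t - m_t + a_t)^\top h \leq \norm{\hat{\ell}_t - m_t + a_t}_{t,\xi}^* \norm{h}_{t,\xi}$, and cancelling one factor of $\norm{h}_{t,\xi}$ leaves the clean estimate $\norm{h}_{t,\xi} \leq \norm{\hat{\ell}_t - m_t + a_t}_{t,\xi}^*$.

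It then remains to transfer both norms from the intermediate point $\xi$ back to the anchor $w_t$. Here I would invoke Lemma~\ref{lemma:stability}: the hypothesis $\norm{\hat{\ell}_t - m_t + a_t}_{t,w_t}^* \leq \frac{1}{3}$ guarantees $w_{t+1}^\p \in \mathcal{E}_{t,w_t}(1)$, and since $\mathcal{E}_{t,w_t}(1)$ is convex and contains $w_t$, the entire segment, and in particular $\xi$, lies inside it. Lemma~\ref{lemma:norm_close} (with $w = w_t$ and $w^\p = \xi$) then gives $\xi_i \in [\frac{8}{9}w_{t,i}, \frac{10}{9}w_{t,i}]$, which converts the primal norm through $\norm{h}_{t,w_t} \leq \frac{10}{9}\norm{h}_{t,\xi}$ and, because the dual norm weights coordinates by $\xi_i^2$ rather than $1/\xi_i^2$, converts the dual norm through $\norm{\hat{\ell}_t - m_t + a_t}_{t,\xi}^* \leq \frac{10}{9}\norm{\hat{\ell}_t - m_t + a_t}_{t,w_t}^*$. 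Chaining these gives $\norm{h}_{t,w_t} \leq \frac{10}{9}\cdot\frac{10}{9}\norm{\hat{\ell}_t - m_t + a_t}_{t,w_t}^* = \frac{100}{81}\norm{\hat{\ell}_t - m_t + a_t}_{t,w_t}^* \leq 3\norm{\hat{\ell}_t - m_t + a_t}_{t,w_t}^*$, which is the claimed bound with ample slack.

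I expect the only genuine subtlety to be this norm-transfer step: one must check that $\xi$ really sits in the unit ellipsoid so that Lemma~\ref{lemma:norm_close} applies, and must track that the dual norm is \emph{inflated} (not contracted) by the coordinatewise expansion of $\xi$ relative to $w_t$, whereas the primal norm behaves oppositely. Everything else---the optimality conditions, the cancellation of the linear terms, and the single Cauchy--Schwarz step---is routine mirror-descent bookkeeping, and the loose constant $3$ in the statement means no estimate along the way needs to be sharp.
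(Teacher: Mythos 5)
Your proposal is correct, and it reaches the same quadratic-versus-linear inequality in the local norm that drives the paper's proof, using the same two supporting facts (Lemma~\ref{lemma:stability} to place the intermediate point in $\mathcal{E}_{t,w_t}(1)$, and the coordinatewise bounds behind Lemma~\ref{lemma:norm_close} to transfer norms back to $w_t$). The mechanics differ slightly: the paper sandwiches the function-value gap $F_{t+1}^\p(w_t)-F_{t+1}^\p(w_{t+1}^\p)$, upper-bounding it by Cauchy--Schwarz via the optimality of $w_t$ and lower-bounding it by $\tfrac12\norm{h}_{t,\xi}^2$ via Taylor's theorem and the optimality of $w_{t+1}^\p$, which yields the constant $3$; you instead add the two first-order variational inequalities and invoke strong monotonicity of $\nabla\psi_t$, which avoids the factor $\tfrac12$ and gives the sharper constant $100/81$. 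One small technical point: the mean value theorem does not hold for vector-valued maps in the form $\nabla\psi_t(w_{t+1}^\p)-\nabla\psi_t(w_t)=\nabla^2\psi_t(\xi)h$; you should either apply the scalar MVT to $s\mapsto h^\top\nabla\psi_t(w_t+sh)$ (which gives exactly the quadratic form $h^\top\nabla^2\psi_t(\xi)h$ you need), or use the integral representation $h^\top(\nabla\psi_t(w_{t+1}^\p)-\nabla\psi_t(w_t))=\int_0^1\norm{h}_{t,w_t+sh}^2\,ds$ and bound the integrand uniformly over the segment. With that cosmetic fix the argument is complete.
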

\begin{proof}
Define $F_t(w)$ and $F_{t+1}^\p(w)$ to be the same as in Lemma \ref{lemma:stability}. Then we have 
\begin{align}
F_{t+1}^\p(w_t)-F_{t+1}^\p(w_{t+1}^\p)&=(w_t-w_{t+1}^\p)^\top(\hat{\ell}_t-m_t+a_t) + F_t(w_t)-F_t(w_{t+1}^\p) \nonumber \\
&\leq (w_t-w_{t+1}^\p)^\top(\hat{\ell}_t-m_t+a_t) \nonumber \tag{optimality of $w_t$}\\
&\leq \norm{w_t-w_{t+1}^\p}_{t,w_t}\norm{\hat{\ell}_t-m_t+a_t}_{t,w_t}^*. \label{eqn:direction1}
\end{align}
On the other hand, for some $\xi$ on the line segment between $w_t$ and $w_{t+1}^\p$, we have by Taylor's theorem and the optimality of $w_{t+1}^\p$,
\begin{align}
F_{t+1}^\p(w_t)-F_{t+1}^\p(w_{t+1}^\p)&=\nabla F_{t+1}^\p(w_{t+1}^\p)^\top (w_t-w_{t+1}^\p) + \frac{1}{2}(w_t-w_{t+1}^\p)^\top \nabla^2 F_{t+1}^\p(\xi)(w_t-w_{t+1}^\p) \nonumber \\
&\geq \frac{1}{2}\norm{w_t-w_{t+1}^\p}_{t,\xi}^2 .
\label{eqn:direction2}
\end{align}
Since the condition in Lemma \ref{lemma:stability} holds, $w_{t+1}^\p\in \mathcal{E}_{t,w_t}(1)$, and thus $\xi\in \mathcal{E}_{t,w_t}(1)$. Using again Lemma \ref{lemma:norm_close}, we have 
\begin{align}
\frac{1}{2}\norm{w_t-w_{t+1}^\p}_{t,\xi}^2 \geq \frac{1}{3}\norm{w_t-w_{t+1}^\p}_{t,w_t}^2\label{eqn:direction3}.
\end{align}
Combining \eqref{eqn:direction1}, \eqref{eqn:direction2}, and \eqref{eqn:direction3}, we have $\norm{w_t-w_{t+1}^\p}_{t,w_t}\norm{\hat{\ell}_t-m_t+a_t}_{t,w_t}^* \geq \frac{1}{3}\norm{w_t-w_{t+1}^\p}_{t,w_t}^2$, which leads to the stated inequality. 
\end{proof}

\begin{lemma}
\label{lemma:condition_automatic_hold}
%Let $w_t, w_{t+1}^\p$ follow \eqref{eqn:update_rule_1} and \eqref{eqn:update_rule_2}. 
When the three conditions in Theorem \ref{lemma:MAB_condition} hold, we have $\norm{\hat{\ell}_t-m_t+a_t}^{*}_{t,w_t}\leq \frac{1}{3}$ for either $a_{t,i}=6\eta_{t,i}w_{t,i}(\hat{\ell}_{t,i}-m_{t,i})^2$ or $a_{t,i}=0$.  
\end{lemma}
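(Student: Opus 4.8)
The plan is to expand the dual norm using Definition~\ref{definition:norm} and verify the bound separately for the two choices of $a_t$, handling Option II ($a_t=\mathbf{0}$) first since it is immediate, and Option I by a short factoring argument that isolates the troublesome inverse-propensity factor.

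For Option II the dual norm squared is exactly $\sum_{i=1}^K \eta_{t,i} w_{t,i}^2(\hat{\ell}_{t,i}-m_{t,i})^2$, which condition~(iii) of Theorem~\ref{lemma:MAB_condition} bounds directly by $\frac{1}{18}$. Taking the square root gives $\frac{1}{\sqrt{18}} < \frac{1}{3}$, as desired.

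For Option I I would write $d_i \triangleq \hat{\ell}_{t,i}-m_{t,i}$, so that the relevant coordinate is $\hat{\ell}_{t,i}-m_{t,i}+a_{t,i} = d_i + 6\eta_{t,i}w_{t,i}d_i^2 = d_i\left(1+6\eta_{t,i}w_{t,i}d_i\right)$. The key step is to bound the scalar multiplier: combining condition~(i), $\eta_{t,i}\leq\frac{1}{162}$, with condition~(ii), $w_{t,i}\abs{d_i}\leq 3$, gives $\abs{6\eta_{t,i}w_{t,i}d_i}\leq 6\cdot\frac{1}{162}\cdot 3 = \frac{1}{9}$, hence $\left(1+6\eta_{t,i}w_{t,i}d_i\right)^2 \leq \left(\frac{10}{9}\right)^2$. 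Substituting this back and pulling out the uniform factor yields
\[
\left(\norm{\hat{\ell}_t-m_t+a_t}_{t,w_t}^{*}\right)^2 = \sum_{i=1}^K \eta_{t,i}w_{t,i}^2 d_i^2 \left(1+6\eta_{t,i}w_{t,i}d_i\right)^2 \leq \left(\tfrac{10}{9}\right)^2 \sum_{i=1}^K \eta_{t,i}w_{t,i}^2 d_i^2 \leq \left(\tfrac{10}{9}\right)^2\cdot\frac{1}{18},
\]
where the last inequality again invokes condition~(iii). The square root of the right-hand side equals $\frac{10}{9\sqrt{18}} = \frac{5\sqrt{2}}{27}\approx 0.262 < \frac{1}{3}$, completing the bound.

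There is no serious obstacle here: the lemma is essentially a routine verification, and the only genuine idea is the factorization $d_i + a_{t,i} = d_i\left(1+6\eta_{t,i}w_{t,i}d_i\right)$, which confines the inverse-propensity blow-up to the scalar multiplier controlled by conditions~(i) and~(ii), leaving condition~(iii) to dispatch the remaining sum exactly as in the $a_t=\mathbf{0}$ case. The comfortable slack between the computed constant $\frac{5\sqrt{2}}{27}$ and the target $\frac{1}{3}$ also confirms that the specific numerical constants ($6$, $162$, $18$) appearing in Theorem~\ref{lemma:MAB_condition} are mutually consistent.
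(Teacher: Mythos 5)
Your proof is correct and follows essentially the same route as the paper: the paper expands the square $\bigl(d_i + 6\eta_{t,i}w_{t,i}d_i^2\bigr)^2 = \eta_{t,i}w_{t,i}^2 d_i^2(1+36\eta_{t,i}+324\eta_{t,i}^2)/(\eta_{t,i}w_{t,i}^2)$ term by term and relaxes the multiplier to $2$, while your factorization $d_i(1+6\eta_{t,i}w_{t,i}d_i)$ yields the identical multiplier $(10/9)^2 = 1+36\eta+324\eta^2$ at $\eta=1/162$, just kept tight rather than rounded up. Both arguments invoke conditions (i) and (ii) to control the scalar factor and condition (iii) to bound the remaining sum, and the Option II case is handled identically.
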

\begin{proof}
For $a_{t,i}=6\eta_{t,i}w_{t,i}(\hat{\ell}_{t,i}-m_{t,i})^2$, we have
\begin{align*}
\norm{\hat{\ell}_t-m_t+a_t}^{*2}_{t,w_t}
&= \sum_{i=1}^K\eta_{t,i}w_{t,i}^2\big(\hat{\ell}_{t,i}-m_{t,i} + 6\eta_{t,i}w_{t,i}(\hat{\ell}_{t,i}-m_{t,i})^2\big)^2 \\
&=\sum_{i=1}^K\eta_{t,i}w_{t,i}^2(\hat{\ell}_{t,i}-m_{t,i})^2+12\eta_{t,i}^2w_{t,i}^3(\hat{\ell}_{t,i}-m_{t,i})^3 +36\eta_{t,i}^3w_{t,i}^4(\hat{\ell}_{t,i}-m_{t,i})^4\\
&\leq \sum_{i=1}^K \eta_{t,i}w_{t,i}^2(\hat{\ell}_{t,i}-m_{t,i})^2(1+36\eta_{t,i}+324\eta_{t,i}^2) \tag{condition (ii)}\\
&\leq 2\sum_{i=1}^K \eta_{t,i}w_{t,i}^2(\hat{\ell}_{t,i}-m_{t,i})^2 \tag{condition (i)}\\
&\leq 2\times \frac{1}{18}=\frac{1}{9}.\tag{condition (iii)}
\end{align*}
For $a_{t,i}=0$, we have
\begin{align*}
\norm{\hat{\ell}_t-m_t+a_t}^{*2}_{t,w_t}=\norm{\hat{\ell}_t-m_t}^{*2}_{t,w_t}=\sum_{i=1}^K \eta_{t,i}w_{t,i}^2(\hat{\ell}_{t,i}-m_{t,i})^2\leq \frac{1}{18} < \frac{1}{9}. \tag{condition (iii)}
\end{align*}
%When $a_{t,i}=0$,
%\begin{align*}
%\norm{\hat{\ell}_t-m_t+a_t}_{t,w_t}^{*2}= \sum_{i=1}^K \eta_{t,i}w_{t,i}^2(\hat{\ell}_{t,i}-m_{t,i})^2 \leq \frac{1}{18}<\frac{1}{9}. \text{\ \ \ \ \ \ \ \ \ \ \ \ \ \ \ \ \ \ \ \ \ \ \ (condition(iii))} 
%\end{align*}
\end{proof}

\begin{lemma}
\label{lemma:2times_bound}
If the three conditions in Theorem \ref{lemma:MAB_condition} hold, \textsc{Broad-OMD} (with either Option I or II)
satisfies $\frac{1}{2}w_{t,i}\leq w^\p_{t+1,i}\leq \frac{3}{2}w_{t,i}$.
%Let $w_t, w_{t+1}^\p$ follow \eqref{eqn:update_rule_1} and \eqref{eqn:update_rule_2}. If the three conditions in Theorem \ref{lemma:MAB_condition} hold, then $\frac{1}{2}w_{t,i}\leq w^\p_{t+1,i}\leq \frac{3}{2}w_{t,i}$, for either $a_{t,i}=6\eta_{t,i}w_{t,i}(\hat{\ell}_{t,i}-m_{t,i})^2$ or $a_{t,i}=0$.
\end{lemma}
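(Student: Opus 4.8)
The plan is to obtain this coordinate-wise closeness between $w^\p_{t+1}$ and $w_t$ as a direct composition of the three lemmas already established for the log-barrier OMD updates, specialized to the pair $w = w_t$ and $w^\p = w^\p_{t+1}$. First I would observe that condition (i) of Theorem~\ref{lemma:MAB_condition}, namely $\eta_{t,i} \le \frac{1}{162}$, in particular implies $\eta_{t,i} \le \frac{1}{81}$ for all $i$, so the hypothesis on the learning rates required by Lemmas~\ref{lemma:norm_close} and~\ref{lemma:stability} is met, regardless of whether we are running Option I or Option II.

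Next I would verify the dual-norm smallness condition that drives the stability argument. This is exactly the content of Lemma~\ref{lemma:condition_automatic_hold}: under the three conditions of Theorem~\ref{lemma:MAB_condition} we have $\norm{\hat{\ell}_t - m_t + a_t}^*_{t,w_t} \le \frac{1}{3}$, and crucially that lemma already handles both choices $a_{t,i} = 6\eta_{t,i} w_{t,i}(\hat{\ell}_{t,i} - m_{t,i})^2$ (Option I) and $a_{t,i} = 0$ (Option II). This uniform treatment is precisely why the present statement can assert the bound for \textsc{Broad-OMD} with either option at once, rather than requiring a separate case analysis here.

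With this dual-norm bound in hand, I would apply Lemma~\ref{lemma:stability} with $w = w_t$ to conclude that $w^\p_{t+1} \in \mathcal{E}_{t,w_t}(1)$, i.e. that $w^\p_{t+1}$ lies in the unit ellipsoid around $w_t$ measured in the local norm $\norm{\cdot}_{t,w_t}$. Finally, invoking Lemma~\ref{lemma:norm_close} with $w = w_t$ and the point $w^\p = w^\p_{t+1}$ immediately yields $\frac{1}{2}w_{t,i} \le w^\p_{t+1,i} \le \frac{3}{2}w_{t,i}$ for every $i$; in fact the proof of that lemma gives the sharper interval $[\tfrac{8}{9}w_{t,i}, \tfrac{10}{9}w_{t,i}]$, and the stated bound follows a fortiori.

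Since each step is a one-line invocation of a previously proved lemma, there is no genuine analytical obstacle; the only thing to be careful about is bookkeeping the normalization constants across the chain — confirming that the $\frac{1}{162}$ threshold of Theorem~\ref{lemma:MAB_condition} clears the $\frac{1}{81}$ requirement of Lemmas~\ref{lemma:norm_close} and~\ref{lemma:stability}, and that the hypotheses of Lemma~\ref{lemma:condition_automatic_hold} are met for both options — so that all the ellipsoid-containment prerequisites are genuinely satisfied simultaneously.
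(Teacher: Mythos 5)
Your proposal is correct and follows exactly the same route as the paper, which proves this lemma as a direct composition of Lemmas~\ref{lemma:condition_automatic_hold}, \ref{lemma:stability}, and \ref{lemma:norm_close}. Your additional bookkeeping (checking that $\frac{1}{162}\leq\frac{1}{81}$ and that both options are covered by Lemma~\ref{lemma:condition_automatic_hold}) is exactly the implicit content of the paper's one-line proof.
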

\begin{proof}
This is a direct application of Lemmas \ref{lemma:condition_automatic_hold},  \ref{lemma:stability}, and \ref{lemma:norm_close}.
%It suffices to prove $w_{t+1}^\p \in \mathcal{E}_{t,w_t}(1)$, because if it is true, then
%\begin{align*}
%\norm{w_{t+1}^\p-w_t}_{t,w_t}=\sqrt{\sum_{i=1}^K\frac{1}{\eta_{t,i}}\frac{(w^\p_{t+1,i}-w_{t,i})^2}{w_{t,i}^2}}\leq 1, 
%\end{align*}
%which implies $\frac{\abs{w_{t+1,i}^\p-w_{t,i}}}{w_{t,i}}\leq \sqrt{\eta_{t,i}}\leq \frac{1}{2}$. Thus, $w_{t+1,i}^\p \in [\frac{1}{2}w_{t,i}, \frac{3}{2}w_{t,i}]$.

%Since we assume the three conditions in Theorem \ref{lemma:MAB_condition} hold, $w_{t+1}^\p \in \mathcal{E}_{t,w_t}(1)$ can be proved by applying Lemma \ref{lemma:condition_automatic_hold} and Lemma \ref{lemma:stability} back to back.

\end{proof}

\begin{lemma}
\label{lemma:2times_bound_another}
For the MAB problem, if the three conditions in Theorem \ref{lemma:MAB_condition} hold, \textsc{Broad-OMD} (with either Option I or II)
satisfies $\frac{1}{2}w_{t,i}\leq w^\p_{t,i}\leq \frac{3}{2}w_{t,i}$.
\end{lemma}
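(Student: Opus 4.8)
The plan is to reduce the claim to the stability machinery already used for Lemma~\ref{lemma:2times_bound}, but run in the \emph{reverse} direction: instead of bounding $w_{t+1}^\p$ against $w_t$, I want to bound $w_t^\p$ against $w_t$. The key observation is that update rule~\eqref{eqn:update_rule_1} producing $w_t$ is a single OMD step started at the base point $w_t^\p$ with loss vector $m_t$. Since $w_t^\p\in\Delta_K$ (it is itself the output of a previous step over $\Delta_K$), we also have $w_t^\p=\argmin_{w\in\Delta_K}D_{\psi_t}(w,w_t^\p)$; that is, $w_t^\p$ is the OMD step from $w_t^\p$ with the \emph{zero} loss vector. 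Thus $w_t^\p$ and $w_t$ are two OMD steps from the common base $w_t^\p$ with the same regularizer $\psi_t$, differing only in their loss vectors ($\mathbf{0}$ versus $m_t$) --- exactly the configuration analyzed in Lemma~\ref{lemma:stability}.

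First I would invoke (the proof of) Lemma~\ref{lemma:stability} with $w_t^\p$ playing the role of the first minimizer $w_t$, the zero vector playing the role of $m_t$, and $m_t$ playing the role of $\hat{\ell}_t+a_t$. That argument depends only on the difference of the two loss vectors together with the optimality of the first minimizer, so it applies verbatim and yields $w_t\in\mathcal{E}_{t,w_t^\p}(1)$, provided the dual-norm condition $\norm{m_t}^*_{t,w_t^\p}\leq\frac{1}{3}$ holds.

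Second I would verify this dual-norm condition, and this is where the restriction to MAB is essential. By Definition~\ref{definition:norm} and $m_{t,i}\in[-1,1]$, we have $\norm{m_t}^{*2}_{t,w_t^\p}=\sum_{i=1}^K\eta_{t,i}(w_{t,i}^\p)^2 m_{t,i}^2\leq(\max_i\eta_{t,i})\sum_{i=1}^K(w_{t,i}^\p)^2$. Because $w_t^\p\in\Delta_K$ has non-negative entries summing to $1$, the cross terms are non-negative and $\sum_{i=1}^K(w_{t,i}^\p)^2\leq\big(\sum_{i=1}^K w_{t,i}^\p\big)^2=1$; combined with condition~(i), $\eta_{t,i}\leq\frac{1}{162}$, this gives $\norm{m_t}^{*2}_{t,w_t^\p}\leq\frac{1}{162}<\frac{1}{9}$, as needed. (It is precisely this step that would fail for a general action set $\mathcal{X}$, where $\sum_i(w_{t,i}^\p)^2$ can be as large as $K_0$, which is why the statement is restricted to MAB where $K_0=1$.)

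Finally, having established $w_t\in\mathcal{E}_{t,w_t^\p}(1)$, I would apply Lemma~\ref{lemma:norm_close} with center $w_t^\p$ and interior point $w_t$. Since $\eta_{t,i}\leq\frac{1}{162}\leq\frac{1}{81}$, the lemma gives $w_{t,i}\in[\frac{8}{9}w_{t,i}^\p,\frac{10}{9}w_{t,i}^\p]$, which rearranges to $w_{t,i}^\p\in[\frac{9}{10}w_{t,i},\frac{9}{8}w_{t,i}]\subset[\frac{1}{2}w_{t,i},\frac{3}{2}w_{t,i}]$, the desired conclusion; the same reasoning covers both Option~I and Option~II since neither $\hat{\ell}_t$ nor $a_t$ enters the update for $w_t$. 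The only genuine obstacle is the reverse-direction bookkeeping --- recognizing $w_t^\p$ as the zero-loss step and reading off the correct ellipsoid center --- together with the MAB-specific dual-norm bound; after that the result is a direct appeal to Lemmas~\ref{lemma:stability} and~\ref{lemma:norm_close}.
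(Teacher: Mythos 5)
Your proof is correct and follows essentially the same route as the paper's: both reduce the claim to the stability argument of Lemma~\ref{lemma:stability} applied to the pair of minimizers of $D_{\psi_t}(\cdot,w_t^\p)$ and $\inn{\cdot,m_t}+D_{\psi_t}(\cdot,w_t^\p)$, verify the dual-norm bound $\norm{m_t}^{*2}\leq \frac{1}{162}\sum_{i}w_i^2\leq\frac{1}{162}$ using the simplex structure of MAB, and finish with Lemma~\ref{lemma:norm_close}. The only difference is the direction of the containment --- the paper shows $w_t^\p\in\mathcal{E}_{t,w_t}(1)$ (evaluating the dual norm at $w_t$), while you show $w_t\in\mathcal{E}_{t,w_t^\p}(1)$ (evaluating it at $w_t^\p$) and then invert the resulting ratio bounds --- which is immaterial.
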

\begin{proof}
It suffices to prove $w_{t}^\p \in \mathcal{E}_{t,w_t}(1)$ by Lemma~\ref{lemma:norm_close}.
Since we assume that the three conditions in Theorem \ref{lemma:MAB_condition} hold and $w_t\in \Delta_K$, we have $\norm{m_t}_{t,w_t}^*=\sqrt{\sum_{i=1}^K \eta_{t,i}w_{t,i}^2m_{t,i}^2}\leq \sqrt{\frac{1}{162}\sum_{i=1}^K w_{t,i}^2}\leq \sqrt{\frac{1}{162}}< \frac{1}{3}$. This implies $w_{t}^\p \in \mathcal{E}_{t,w_t}(1)$ by a similar arguments as in the proof of Lemma~\ref{lemma:stability} (one only needs to replace $F_{t+1}^\p(w)$ there by $G(w)\triangleq D_{\psi_t}(w,w_t^\p)$ and note that $w_t^\p=\argmin_{w\in \Delta_K}G(w)$).
\end{proof}

%\begin{lemma}
%\label{lemma:stability_game}
%For MAB problems, if the three conditions in Theorem \ref{lemma:MAB_condition} hold, then \textsc{Broad-OMD} with $a_{t,i}=\mathbf{0}$ and fixed learning rate $\eta$ guarantees $\norm{w_{t+1}-w_t}_1 = \mathcal{O}(\eta)$ for all $t$.  
%\end{lemma}
%\begin{proof}
%By Lemma \ref{lemma:condition_automatic_hold} and \ref{lemma:stability_under_condition}, we have $\norm{w_t-w_{t+1}^\p}_{t,w_t}\leq 3\norm{\hat{\ell}_t-m_t}_{t,w_t}^*$, which implies
%\begin{align*}
%\frac{1}{\eta}\frac{(w_{t,j}-w_{t+1,j}^\p)^2}{w_{t,j}^2}\leq \sum_{i=1}^K\frac{1}{\eta}\frac{(w_{t,i}-w_{t+1,i}^\p)^2}{w_{t,i}^2}\leq 3\eta\sum_{i=1}^K  w_{t,i}^2(\hat{\ell}_{t,i}-m_{t,i})^2\leq 3\eta\times 9. 
%\end{align*}
%Therefore, $\abs{w_{t,j}-w_{t+1,j}^\p} = \mathcal{O}(\eta w_{t,j})$. We can use similar techniques in Lemma \ref{lemma:condition_automatic_hold} and \ref{lemma:stability_under_condition} to prove $\norm{w_{t+1}-w_{t+1}^\p}_{t,w_{t+1}^\p}\leq 3\norm{m_{t+1}}_{t,w_{t+1}^\p}^*$, and thus  $\abs{w_{t+1,j}^\p-w_{t+1,j}}=\mathcal{O}(\eta w_{t+1,j}^\p)=\mathcal{O}(\eta w_{t,j})$. Thus, $\abs{w_{t,j}-w_{t+1,j}}=\mathcal{O}(\eta w_{t,j})$, which implies $\norm{w_t-w_{t+1}}_1=\mathcal{O}(\eta)$. 
%\end{proof}

\section{Proof of Theorem \ref{lemma:MAB_condition} and Corollary~\ref{cor:clear_corollary}}
%To prove Theorem \ref{lemma:MAB_condition}, we need some definitions and lemmas established in Section~\ref{section:all_kinds_of_lemmas}. 

\begin{proof}{\textbf{of Theorem \ref{lemma:MAB_condition}}.}
We first prove Eq.~\eqref{eqn:condition1} holds: by Lemmas \ref{lemma:condition_automatic_hold} %, we have $\norm{\hat{\ell}_t-m_t+a_t}^{*}_{t,w_t}\leq \frac{1}{3}$. Then by
and \ref{lemma:stability_under_condition}, we have
\begin{align*}
\inn{w_t-w_{t+1}^\p, \hat{\ell}_t-m_t+ a_t}
&\leq \norm{w_t-w_{t+1}^\p}_{t,w_t}\norm{\hat{\ell}_t-m_t+a_t}_{t,w_t}^*\\
&\leq 3\norm{\hat{\ell}_t-m_t+a_t}_{t,w_t}^{*2}\\
&\leq 3\sum_{i=1}^K \eta_{t,i}w_{t,i}^2(\hat{\ell}_{t,i}-m_{t,i})^2(1+36\eta_{t,i}+324\eta_{t,i}^2) \\
&\leq 6\sum_{i=1}^K \eta_{t,i}w_{t,i}^2(\hat{\ell}_{t,i}-m_{t,i})^2 = \inn{w_t, a_t},
\end{align*}
where the last two inequalities are by the same calculations done in the proof of Lemma~\ref{lemma:condition_automatic_hold}.
%where $C=6$. With our choice of $C$ and $\eta_{t,i}$, we have $3(1+6\eta_{t,i}C+9\eta_{t,i}^2C^2)\leq 3\left(1+6\times\frac{6}{162}+9\times \left(\frac{6}{162}\right)^2\right)\leq C$. Therefore, the last expression is further bounded by $\sum_{i=1}^K C\eta_{t,i}w_{t,i}^2(\hat{\ell}_{t,i}-m_{t,i})^2$, which is equal to $\inn{w_t, a_t}$. 

Since Eq.~\eqref{eqn:condition1} holds, using Lemma~\ref{thm:general_instantaneous} we have (ignoring non-positive terms $-A_t$'s),
\begin{align}
\sum_{t=1}^T\inn{w_t-u, \hat{\ell}_t}&\leq \sum_{t=1}^T\left(D_{\psi_t}(u,w_t^\p)-D_{\psi_t}(u,w^\p_{t+1})\right)+\sum_{t=1}^T\inn{u,a_t}\nonumber \\
&\leq D_{\psi_1}(u, w_1^\p) + \sum_{t=1}^{T}\left( D_{\psi_{t+1}}(u, w^\p_{t+1})-D_{\psi_{t}}(u, w^\p_{t+1}) \right)+\sum_{t=1}^T\inn{u,a_t}.\label{eqn:some_intermediate}
\end{align}
In the last inequality, we add a term $D_{\psi_{T+1}}(u, w_{T+1}^\p) \geq 0$ artificially. As mentioned, $\psi_{T+1}$, defined in terms of $\eta_{T+1,i}$, never appears in the \textsc{Broad-OMD} algorithm. We can simply pick any $\eta_{T+1,i} > 0$ for all $i$ here. This is just to simplify some analysis later. 

The first term in \eqref{eqn:some_intermediate} can be bounded by the optimality of $w_1^\p$:
\begin{align*}
D_{\psi_1}(u, w_1^\p)&=\psi_1(u)-\psi_1(w_1^\p)-\inn{\nabla\psi_1(w_1^\p), u-w_1^\p}\\
&\leq \psi_1(u)-\psi_1(w_1^\p)=\sum_{i=1}^K \frac{1}{\eta_{1,i}}\ln\frac{w_{1,i}^\p}{u_i}.
\end{align*}
%where the inequality is because $w_1^\p$ is the minimizer of $\psi_1$. 
The second term, by definition, is
\begin{align*}
\sum_{t=1}^{T}\sum_{i=1}^K \left(\frac{1}{\eta_{t+1,i}}-\frac{1}{\eta_{t,i}}\right) h\left(\frac{u_i}{w_{t+1,i}^\p}\right). 
\end{align*}
Plugging the above two terms into \eqref{eqn:some_intermediate} finishes the proof.
\end{proof}

\begin{proof}{\textbf{of Corollary~\ref{cor:clear_corollary}}.}
We first check the three conditions in Theorem~\ref{lemma:MAB_condition} under our choice of $\eta_{t,i}$ and $\hat{\ell}_{t,i}$: $\eta_{t,i}=\eta=\frac{1}{162K_0}\leq \frac{1}{162}$; $w_{t,i}\abs{\hat{\ell}_{t,i}-m_{t,i}}=\abs{\ell_{t,i}-m_{t,i}}\mathbbm{1}\{i\in b_t\} \leq 2<3$; 
$\sum_{i=1}^K \eta_{t,i}w_{t,i}^2(\hat{\ell}_{t,i}-m_{t,i})^2=\frac{1}{162K_0}\sum_{i=1}^K (\ell_{t,i}-m_{t,i})^2\mathbbm{1}\{i\in b_t\} \leq \frac{4}{162} < \frac{1}{18}$. 
%(in the MAB case, $\sum_{i=1}^K \eta_{t,i}w_{t,i}^2(\hat{\ell}_{t,i}-m_{t,i})^2=\eta_{t,i_t}(\ell_{t,i_t}-m_{t,i_t})^2\leq \frac{1}{162}\times 3^2=\frac{1}{18}$). 
Applying Theorem~\ref{lemma:MAB_condition} we then have 
\begin{align*}
\sum_{t=1}^T \inn{w_t-u, \hat{\ell}_t}\leq \sum_{i=1}^K  \frac{\ln\frac{w^\p_{1,i}}{u_i}}{\eta}  +\sum_{t=1}^T \inn{u,a_t}.
\end{align*}
As mentioned, if we let $u=b^*$, then $\ln \frac{w_{1,i}^\p}{u_i}$
becomes infinity for those $i\notin b^*$. Instead, we let $u=\left(1-\frac{1}{T}\right)b^* + \frac{1}{T}w_1^\p$. With this choice of $u$, we have $\frac{w_{1,i}^\p}{u_i}\leq \frac{w_{1,i}^\p}{\frac{1}{T}w_{1,i}^\p}=T$. Plugging $u$ into the above inequality and rearranging, we get 
\begin{align}
\sum_{t=1}^T \inn{w_t-b^*, \hat{\ell}_t}\leq \frac{K\ln T}{\eta}+\sum_{t=1}^T \inn{b^*,a_t}+B,  \label{eqn:sb_corollary}
\end{align}
where $B\triangleq \frac{1}{T}\sum_{t=1}^T \inn{-b^*+w_1^\p, \hat{\ell}_t+a_t}$. 

Now note that $\mathbb{E}_{b_t}[a_{t,i}]=6\eta (\ell_{t,i}-m_{t,i})^2=\mathcal{O}(\eta)$ and $\mathbb{E}_{b_t}[\hat{\ell}_{t,i}]=\ell_{t,i}=\mathcal{O}(1)$ for all $i$. Thus, $\mathbb{E}[B]=\mathbb{E}\left[\frac{1}{T}\sum_{t=1}^T \inn{-b^*+w_1^\p, \mathbb{E}_{b_t}[\hat{\ell}_t+a_t]}\right] \leq \mathbb{E}\left[\frac{1}{T}\sum_{t=1}^T \norm{-b^*+w_1^\p}_1 \norm{\mathbb{E}_{b_t}[\hat{\ell}_t+a_t]}_\infty\right] = \mathcal{O}(K_0)$. Taking expectation on both sides of \eqref{eqn:sb_corollary}, we have 
\begin{align*}
\mathbb{E}\left[\sum_{t=1}^T b_t^\top \ell_t  - \sum_{t=1}^T b^{*\top} \ell_t \right] \leq \frac{K\ln T}{\eta} + 6\eta\mathbb{E}\left[\sum_{t=1}^T \sum_{i\in b^*}^K (\ell_{t,i}-m_{t,i})^2\right] + \mathcal{O}(K_0). 
\end{align*}
%One can verify that in the MAB case, the last term can be $\mathcal{O}(1)$ because now $b^*, w_1^\p \in \Delta_K$.  
\end{proof}

\section{Proof of Theorem \ref{cor:variance_bound}}
\begin{proof}{\textbf{of Theorem \ref{cor:variance_bound}}.}
As in \cite{hazan2011better}, 
for the rounds we perform uniform sampling we do not update $w_t^\p$. 
Let $\mathcal{S}$ be the set of rounds of uniform sampling. %Then in all other rounds the learner is essentially running an untouched \textsc{Broad-OMD}. Therefore, we can use Corollary \ref{cor:clear_corollary} to bound the regret. By Corollary \ref{cor:clear_corollary}, 
Then for the other rounds we can apply Corollary \ref{cor:clear_corollary} to arrive at
\begin{align}
\mathbb{E}\left[\sum_{t\in [T]\backslash \mathcal{S}} \ell_{t,i_t}-\ell_{t,i^*} \right]\leq \frac{K\ln T}{\eta} + 6\eta \mathbb{E}\left[\sum_{t\in [T]\backslash \mathcal{S}}(\ell_{t,i^*}-\tilde{\mu}_{t-1,i^*})^2\right] + \mathcal{O}(1). \label{eqn:regret_bound:a_t_neq_0_another} 
\end{align}
The second term can be bounded as follows: 
\begin{align}
&\mathbb{E}\left[\sum_{t\in [T]\backslash \mathcal{S}} (\ell_{t,i^*}-\tilde{\mu}_{t-1,i^*})^2\right]\leq \mathbb{E}\left[\sum_{t=2}^T (\ell_{t,i^*}-\tilde{\mu}_{t-1,i^*})^2\right] \nonumber \\
&\leq 3\sum_{t=2}^T(\ell_{t,i^*}-\mu_{t,i^*})^2+3\sum_{t=2}^T(\mu_{t,i^*}-\mu_{t-1,i^*})^2 + 3\mathbb{E}\left[\sum_{t=2}^T(\mu_{t-1,i^*}-\tilde{\mu}_{t-1,i^*})^2\right].\label{eqn:decompose_three}
\end{align}
The first and the third terms in \eqref{eqn:decompose_three} can be bounded using Lemma 10 and 11 of \citep{hazan2011better} respectively, and they are both of order $\mathcal{O}(Q_{T,i^*}+1)$ if we pick $M=\Theta(\ln T)$. The second term in \eqref{eqn:decompose_three} can be bounded by a constant by Lemma \ref{lemma:second_Q_term}. Thus second term in \eqref{eqn:regret_bound:a_t_neq_0_another}  can be bounded by $\mathcal{O}\left(\eta (Q_{T,i^*}+1)\right)$. Finally, note that $\mathbb{E}\left[\sum_{t=1}^T \ell_{t,i_t}-\ell_{t,i^*} \right]\leq\mathbb{E}\left[\sum_{t\in [T]\backslash \mathcal{S}} \ell_{t,i_t}-\ell_{t,i^*} \right]+2\mathbb{E}[\abs{\mathcal{S}}]$ and that $\mathbb{E}[\abs{\mathcal{S}}]=\mathcal{O}\left(\sum_{t=1}^T \frac{MK}{t}\right)=\mathcal{O}\left(MK\ln T\right)=\mathcal{O}\left(K(\ln T)^2\right)$. Combining everything, we get 
\begin{align*}
\mathbb{E}\left[\sum_{t=1}^T \ell_{t,i_t}-\ell_{t,i^*} \right]=\mathcal{O}\left( \frac{K\ln T}{\eta} + \eta Q_{T,i^*} + K(\ln T)^2\right).
\end{align*}
\end{proof}

\begin{lemma}
\label{lemma:second_Q_term}
For any $i$, $\sum_{t=2}^T (\mu_{t,i}-\mu_{t-1,i})^2=\mathcal{O}(1)$. 
\end{lemma}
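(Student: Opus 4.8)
The plan is to show that each increment $\mu_{t,i}-\mu_{t-1,i}$ decays like $1/t$, after which the sum of its squares is controlled by a convergent $p$-series. First I would exploit the recursive structure of the running average. Writing $\mu_{t,i}=\frac{1}{t}L_{t,i}=\frac{1}{t}\left(L_{t-1,i}+\ell_{t,i}\right)$ and $\mu_{t-1,i}=\frac{1}{t-1}L_{t-1,i}$, a direct algebraic simplification collapses the difference of the two fractions into
\begin{align*}
\mu_{t,i}-\mu_{t-1,i}=-\frac{L_{t-1,i}}{t(t-1)}+\frac{\ell_{t,i}}{t}=\frac{\ell_{t,i}-\mu_{t-1,i}}{t}.
\end{align*}
This identity is really the heart of the matter: it expresses the change in the average as the current deviation $\ell_{t,i}-\mu_{t-1,i}$ scaled by the factor $1/t$.

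Next I would bound the numerator. Since every loss lies in $[-1,1]$, the running average $\mu_{t-1,i}$ is a convex combination of values in $[-1,1]$ and hence also lies in $[-1,1]$; therefore $\absolute{\ell_{t,i}-\mu_{t-1,i}}\le 2$. Squaring the identity then yields $(\mu_{t,i}-\mu_{t-1,i})^2\le \frac{4}{t^2}$ for every $t\ge 2$.

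Finally, summing over $t$ gives $\sum_{t=2}^T(\mu_{t,i}-\mu_{t-1,i})^2\le 4\sum_{t=2}^{\infty}\frac{1}{t^2}=\mathcal{O}(1)$, which is the claimed bound. The only nontrivial step is spotting the clean identity above; once that is in hand, the remainder is an elementary range bound together with the convergence of $\sum_t 1/t^2$, so I do not anticipate any genuine obstacle.
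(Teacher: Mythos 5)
Your proof is correct and follows essentially the same route as the paper: both derive the identity $\mu_{t,i}-\mu_{t-1,i}=\frac{\ell_{t,i}}{t}-\frac{L_{t-1,i}}{t(t-1)}$, bound its magnitude by $\frac{2}{t}$ using $\ell_{t,i}\in[-1,1]$, and sum $\frac{4}{t^2}$ to get $\mathcal{O}(1)$. Your rewriting of the second term as $\frac{\mu_{t-1,i}}{t}$ is a cosmetic difference only.
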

\begin{proof}
By definition, \sloppy$\absolute{\mu_{t,i}-\mu_{t-1,i}}=\absolute{\frac{1}{t}\sum_{s=1}^t \ell_{s,i}-\frac{1}{t-1}\sum_{s=1}^{t-1} \ell_{s,i}}=\absolute{\frac{1}{t}\ell_{t,i}-\frac{1}{t(t-1)}\sum_{s=1}^{t-1}\ell_{s,i}}\leq \absolute{\frac{1}{t}\ell_{t,i}}+\absolute{\frac{1}{t(t-1)}\sum_{s=1}^{t-1}\ell_{s,i}}\leq \frac{2}{t}$. Therefore, $\sum_{t=2}^T (\mu_{t,i}-\mu_{t-1,i})^2\leq \sum_{t=2}^T \frac{4}{t^2}=\mathcal{O}(1)$. 
\end{proof}

\section{Proof of Theorem \ref{thm:path_length}}
We first state a useful lemma.
\begin{lemma}
\label{lemma:bound_ni}
Let $n_i$ be such that $\eta_{T+1,i}=\kappa^{n_i}\eta_{1,i}$, i.e., the number of times the learning rate of arm $i$ changes in \textsc{Broad-OMD+}. Then $n_i\leq \log_2 T$, and $\eta_{t,i}\leq 5\eta_{1,i}$ for all $t,i$.  
\end{lemma}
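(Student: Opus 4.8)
The plan is to fix an arm $i$ and track its threshold sequence $\rho_{t,i}$ in \textsc{Broad-OMD+}, exploiting two opposing facts: every learning-rate increase forces $\rho_{t,i}$ to at least double, while the uniform-exploration mixing $\bar{w}_t = (1-\frac{1}{T})w_t + \frac{1}{KT}\mathbf{1}$ keeps $\bar{w}_{t,i}$ bounded below and hence keeps $\rho_{t,i}$ bounded above. Pinching the threshold between its geometric growth and this hard ceiling will pin down $n_i$ logarithmically, after which the bound on $\eta_{t,i}$ is immediate.

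First I would examine a round $t$ at which the learning rate of arm $i$ is increased. The triggering condition is $\frac{1}{\bar{w}_{t,i}} > \rho_{t,i}$, and the update sets $\rho_{t+1,i} = \frac{2}{\bar{w}_{t,i}} > 2\rho_{t,i}$. Since $\rho_{t,i}$ is left untouched on rounds with no increase, the threshold values taken immediately after successive increases form a sequence that \emph{more than doubles} at each step. Starting from $\rho_{1,i}=2K$, the threshold right after the $n_i$-th increase therefore exceeds $2^{n_i}\cdot 2K$. For the ceiling, I would use $\bar{w}_{t,i} \geq \frac{1}{KT}$, which holds by the definition of $\bar{w}_t$; every assignment $\rho_{t+1,i} = \frac{2}{\bar{w}_{t,i}}$ then satisfies $\rho_{t+1,i} \leq 2KT$, so the threshold never exceeds $2KT$. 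Combining the two estimates gives $2^{n_i}\cdot 2K < 2KT$, i.e. $2^{n_i} < T$, and hence $n_i \leq \log_2 T$.

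Finally, writing $\eta_{t,i}=\kappa^{m}\eta_{1,i}$ with $m \leq n_i$ the number of increases before round $t$, I would bound $\eta_{t,i} \leq \kappa^{n_i}\eta_{1,i} \leq \kappa^{\log_2 T}\eta_{1,i}$ and substitute $\kappa = e^{1/\ln T}$ to get $\kappa^{\log_2 T} = e^{(\log_2 T)/\ln T} = e^{1/\ln 2} < 5$, yielding $\eta_{t,i} \leq 5\eta_{1,i}$. The only subtle point is this last constant computation: the exponent $(\log_2 T)/\ln T$ must collapse to the $T$-independent value $1/\ln 2$, which is precisely why the schedule uses $\kappa = e^{1/\ln T}$ rather than a fixed ratio. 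Everything else is the doubling-plus-ceiling bookkeeping above, so I expect no real obstacle beyond getting this numerical constant correct.
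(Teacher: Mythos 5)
Your proof is correct and follows essentially the same argument as the paper: a pinching of the geometrically growing threshold $\rho_{t,i}$ (more than doubling at each learning-rate increase, starting from $2K$) against the ceiling $1/\bar{w}_{t,i}\leq KT$ enforced by the uniform-exploration mixing, yielding $2^{n_i}<T$, followed by the computation $\kappa^{\log_2 T}=e^{1/\ln 2}<5$. The only cosmetic difference is that you compare the post-update value $\rho_{t+1,i}=2/\bar{w}_{t,i}\leq 2KT$ to the ceiling, whereas the paper compares the pre-update threshold to $1/\bar{w}_{t_{n_i},i}\leq KT$ via the triggering condition; the two are equivalent.
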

\begin{proof}
Let $t_1, t_2, \ldots, t_{n_i}\in [T]$ be the rounds the learning rate for arm $i$ changes (i.e., $\eta_{t+1,i}=\kappa \eta_{t,i}$ for $t=t_1, \ldots, t_{n_i}$). 
By the algorithm, we have 
\begin{align*}
KT\geq \frac{1}{\bar{w}_{t_{n_i},i}}>\rho_{t_{n_i},i}>2\rho_{t_{n_i-1},i}>\cdots>2^{n_i-1}\rho_{t_1,i}=2^{n_i}K. 
\end{align*}
Therefore, $n_i\leq \log_2 T$. And we have $\eta_{t,i}\leq \kappa^{\log_2 T}\eta_{1,i}=e^{\frac{\log_2 T}{\ln T}}\eta_{1,i}\leq 5\eta_{1,i}$.
\end{proof}

\begin{proof}{\textbf{of Theorem \ref{thm:path_length}}.}
Again, we verify the three conditions stated in Theorem \ref{lemma:MAB_condition}. By Lemma \ref{lemma:bound_ni}, $\eta_{t,i}\leq 5\eta\leq 5\times\frac{1}{810}=\frac{1}{162}$; also, $w_{t,j}\absolute{\hat{\ell}_{t,j}-m_{t,j}}=w_{t,j}\absolute{\frac{(\ell_{t,j}-m_{t,j})\mathbbm{1}\{i_t=j\}}{\bar{w}_{t,j}}}\leq w_{t,j}\absolute{\frac{2}{w_{t,j}\left(1-\frac{1}{T}\right)}}\leq 3$ because we assume $T\geq 3$; finally, $
\sum_{j=1}^K \eta_{t,j}w_{t,j}^2(\hat{\ell}_{t,j}-m_{t,j})^2=\eta_{t,i_t}w_{t,i_t}^2(\hat{\ell}_{t,i_t}-m_{t,i_t})^2\leq \frac{1}{162}\times 3^2=\frac{1}{18}$.

Let $\tau_j$ denote the last round the learning rate for arm $j$ is updated, that is, $\tau_j\triangleq \max\{t\in [T]: \eta_{t+1,j}=\kappa\eta_{t,j} \}$. 
We assume that the learning rate is updated at least once so that $\tau_j$ is well defined, otherwise one can verify that the bound is trivial.
For any arm $i$ to compete with, let 
$u=\left(1-\frac{1}{T}\right)\mathbf{e}_{i}+\frac{1}{T}w_1^\p
=\left(1-\frac{1}{T}\right)\mathbf{e}_{i}+\frac{1}{KT}\mathbf{1}$, which guarantees $\frac{w_{1,i}^\p}{u_i}\leq T$. Applying Theorem \ref{lemma:MAB_condition}, with $B\triangleq \frac{1}{T}\sum_{t=1}^T \inn{-\mathbf{e}_{i}+w^\p_{1}, \hat{\ell}_t+a_t}$ we have
\begin{align}
\sum_{t=1}^T\inn{w_t, \hat{\ell}_t}-\hat{\ell}_{t,i}&\leq \frac{K\ln T}{\eta} + \sum_{t=1}^{T}\sum_{j=1}^K\left(\frac{1}{\eta_{t+1,j}}-\frac{1}{\eta_{t,j}}\right)h\left(\frac{u_{j}}{w_{t+1,j}^\p}\right)+\sum_{t=1}^T a_{t,i}+B\nonumber \\
&\leq \frac{K\ln T}{\eta} + \left(\frac{1}{\eta_{\tau_i+1,i}}-\frac{1}{\eta_{\tau_i,i}}\right)h\left(\frac{u_{i}}{w_{\tau_i+1,i}^\p}\right)+\sum_{t=1}^T a_{t,i}+B\nonumber \\
&\leq \frac{K\ln T}{\eta} + \frac{1-\kappa}{\eta_{\tau_i+1,i}}h\left(\frac{u_{i}}{w_{\tau_i+1,i}^\p}\right)+\sum_{t=1}^T a_{t,i}+B\nonumber \\
&\leq \frac{K\ln T}{\eta} - \frac{1}{5\eta \ln T}h\left(\frac{u_{i}}{w_{\tau_i+1,i}^\p}\right)+\sum_{t=1}^T a_{t,i}+B,  \label{eqn:quasi_regret_bound1}
\end{align}
where the last inequality is by Lemma~\ref{lemma:bound_ni} and the fact $\kappa-1 \geq \frac{1}{\ln T}$. Now we bound the second and the third term in \eqref{eqn:quasi_regret_bound1} separately. 
\begin{enumerate}
\item For the second term,  by Lemma \ref{lemma:2times_bound} and $T \geq 3$ we have
\begin{align*}
\frac{u_{i}}{w^\p_{\tau_i+1,i}} \geq \frac{1-\frac{1}{T}}{ \frac{3}{2}w_{\tau_i, i} }\geq \frac{\left(1-\frac{1}{T}\right)^2}{\frac{3}{2}\bar{w}_{\tau_i,i}} =\frac{\left(1-\frac{1}{T}\right)^2}{\frac{3}{2}}\times \frac{\rho_{T+1,i}}{2}\geq \frac{\rho_{T+1,i}}{8} \geq \frac{4K}{8} \geq 1.
\end{align*}
Noting that $h(y)$ is an increasing function when $y\geq 1$, we thus have
\begin{align}
h\left(\frac{u_{i}}{w^\p_{\tau_i+1,i}}\right)\geq h\left(\frac{\rho_{T+1,i}}{8}\right)
=\frac{\rho_{T+1,i}}{8}-1-\ln\left(\frac{\rho_{T+1,i}}{8}\right)\geq \frac{\rho_{T+1,i}}{8}-1-\ln\left(\frac{KT}{4}\right). \label{eqn:path_length_second_term}
\end{align}

\item For the third term, we proceed as
\begin{align}
\sum_{t=1}^T a_{t,i} &= 6\sum_{t=1}^T \eta_{t,i}w_{t,i}(\hat{\ell}_{t,i}-m_{t,i})^2\leq 90\eta \sum_{t=1}^T \abs{\hat{\ell}_{t,i}-m_{t,i}}  \nonumber \\
&\leq 90\eta\left(\max_{t\in[T]}\frac{1}{\bar{w}_{t,i}}\right) \sum_{t=1}^{T}  \abs{\ell_{t,i}-\ell_{t-1,i}} \leq 90\eta\rho_{T+1,i} V_{T,i}, \label{eqn:path_length_third_term}
\end{align}
where in the first inequality, we use $w_{t,i}\abs{\hat{\ell}_{t,i}-m_{t,i}}\leq 3$ and $\eta_{t,i}\leq 5\eta$; in the second inequality, we do a similar calculation as in Eq.~\eqref{eqn:path_length_trick} (only replacing $w_{t,i}$ by $\bar{w}_{t,i}$); and in the last inequality, we use the fact $\frac{1}{\bar{w}_{t,i}}\leq \rho_{T+1,i}$ for all $t\in [T]$
by the algorithm.
\end{enumerate}
Combining Eq.~\eqref{eqn:path_length_second_term} and Eq.~\eqref{eqn:path_length_third_term} and using the fact $\frac{1+\ln\left(\frac{KT}{4}\right)}{5\ln T}\leq K\ln T$, we continue from Eq.~\eqref{eqn:quasi_regret_bound1} to arrive at
\begin{align}
\sum_{t=1}^T \inn{w_t, \hat{\ell}_t}-\hat{\ell}_{t,i}\leq \frac{2K\ln T}{\eta}+ \rho_{T+1,i}\left( \frac{-1}{40\eta\ln T} +90\eta V_{T,i} \right) +B,  \label{eqn:quasi_regret_bound2}
\end{align}
We are almost done here, but note that the left-hand side of \eqref{eqn:quasi_regret_bound2} is not the desired regret. What we would like to bound is
\begin{align}
\sum_{t=1}^T \inn{\bar{w}_t, \hat{\ell}_t} - \sum_{t=1}^T \hat{\ell}_{t,i}=\sum_{t=1}^T \inn{\bar{w}_t-w_t, \hat{\ell}_t}+ \sum_{t=1}^T\left(\inn{w_t, \hat{\ell}_t}-\hat{\ell}_{t,i}\right), \label{eqn:quasi_regret_bound3}
\end{align}
where the second summation on the right-hand side is bounded by Eq.~\eqref{eqn:quasi_regret_bound2}.
The first term can be written as $\sum_{t=1}^T \inn{-\frac{1}{T}w_t+\frac{1}{KT}\mathbf{1}, \hat{\ell}_t}$. Note that$
\frac{1}{T}\sum_{t=1}^T\inn{-w_t, \hat{\ell}_t}\leq \frac{1}{T}\sum_{t=1}^T\abs{\inn{w_t,\hat{\ell}_t-m_t}}+\frac{1}{T}\sum_{t=1}^T\abs{\inn{w_t, m_t}} \leq 3 + 1=4$, and
$
\mathbb{E}\left[\frac{1}{T}\sum_{t=1}^T \inn{\frac{1}{K}\mathbf{1},\hat{\ell}_{t}}\right]=\frac{1}{T}\sum_{t=1}^T \inn{\frac{1}{K}\mathbf{1},\ell_t}\leq 1. $
Therefore, taking expectation on both sides of \eqref{eqn:quasi_regret_bound3}, we get 
\begin{align*}
\mathbb{E}\left[\sum_{t=1}^T \ell_{t,i_t} \right] - \sum_{t=1}^T \ell_{t,i} \leq \frac{2K\ln T}{\eta} + \mathbb{E}[\rho_{T+1,i}]\left( \frac{-1}{40\eta\ln T} +90\eta V_{T,i} \right) + \mathcal{O}(1),   
\end{align*}
because $\mathbb{E}[B]$ is also $\mathcal{O}(1)$ as proved in Corollary~\ref{cor:clear_corollary}. 
\end{proof}

\section{Proofs of Lemma \ref{lemma:simple_lemma} and Theorem \ref{lemma:second_order_regret_bound}}

\begin{proof}{\textbf{of Lemma \ref{lemma:simple_lemma}}.}
By the same arguments as in the proof of Lemma~\ref{thm:general_instantaneous}, we have
\begin{align*}
\inn{w_{t+1}^\p-u, \hat{\ell}_t} \leq D_{\psi_t}(u,w_{t}^\p)-D_{\psi_t}(u,w_{t+1}^\p)-D_{\psi_t}(w_{t+1}^\p, w_{t}^\p); 
\end{align*}
and 
\begin{align*}
\inn{w_t-w_{t+1}^\p, m_t} \leq D_{\psi_t}(w_{t+1}^\p, w_t^\p)-D_{\psi_t}(w_{t+1}^\p, w_t)-D_{\psi_t}(w_t, w_t^\p).
\end{align*}
Therefore, by expanding the instantaneous regret, we have
\begin{align*}
&\inn{w_t-u, \hat{\ell}_t}\nonumber \\
&=\inn{w_t-w_{t+1}^\p, \hat{\ell}_t-m_t}+\inn{w_{t+1}^\p-u, \hat{\ell}_t}+\inn{w_t-w_{t+1}^\p, m_t} \nonumber \\
&\leq \inn{w_t-w_{t+1}^\p, \hat{\ell}_t-m_t} + D_{\psi_t}(u,w_{t}^\p)-D_{\psi_t}(u,w_{t+1}^\p)-D_{\psi_t}(w_{t+1}^\p, w_t)-D_{\psi_t}(w_t, w_t^\p). 
\end{align*}
\end{proof}
\begin{proof}{\textbf{of Theorem \ref{lemma:second_order_regret_bound}}.}
Applying Lemma \ref{lemma:simple_lemma}, we have 
\begin{align*}
\sum_{t=1}^T\inn{w_t-u, \hat{\ell}_t} &\leq \sum_{t=1}^T \left(D_{\psi_t}(u,w_{t}^\p)-D_{\psi_t}(u,w_{t+1}^\p)+\inn{w_t-w_{t+1}^\p, \hat{\ell}_t-m_t}-A_t\right) \\
&\leq \sum_{i=1}^K \frac{\ln \frac{w_{1,i}^\p}{u_i}}{\eta} +\sum_{t=1}^T \inn{w_t-w_{t+1}^\p, \hat{\ell}_t-m_t}-A_t .
\end{align*}
%The proof of the inequality
%\begin{align*}
%\sum_{t=1}^T D_{\psi_t}(u,w_{t}^\p)-D_{\psi_t}(u,w_{t+1}^\p) \leq \sum_{i=1}^K \left( \frac{\ln \frac{w_{1,i}^\p}{u_i}}{\eta_{1,i}} + \sum_{t=1}^T \left(\frac{1}{\eta_{t+1,i}}-\frac{1}{\eta_{t,i}}\right)h\left(\frac{u_i}{w_{t+1,i}^\p}\right) \right)
%\end{align*}
%is the same as in Lemma \ref{lemma:MAB_condition}. By the non-decreasing learning rate assumption and the fact that $h(\cdot)$ is positive, we can discard $\sum_{t=1}^T \left(\frac{1}{\eta_{t+1,i}}-\frac{1}{\eta_{t,i}}\right)h\left(\frac{u_i}{w_{t+1,i}^\p}\right)$. For the other term, we have 
For the second term, using Lemma \ref{lemma:condition_automatic_hold} and \ref{lemma:stability_under_condition} we bound $\inn{w_t-w_{t+1}^\p, \hat{\ell}_t-m_t}$ by
\begin{align*}
\norm{w_t-w_{t+1}^\p}_{t,w_t}\norm{\hat{\ell}_t-m_t}_{t,w_t}^* 
\leq 3\norm{\hat{\ell}_t-m_t}_{t,w_t}^{*2} = 3\eta\sum_{i=1}^K w_{t,i}^2(\hat{\ell}_{t,i}-m_{t,i})^2
%&= 3\sum_{i=1}^K \eta_{t,i}(\ell_{t,i}-m_{t,i})^2\mathbbm{1}\{i\in b_t\}=3\sum_{i=1}^K \eta_{t,i}b_{t,i}(\ell_{t,i}-m_{t,i})^2
\end{align*}

Finally we lower bound $A_t$ for the MAB case. Note $h(y)=y-1-\ln y\geq \frac{(y-1)^2}{6}$ for $y\in [\frac{1}{2},2]$. By Lemma~\ref{lemma:2times_bound} and \ref{lemma:2times_bound_another}, $\frac{w_{t+1,i}^\p}{w_{t,i}}$ and $\frac{w_{t,i}}{w_{t,i}^\p}$ both belong to $[\frac{1}{2},2]$. Therefore, 
\begin{align*}
A_t&=D_{\psi_t}(w_{t+1}^\p, w_t)+D_{\psi_t}(w_t, w_t^\p)=\frac{1}{\eta} \sum_{i=1}^K \left(h\left(\frac{w_{t+1,i}^\p}{w_{t,i}}\right) +h\left(\frac{w_{t,i}}{w_{t,i}^\p}\right)\right) \\
&\geq \frac{1}{6\eta} \sum_{i=1}^K \left( \frac{(w_{t+1,i}^\p-w_{t,i})^2}{w_{t,i}^2} + \frac{(w_{t,i}-w_{t,i}^\p)^2}{w_{t,i}^{\p 2}} \right) \\
&\geq \frac{1}{24\eta} \sum_{i=1}^K \left( \frac{(w_{t+1,i}^\p-w_{t,i})^2}{w_{t,i}^2} + \frac{(w_{t,i}-w_{t,i}^\p)^2}{w_{t-1,i}^2} \right), 
\end{align*}
and 
\begin{align*}
\sum_{t=1}^T A_t &\geq \frac{1}{24\eta}\sum_{t=2}^{T}\sum_{i=1}^K\frac{(w_{t,i}^\p-w_{t-1,i})^2}{w_{t-1,i}^2}+\sum_{t=2}^T \sum_{i=1}^K \frac{(w_{t,i}-w_{t,i}^\p)^2}{w_{t-1,i}^2}\geq \frac{1}{48\eta}\sum_{t=2}^T \sum_{i=1}^K \frac{(w_{t,i}-w_{t-1,i})^2}{w_{t-1,i}^2}. 
\end{align*}
\end{proof}

\section{Doubling Trick}
\label{app:doubling_trick}

\begin{algorithm}[t]
\DontPrintSemicolon
\caption{Doubling trick for \textsc{Broad-OMD} with $a_t=\mathbf{0}$}
\label{alg:doubling}
\textbf{Initialize}: $\eta=\frac{1}{162K_0}, T_0=0, t=1.$\\
\For{$\beta=0, 1, \ldots$}{
   $w_{t}^\p=\argmin_{w\in \Omega}\psi_1(w)$ (restart \textsc{Broad-OMD}). \\
   \While{$t\leq T$}{
      Update $w_t$, sample $b_t\sim w_t$, and update $w_{t+1}^\p$ as in \textsc{Broad-OMD} with Option II. \\
      \If{$\sum_{s=T_\beta+1}^{t} \sum_{i=1}^K w_{s,i}^2(\hat{\ell}_{s,i}-m_{s,i})^2 \geq \frac{K\ln T}{3\eta^2}$}{
          $\eta \leftarrow \eta/2$, $T_{\beta+1} \leftarrow t$, $t\leftarrow t+1$. \\
          \textbf{break}.
      }
      $t\leftarrow t+1$. 
   }
}
\end{algorithm}

We include the version of our algorithm with the doubling trick in Algorithm~\ref{alg:doubling}.
For simplicity we still assume the time horizon $T$ is known; the extension to unknown horizon is straightforward. 

\begin{proof}{\textbf{of Theorem \ref{thm:doubling_trick_theorem}}.}
Let $u=\left(1-\frac{1}{T}\right)b^*+\frac{1}{T}w_1^\p$ so that $\ln \frac{w^\p_{1,i}}{u_i} \leq \ln T$.
At some epoch $\beta$, by Theorem~\ref{lemma:second_order_regret_bound}, the break condition, and condition (iii) we have with $\eta_\beta\triangleq\frac{2^{-\beta}}{162K_0}$,
\begin{align*}
\sum_{t= T_\beta+1}^{T_{\beta+1}}\inn{w_t-u, \hat{\ell}_t} &\leq \frac{K\ln T}{\eta_\beta} + 
3\eta_\beta\sum_{t=T_\beta+1}^{T_{\beta+1}}\sum_{i=1}^K w_{t,i}^2(\hat{\ell}_{t,i}-m_{t,i})^2 \\
& \leq  \frac{2K\ln T}{\eta_\beta} + 3\eta_\beta \sum_{i=1}^K w_{T_{\beta+1},i}^2(\hat{\ell}_{T_{\beta+1},i}-m_{T_{\beta+1},i})^2 
= \mathcal{O}\left(\frac{K\ln T}{\eta_\beta}\right).
\end{align*}

Suppose that at time $T$, the algorithm is at epoch $\beta=\beta^*$. Then we have
\[
\sum_{t=1}^T\inn{w_t-u, \hat{\ell}_t}\leq\sum_{\beta=0}^{\beta^*}\mathcal{O}\left( \frac{K\ln T}{\eta_\beta} \right)\leq \sum_{\beta=0}^{\beta^*}\mathcal{O}\left( 2^\beta K_0 K\ln T \right)\leq\mathcal{O}\left(2^{\beta^*}K_0 K\ln T\right).
\]
It remains to bound $\beta^*$.
If $\beta^*=0$ (no restart ever happened), then trivially $\sum_{t= 1}^{T}\inn{w_t-u, \hat{\ell}_t}=\mathcal{O}(K_0 K\ln T)$. 
Otherwise, because epoch $\beta^*-1$ finishes, we have 
\begin{align*}
\sum_{t= T_{\beta^*-1}+1}^{T_{\beta^*}}\sum_{i=1}^K w_{t,i}^2(\hat{\ell}_{t,i}-m_{t,i})^2 \geq \frac{K\ln T}{3(\eta_{\beta^*-1})^2} = \Omega(2^{2\beta^*}K_0^2K\ln T).  
\end{align*}
Combining them, we have 
\begin{align}
\sum_{t=1}^T\inn{w_t-u, \hat{\ell}_t} &\leq \mathcal{O}\left(2^{\beta^*}K_0K\ln T\right)
\leq \mathcal{O}\left( \sqrt{(K\ln T)\sum_{t= T_{\beta^*-1}+1}^{T_{\beta^*}}\sum_{i=1}^K w_{t,i}^2(\hat{\ell}_{t,i}-m_{t,i})^2} \right) \nonumber \\
&\leq \mathcal{O}\left( \sqrt{(K\ln T)\sum_{t= 1}^{T}\sum_{i=1}^K w_{t,i}^2(\hat{\ell}_{t,i}-m_{t,i})^2} \right), \label{eqn:doubling_bound_1}
\end{align}
Combining both cases we have 
\begin{align}
\sum_{t=1}^T\inn{w_t-u, \hat{\ell}_t} \leq \mathcal{O}\left( \sqrt{K\ln T\sum_{t= 1}^{T}\sum_{i=1}^K w_{t,i}^2(\hat{\ell}_{t,i}-m_{t,i})^2} + K_0K\ln T\right).
\end{align}
Now substituting $u$ by its definition and taking expectations, with $B\triangleq \frac{1}{T} \sum_{t=1}^T \inn{-b^*+w_1^\p, \hat{\ell}_{t}}$ we arrive at 
\begin{align*}
\mathbb{E}\left[ \sum_{t=1}^T \inn{b_t-b^*, \ell_t} \right]
&\leq \mathcal{O}\left(\mathbb{E}\left[\sqrt{K\ln T\sum_{t= 1}^{T}\sum_{i=1}^K w_{t,i}^2(\hat{\ell}_{t,i}-m_{t,i})^2}\right]+K_0K\ln T \right)+\mathbb{E}[B] \\
&\leq \mathcal{O}\left(  \sqrt{K\ln T\mathbb{E}\left[\sum_{t= 1}^{T}\sum_{i=1}^K w_{t,i}^2(\hat{\ell}_{t,i}-m_{t,i})^2\right]}+K_0K\ln T \right), 
\end{align*}
where the last inequality uses the fact $\mathbb{E}[B]=\mathcal{O}(K)$ and Jensen's inequality.
\end{proof}

\section{Proofs of Corollary \ref{cor:path_length_bound_1} and Theorem \ref{thm:fast_convergence_theorem}}

\begin{proof}{\textbf{of Corollary~\ref{cor:path_length_bound_1}}.}
We first verify the three conditions in Theorem~\ref{lemma:second_order_regret_bound}: $\eta \leq \frac{1}{162}$ by assumption; $w_{t,i}\absolute{\hat{\ell}_{t,i}-m_{t,i}}=\absolute{(\ell_{t,i}-\ell_{\alpha_i(t),i})\mathbbm{1}\{i_t=i\}}\leq 2<3$; $\eta \sum_{i=1}^K w_{t,i}^2(\hat{\ell}_{t,i}-m_{t,i})^2=\eta w_{t,i_t}^2(\hat{\ell}_{t,i_t}-m_{t,i_t})^2\leq \frac{9}{162}=\frac{1}{18}$. Let $u=\left(1-\frac{1}{T}\right)\mathbf{e}_{i^*} + \frac{1}{T}w_1^\p$, which guarantees $\frac{w_{1,i}^\p}{u_i}\leq T$. By Theorem~\ref{lemma:second_order_regret_bound} and some rearrangement, we have
\begin{align*}
\sum_{t=1}^T \inn{w_t-\mathbf{e}_{i^*}, \hat{\ell}_t}\leq \frac{K\ln T}{\eta}  +3\eta\sum_{t=1}^T\sum_{i=1}^K  w_{t,i}^2(\hat{\ell}_{t,i}-m_{t,i})^2-\sum_{t=1}^T A_t+B, 
\end{align*}
where $B\triangleq \frac{1}{T}\sum_{t=1}^T \inn{-\mathbf{e}_{i^*}+w_1^\p, \hat{\ell}_t}$. To get the stated bound, just note that $\mathbb{E}[B]=\mathcal{O}(1)$,  and replace $\sum_{t=1}^T\sum_{i=1}^K w_{t,i}^2(\hat{\ell}_{t,i}-m_{t,i})^2$ by the upper bound at \eqref{eqn:path_length_calculation_1} and $A_t$ by the lower bound in Theorem~\ref{lemma:second_order_regret_bound}.
\end{proof}

\section{Omitted Details in Section~\ref{subsection:games}}
\label{appendix:game}
Although the generalization to multi-player games is straightforward, for simplicity we only consider two-player zero-sum games.

We first describe the protocol of the game. The game is defined by an unknown matrix $G\in[-1,1]^{M\times N}$
where entry $G(i,j)$ specifies the loss (or reward) for Player 1 (or Player 2) if Player 1 picks row $i$ while Player 2 picks column $j$.
The players play the game repeatedly for $T$ rounds.
At round $t$, Player 1 randomly picks a row $i_t \sim x_t$ for some $x_t \in \Delta_M$
while Player 2 randomly picks a column $j_t \sim y_t$ for some $y_t \in \Delta_N$.
In~\citep{syrgkanis2015fast}, the feedbacks they receive are the vectors $Gy_t$ and $x_t^\top G$ respectively.
As a natural extension to the bandit setting, we consider a setting where the feedbacks are the scalar values $\mathbf{e}_{i_t}^\top Gy_t$
and $x_t^\top G\mathbf{e}_{j_t}$ respectively, that is, the expected loss/reward for the players' own realized actions (over the opponent's randomness). 

It is clear that each player is essentially facing an MAB problem and thus can employ an MAB algorithm.
Specifically, if both players apply Exp3 for example, their expected average strategies converge to a Nash equilibrium at rate $1/\sqrt{T}$.
However, if instead Player 1 applies \textsc{Broad-OMD} configured as in Corollary~\ref{cor:path_length_bound_1},
then her regret has a path-length term that can be bounded as follows:
\begin{align*}
\sum_{i=1}^K \sum_{t=2}^T\left| \mathbf{e}_{i}^\top Gy_t -  \mathbf{e}_{i}^\top Gy_{t-1}\right|
\leq \sum_{i=1}^K \sum_{t=2}^T\left\| \mathbf{e}_{i}^\top G \right\|_\infty \|y_t - y_{t-1}\|_1 \leq K \sum_{t=2}^T \|y_t - y_{t-1}\|_1,
\end{align*}
which is closely related to the negative regret term in Corollary~\ref{cor:path_length_bound_1}
for Player 2 if she also employs the same \textsc{Broad-OMD}.
The cancellation of these terms then lead to faster convergence rate.

\begin{theorem}
\label{thm:fast_convergence_theorem}
For the setting described above, if both players run \textsc{Broad-OMD} configured as in Corollary~\ref{cor:path_length_bound_1} except that $\eta_{t,i}=\eta= (M+N)^{-\frac{1}{4}}T^{-\frac{1}{4}}$, then their expected average strategies converge to Nash equilibriums at the rate of $\tilde{\mathcal{O}}\left((M+N)^{\frac{5}{4}}/T^{\frac{3}{4}}\right)$, that is,
\begin{align*}
\max_{y\in \Delta_N} \mathbb{E}[\bar{x}]^\top Gy \leq \text{\rm Val} + \tilde{\mathcal{O}}((M+N)^{\frac{5}{4}}/T^{\frac{3}{4}}) \quad\text{and}\quad
\min_{x\in \Delta_M}x^\top G\mathbb{E}[\bar{y}] \geq \text{\rm Val} - \tilde{\mathcal{O}}((M+N)^{\frac{5}{4}}/T^{\frac{3}{4}}),
\end{align*}
where $\bar{x}=\frac{1}{T}\sum_{t=1}^T x_t, \bar{y}=\frac{1}{T}\sum_{t=1}^T y_t$ and 
$\text{\rm Val}= \min\limits_{x\in \Delta_M}\max\limits_{y\in \Delta_N} x^\top Gy = \max\limits_{y\in \Delta_N}\min\limits_{x\in \Delta_M} x^\top Gy$.
\end{theorem}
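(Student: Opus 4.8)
The plan is to reduce the claim to the two individual regret bounds of Corollary~\ref{cor:path_length_bound_1} and to exploit the cancellation between each player's path-length penalty and the opponent's negative stability term. First I would observe that each player is running \textsc{Broad-OMD} on an MAB instance: Player~1 faces the loss vectors $\ell_t^{(1)}=Gy_t\in[-1,1]^M$ and, by sampling $i_t\sim x_t$, observes exactly the scalar $\mathbf{e}_{i_t}^\top Gy_t=\ell^{(1)}_{t,i_t}$ needed to form both the estimator and the prediction $m_{t,i}=\ell^{(1)}_{\alpha_i(t),i}$; symmetrically Player~2 faces $\ell_t^{(2)}=-G^\top x_t\in[-1,1]^N$. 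Since $\ell_t^{(1)}$ and $\ell_t^{(2)}$ are measurable with respect to the history $\mathcal{F}_{t-1}$ up to round $t-1$, the estimators are conditionally unbiased, so the expectation arguments behind Corollary~\ref{cor:path_length_bound_1} go through via the tower property, yielding bounds on $\mathbb{E}[\reg_T^{(1)}]$ and $\mathbb{E}[\reg_T^{(2)}]$ that retain both the positive path-length term $6\eta\sum_i V_{T,i}$ and the negative term $-\frac{1}{48\eta}\sum_{t\ge 2}\sum_i (w_{t,i}-w_{t-1,i})^2/w_{t-1,i}^2$.

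Next I would make the two terms meet. Writing $d_t^x=\norm{x_t-x_{t-1}}_1$ and $d_t^y=\norm{y_t-y_{t-1}}_1$, the path-length in Player~1's bound satisfies $6\eta\sum_i V_{T,i}^{(1)}\le 6\eta M\sum_t d_t^y$ by the computation preceding the theorem, while Cauchy--Schwarz (using $\sum_j y_{t-1,j}^2\le 1$) lower-bounds Player~2's negative term, giving a contribution at most $-\frac{1}{48\eta}\sum_t (d_t^y)^2$. Adding the two regrets therefore produces, for the $y$-dependent part, $\sum_t\bigl(6\eta M d_t^y-\frac{1}{48\eta}(d_t^y)^2\bigr)$, and symmetrically an $x$-dependent part $\sum_t\bigl(6\eta N d_t^x-\frac{1}{48\eta}(d_t^x)^2\bigr)$. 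Maximizing each summand over the scalar $d\ge 0$ gives $6\eta M d-\frac{1}{48\eta}d^2\le 432\eta^3 M^2$, so the whole coupled contribution is at most $432\eta^3(M^2+N^2)T$.

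Collecting everything yields $\mathbb{E}[\reg_T^{(1)}+\reg_T^{(2)}]=\mathcal{O}\bigl(\frac{(M+N)\ln T}{\eta}+\eta^3(M^2+N^2)T\bigr)$; plugging in $\eta=(M+N)^{-1/4}T^{-1/4}$ and using $M^2+N^2\le(M+N)^2$ makes both terms $\tilde{\mathcal{O}}((M+N)^{5/4}T^{1/4})$. Finally I would invoke the standard no-regret-to-equilibrium reduction: since $\mathbb{E}[\mathbf{e}_{i_t}^\top Gy_t\mid\mathcal{F}_{t-1}]=x_t^\top Gy_t=\mathbb{E}[x_t^\top G\mathbf{e}_{j_t}\mid\mathcal{F}_{t-1}]$, the realized-payoff terms cancel in expectation and $\mathbb{E}[\reg_T^{(1)}+\reg_T^{(2)}]=\mathbb{E}[\max_y\sum_t x_t^\top Gy-\min_x\sum_t x^\top Gy_t]=T\,\mathbb{E}[\max_y\bar{x}^\top Gy-\min_x x^\top G\bar{y}]$. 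Bounding each of the two nonnegative gaps $\max_y\bar{x}^\top Gy-\text{Val}$ and $\text{Val}-\min_x x^\top G\bar{y}$ by this quantity, and pushing the expectation inside the convex $\max$ (resp.\ concave $\min$) via Jensen, gives the two displayed inequalities with error $\tilde{\mathcal{O}}((M+N)^{5/4}/T^{3/4})$.

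The main obstacle I expect is the coupling and measurability bookkeeping in the first step --- justifying that Corollary~\ref{cor:path_length_bound_1}, proved for an oblivious adversary, still applies when the loss sequence each player sees is generated adaptively by the opponent --- together with calibrating the constants in the per-round optimization so that the positive path-length of one player is genuinely dominated by the negative stability term of the other after the Cauchy--Schwarz step.
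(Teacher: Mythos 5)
Your proposal is correct and follows essentially the same route as the paper: sum the two regret bounds from Corollary~\ref{cor:path_length_bound_1}, cancel each player's path-length term against the opponent's negative stability term to get a per-round contribution of $432\eta^3(M^2+N^2)$, tune $\eta=(M+N)^{-1/4}T^{-1/4}$, and convert the summed regrets into a duality-gap bound. The only (immaterial) difference is that you aggregate the cancellation via Cauchy--Schwarz and a single scalar optimization, whereas the paper applies $a-b\le a^2/(4b)$ coordinate-wise and sums against $\sum_i y_{t-1,i}^2\le 1$; both yield the identical constant.
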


\begin{proof}
As mentioned, Player 1's $V_{T,i}$ is 
\begin{align*}
\sum_{t=1}^T \abs{\ell_{t,i}-\ell_{t-1,i}}=&\sum_{t=1}^T \abs{\mathbf{e}_i^\top Gy_t-\mathbf{e}_i^\top Gy_{t-1}}\leq \sum_{t=1}^T \norm{\mathbf{e}_i^\top G}_\infty \norm{y_t-y_{t-1}}_1\leq \sum_{t=1}^T \norm{y_t-y_{t-1}}_1
\end{align*}
due to the assumption $|G(i,j)|\leq 1$. Therefore, by Corollary \ref{cor:path_length_bound_1}, Player 1's (pseudo) regret is
\begin{align*}
&\max_{x \in \Delta_M} \mathbbm{E}\left[\sum_{t=1}^T x_t^\top G y_t - \sum_{t=1}^T x^\top G y_t \right] \\
&\leq \mathcal{O}\left(\frac{M\ln T}{\eta}\right) + \mathbb{E}\left[6\eta M \sum_{t=1}^T \norm{y_t-y_{t-1}}_1 -\frac{1}{48\eta}\sum_{t=2}^T \sum_{i=1}^M \frac{(x_{t,i}-x_{t-1,i})^2}{x_{t-1,i}^2} \right],%\label{eqn:game_regret_1}
\end{align*}
while Player 2's (pseudo) regret is
\begin{align*}
&\max_{y \in \Delta_N} \mathbbm{E}\left[\sum_{t=1}^T x_T^\top G y - \sum_{t=1}^T x_t^\top G y_t \right] \\
&\leq \mathcal{O}\left(\frac{N\ln T}{\eta}\right) + \mathbb{E}\left[6\eta N \sum_{t=1}^T \norm{x_t-x_{t-1}}_1 -\frac{1}{48\eta}\sum_{t=2}^T \sum_{i=1}^N \frac{(y_{t,i}-y_{t-1,i})^2}{y_{t-1,i}^2} \right].%\label{eqn:game_regret_2}
\end{align*}
%Picking $\eta=\Theta\left( T^{-\frac{1}{3}}(\ln T)^{\frac{1}{3}} \right)$, we can bound the right-hand side of \eqref{eqn:game_regret_1} by $\mathcal{O}(M(\ln T)^{\frac{2}{3}}T^{-\frac{2}{3}})=\tilde{\mathcal{O}}(MT^{-\frac{2}{3}})$. With a similar analysis, we have 
Summing up the above two bounds, and using the following fact (by the inequality $a - b \leq \frac{a^2}{4b}$):
\begin{align*}
\sum_{i=1}^N \left(6\eta M \abs{y_{t,i}-y_{t-1,i}}- \frac{(y_{t,i}-y_{t-1,i})^2}{48\eta y_{t-1,i}^2}\right)\leq 432\eta^3 M^2 \sum_{i=1}^N y_{t-1,i}^2 \leq 432\eta^3 M^2, 
\end{align*}
we get 
\begin{align*}
\max_{y \in \Delta_N}\mathbb{E}[\bar{x}]^\top Gy - \min_{x \in \Delta_M} x^\top G \mathbb{E}[\bar{y}] 
 =\mathcal{O}\left(\frac{(M+N)\ln T}{T\eta} + \eta^3 (M^2+N^2) \right).
\end{align*}
With $\eta=\tilde{\Theta}\left( (M+N)^{-\frac{1}{4}}T^{-\frac{1}{4}}\right)$ the above bound becomes $\tilde{\mathcal{O}}\left((M+N)^{\frac{5}{4}}T^{-\frac{3}{4}}\right)$.
Rearranging then gives
%Note that $\mathbb{E}\left[ \frac{1}{T}\left(\text{Reg}_T^1+\text{Reg}_T^2\right) \right]=\mathbb{E}\left[ \max_{y}\bar{x}^\top Gy -\min_{x}x^\top G\bar{y} \right]$. Using 
%By von Neumann’s minimax theorem, we then have
%\begin{align*}
%\max_{y\in \Delta_N}\min_{x\in \Delta_M}x^\top Gy= \min_{x\in \Delta_M}\max_{y\in \Delta_N} x^\top Gy 
%&\leq \max_{y \in \Delta_N}\mathbb{E}[\bar{x}]^\top Gy \leq \min_{x \in \Delta_M} x^\top G \mathbb{E}[\bar{y}]  + \tilde{\mathcal{O}}((M+N)^{\frac{5}{4}}T^{-\frac{3}{4}}) \\
%&\leq \min_{x\in \Delta_M}\max_{y\in \Delta_N} x^\top Gy + \tilde{\mathcal{O}}((M+N)^{\frac{5}{4}}T^{-\frac{3}{4}}).
%\end{align*}
%This implies that in expectation the two players' average strategies $(\bar{x},\bar{y})$ converges to nearly optimal ones in the rate of $\tilde{\mathcal{O}}((M+N)^{\frac{5}{4}}T^{-\frac{3}{4}})$, because 
\begin{align*}
\max_{y\in\Delta_N} \mathbb{E}[\bar{x}]^\top Gy 
&\leq \min_{x \in \Delta_M} x^\top G \mathbb{E}[\bar{y}]  + \tilde{\mathcal{O}}((M+N)^{\frac{5}{4}}T^{-\frac{3}{4}}), \\
&\leq \min_{x\in \Delta_M}\max_{y\in \Delta_N} x^\top Gy + \tilde{\mathcal{O}}((M+N)^{\frac{5}{4}}T^{-\frac{3}{4}})
= \text{\rm Val} + \tilde{\mathcal{O}}((M+N)^{\frac{5}{4}}T^{-\frac{3}{4}}), 
\end{align*}
and similarly
\begin{align*}
\min_{x\in \Delta_M} x^\top G\mathbb{E}[\bar{y}]
&\geq \max_{y\in\Delta_N} \mathbb{E}[\bar{x}]^\top Gy - \tilde{\mathcal{O}}((M+N)^{\frac{5}{4}}T^{-\frac{3}{4}}) \\
&\geq \max_{y\in \Delta_N}\min_{x\in \Delta_M}x^\top Gy - \tilde{\mathcal{O}}((M+N)^{\frac{5}{4}}T^{-\frac{3}{4}})
= \text{\rm Val} - \tilde{\mathcal{O}}((M+N)^{\frac{5}{4}}T^{-\frac{3}{4}}), 
\end{align*}
completing the proof.
\end{proof}

%\begin{proof}{\textbf{of Theorem~\ref{theorem:better_for_stable}}.}
%Similar to the proof of Theorem~\ref{thm:fast_convergence_theorem}, we have by Corollary \ref{cor:path_length_bound_1}, 
%\begin{align*}
%\mathbbm{E}\left[ \text{Reg}_T^1 \right] &=\mathcal{O}\left(\frac{M\ln T}{\eta} + \eta M \mathbb{E}\left[\sum_{t=1}^T \norm{y_t-y_{t-1}}_1\right]  \right).
%\end{align*}
%By our assumption, the bounded is further bounded by $\tilde{\mathcal{O}}\left( \frac{M}{\eta}+\eta MT\kappa \right)$. Selecting the optimal $\eta$ we get the desired bound. 
%\end{proof}

As shown by the theorem, we obtain convergence rate faster than $1/\sqrt{T}$,
but still slower than the $1/T$ rate compared to the full-information setup of~\citep{rakhlin2013optimization, syrgkanis2015fast},
due to the fact that we only have first-order instead of second-order path-length bound.

Note that~\citet{rakhlin2013optimization} also studies two-player zero-sum games with bandit feedback
but with an unnatural restriction that in each round the players play the same strategy for four times.
\citet{foster2016learning} greatly weakened the restriction, but their algorithm only converges to some approximation of Val.
For further comparisons, the readers are referred to the comparisons to~\citep{syrgkanis2015fast}
in \citep{foster2016learning}.
We also point out that the question raised in \citep{rakhlin2013optimization} remains open: if the players only receive the realized loss/reward $\mathbf{e}_{i_t}^\top G\mathbf{e}_{j_t}$ as feedback (a more natural setup), can the convergence rate to Val be faster than $1/\sqrt{T}$?

\section{Proof of Theorem \ref{thm:best of both}}
%\begin{lemma}
%\label{lemma:same_point}
%Consider the MAB problem ($\Omega=\Delta_K$). In the update rule~\eqref{eqn:update_rule_1}, if $m_t=r\mathbf{1}$ for some $r\in [-1,1]$, then $w_t=w_t^\p$. 
%\end{lemma}
%\begin{proof}
%Because $\psi_t$ is strictly convex in $\Delta_K$, $D_{\psi_t}(w, w_t^\p)>0$ for all $w\in \Delta_K$ and $w\neq w_t^\p$. Now if $w_t\neq w_t^\p$, then we have $D_{\psi_t}(w_t, w_t^\p)>0$. On the other hand, by $w_t$'s optimality, we have $\inn{w_t, r\mathbf{1}}+D_{\psi_t}(w_t,w_t^\p)\leq \inn{w_t^\p, r\mathbf{1}}+D_{\psi_t}(w_t^\p,w_t^\p)=\inn{w_t^\p, r\mathbf{1}}$. Therefore, $D_{\psi_t}(w_t,w_t^\p)\leq \inn{w_t^\p-w_t, r\mathbf{1}}=0$ because $\inn{w_t, \mathbf{1}}=\inn{w_t^\p, \mathbf{1}}=1$. This contradicts to $D_{\psi_t}(w_t,w_t^\p)>0$. Hence, $w_t=w_t^\p$.
%\end{proof}

\begin{proof}{\textbf{of Theorem \ref{thm:best of both}}.}
We first verify conditions (ii) and (iii) in Theorem~\ref{thm:doubling_trick_theorem} hold for $\hat{\ell}_{t,i}=\frac{\ell_{t,i}\mathbbm{1}\{i_t=i\}}{w_{t,i}}$ and $m_{t,i}=\ell_{t,i_t}$. Indeed, condition (ii) holds since $w_{t,i}\abs{\hat{\ell}_{t,i}-m_{t,i}}=\abs{\ell_{t,i}\mathbbm{1}\{i_t=i\}-w_{t,i}\ell_{t,i_t}}\leq 2<3$.
Other the other hand, condition (iii) also holds because
\begin{align*}
\eta\sum_{i=1}^K w_{t,i}^2(\hat{\ell}_{t,i}-m_{t,i})^2
&=\eta\sum_{i=1}^K  (\ell_{t,i}\mathbbm{1}\{i_t=i\}-w_{t,i}\ell_{t,i_t})^2\\
&=\eta\sum_{i=1}^K (\ell_{t,i}^2\mathbbm{1}\{i_t=i\}-2\ell_{t,i}w_{t,i}\ell_{t,i_t}\mathbbm{1}\{i_t=i\}+w_{t,i}^2\ell_{t,i_t}^2)\\
&\leq \frac{1}{162}\left(\ell_{t,i_t}^2-2w_{t,i_t}\ell_{t,i_t}^2 + \left(\sum_{i=1}^K w_{t,i}^2\right)\ell_{t,i_t}^2\right)\\
&\leq \frac{1}{162}\left(1+0+1\right) < \frac{1}{18}. 
\end{align*}
Thus, %by Theorem \ref{lemma:second_order_regret_bound}, we have the bound 
%\begin{align}
%\sum_{t=1}^T \inn{w_t-\mathbf{e}_{i^*}, \hat{\ell}_t}\leq\frac{K\ln T}{\eta} +3\eta\sum_{t=1}^T\sum_{i=1}^K w_{t,i}^2(\hat{\ell}_{t,i}-\ell_{t,i_t})^2 + B,  \label{eqn:squint_like}
%\end{align}
%where $B\triangleq \frac{1}{T}\sum_{t=1}^T \inn{-\mathbf{e}_{i^*}+w_1^\p, \hat{\ell}_t}$, 
%and 
by Theorem~\ref{thm:doubling_trick_theorem}, we have
\begin{align}
\mathbb{E}\left[\sum_{t=1}^T \ell_{t,i_t}-\ell_{t,i^*}\right]=\mathcal{O}\left( \sqrt{(K\ln T) \mathbb{E}\left[\sum_{t=1}^T \sum_{i=1}^K w_{t,i}^2(\hat{\ell}_{t,i}-\ell_{t,i_t})^2\right]} + K\ln T \right). \label{eqn:squint_like_further}
\end{align}

Now we consider the stochastic setting. In this case, we further take expectations over $\ell_1, \ldots, \ell_T$ on both sides of \eqref{eqn:squint_like_further}. The left-hand side of \eqref{eqn:squint_like_further} can be lower bounded by
\begin{align}
\mathbb{E}\left[\sum_{t=1}^T \ell_{t,i_t}-\ell_{t,i^*}\right]
&=\mathbb{E}\left[\sum_{t=1}^T \ell_{t,i_t}-\min_{j}\sum_{t=1}^T\ell_{t,j}\right]
\geq \mathbb{E}\left[\sum_{t=1}^T \ell_{t,i_t}-\sum_{t=1}^T\ell_{t,a^*}\right] \nonumber\\
&=\mathbb{E}\left[\sum_{t=1}^T \sum_{i=1}^K w_{t,i}(\ell_{t,i}-\ell_{t,a^*})\right]
\geq \mathbb{E}\left[\sum_{t=1}^T \sum_{i\neq a^*} w_{t,i}\Delta\right]=\Delta\mathbb{E}\left[ \sum_{t=1}^T (1-w_{t,a^*}) \right]. \label{eqn:best_one_direction}
\end{align}
On the other hand, 
\begin{align}
&\mathbb{E}_{i_t\sim w_t}\left[\sum_{i=1}^Kw_{t,i}^2(\hat{\ell}_{t,i}-\ell_{t,i_t})^2\right]
=\mathbb{E}_{i_t\sim w_t}\left[\sum_{i=1}^Kw_{t,i}^2\left(\frac{\ell_{t,i}\mathbbm{1}\{i_t=i\}}{w_{t,i}}-\ell_{t,i_t}\right)^2\right] \nonumber \\
&=\mathbb{E}_{i_t\sim w_t}\left[\sum_{i=1}^K\left(\ell_{t,i}\mathbbm{1}\{i_t=i\}-w_{t,i}\ell_{t,i_t}\right)^2\right] \nonumber \\
&=\sum_{i=1}^K \left( w_{t,i}\left( \ell_{t,i}-w_{t,i}\ell_{t,i} \right)^2 +\sum_{j\neq i} w_{t,j}(w_{t,i}\ell_{t,j})^2 \right) \nonumber \\
&\leq \sum_{i=1}^K \left( w_{t,i}\left( 1-w_{t,i}\right)^2 +\sum_{j\neq i} w_{t,j}w_{t,i}^2 \right)=\sum_{i=1}^K w_{t,i}(1-w_{t,i}) \nonumber \\
&\leq (1-w_{t,a^*}) + \sum_{i\neq a^*} w_{t,i} = 2(1-w_{t,a^*}). \label{eqn:regret_expansion_best_both}
\end{align}
Therefore, the first term on the right-hand side of \eqref{eqn:squint_like_further} can be upper bounded by
\begin{align}
\sqrt{(K\ln T)\mathbb{E}\left[\sum_{t=1}^T \sum_{i=1}^K w_{t,i}^2(\hat{\ell}_{t,i}-\ell_{t,i_t})^2\right]}\leq \sqrt{(K\ln T) \mathbb{E}\left[\sum_{t=1}^T 2(1-w_{t,a^*}) \right]}. \label{eqn:best_another_direction}
\end{align}
Let $H=\mathbb{E}\left[\sum_{t=1}^T (1-w_{t,a^*}) \right]$. Combining \eqref{eqn:best_one_direction}, \eqref{eqn:best_another_direction}, and \eqref{eqn:squint_like_further}, we have 
\begin{align*}
H\Delta \leq \mathcal{O}\left(\sqrt{(K\ln T)H} + K\ln T\right), 
\end{align*}
which implies $H=\mathcal{O}\left(\frac{K\ln T}{\Delta^2}\right)$. Therefore, the expected regret is upper bounded by 
\[\mathcal{O}\left( \sqrt{(K\ln T)H}+K\ln T \right) = \mathcal{O}\left(\frac{K\ln T}{\Delta}\right) .\]

For the adversarial setting, we continue from an intermediate step of \eqref{eqn:regret_expansion_best_both}: 
\begin{align*}
&\mathbb{E}_{i_t\sim w_t}\left[\sum_{i=1}^Kw_{t,i}^2(\hat{\ell}_{t,i}-\ell_{t,i_t})^2\right]
=\sum_{i=1}^K \left( w_{t,i}(1-w_{t,i})^2\ell_{t,i}^2 +\sum_{j\neq i} w_{t,j}w_{t,i}^2\ell_{t,j}^2 \right) \\
&\leq \sum_{i=1}^K w_{t,i}\ell_{t,i}^2 + \sum_{j=1}^K \sum_{i\neq j} w_{t,j}w_{t,i}^2\ell_{t,j}^2\leq \sum_{i=1}^K w_{t,i}\ell_{t,i}^2 + \sum_{j=1}^K w_{t,j}\ell_{t,j}^2 = 2\mathbb{E}_{i_t\sim w_t}\left[\ell_{t,i_t}^2\right]
\end{align*}
%Combining this with \eqref{eqn:squint_like_further}, we get 
%\begin{align}
%\mathbb{E}[\text{Reg}_T] = \mathcal{O}\left( \sqrt{KL\ln T}+K\ln T\right). \label{eqn:best_bound_general}
%\end{align}
Assuming $\ell_{t,i}\in [0,1]$, we thus have $\ell_{t,i_t}^2\leq \ell_{t,i_t}$ and  
\begin{align*}
\mathbb{E}\left[ \sum_{t=1}^T \ell_{t,i_t} \right] - \sum_{t=1}^T \ell_{t,i^*} = \mathcal{O}\left( \sqrt{(K\ln T)\mathbb{E}\left[ \sum_{t=1}^T \ell_{t,i_t} \right]} + K\ln T\right). 
\end{align*}
Solving for $\sqrt{\mathbb{E}\left[ \sum_{t=1}^T\ell_{t,i_t} \right]}$ and rearranging then give
\begin{align*}
\mathbb{E}\left[ \sum_{t=1}^T \ell_{t,i_t} \right] - \sum_{t=1}^T \ell_{t,i^*} = \mathcal{O}\left( \sqrt{(K\ln T)\sum_{t=1}^T \ell_{t,i^*} } + K\ln T\right)=\mathcal{O}\left(\sqrt{KL_{T,i^*}\ln T}+K\ln T\right). 
\end{align*}
\end{proof}

\end{document}